\newcommand{\commentout}[1]{}
\newcommand{\ttab}{\hspace*{.2in}}
\newcommand{\liff}{\leftrightarrow}
\newcommand{\nd}{\wedge}
\newcommand{\bt}{{\cal T}}
\newcommand{\Si}{\Sigma}
\newcommand{\si}{\sigma}
\newcommand{\D}{\Delta}
\newcommand{\DD}{{\cal{D}}}
\newcommand{\LL}{{\cal L}}
\newcommand{\F}{{\cal F}}
\newcommand{\A}{{\cal A}}
\newcommand{\Obj}{{\cal O}}
\newcommand{\Stat}{{\cal P}}
\def\M{{\mathcal{M}}}
\def\sig{{\tt sig}\,}
\def\forget{{\tt forget}\,}
\def\Cons{{\tt Cons}\,}
\newtheorem{teo}{Theorem}
\newtheorem{prop}[teo]{Proposition}
\newtheorem{cor}[teo]{Corollary}
\newtheorem{lem}[teo]{Lemma}
\numberwithin{teo}{section} 
\newtheorem{fact}[teo]{Fact}
\newtheorem{de}[teo]{Definition}
\begin{document}

\pdfinfo{
/Title (Progression of Decomposed Local-Effect Action Theories)
/Subject (Short version appeared in the Proceedings of the AAAI-2013 conference)
/Author (Denis Ponomaryov http://persons.iis.nsk.su/en/person/ponom and Mikhail Soutchanski http://www.scs.ryerson.ca/mes/)
}


\title{Progression of Decomposed Local-Effect Action Theories}
\author{Denis Ponomaryov\inst{1}, Mikhail Soutchanski\inst{2}}
\institute{Institute of Artificial Intelligence, University of Ulm, Germany\\ A.P. Ershov Institute of Informatics Systems, Novosibirsk State University, Russia \and Department of Computer Science, Ryerson University, Toronto, Canada\\ \email{ponom@iis.nsk.su, mes@scs.ryerson.ca}
}
\date{}


%
%



\maketitle

\begin{abstract}
In many tasks related to reasoning about consequences of a logical
theory, it is desirable to decompose the theory into a number of
weakly-related or independent components. However, a theory may
represent knowledge that is subject to change, as a result of executing
actions that have effects on some of the initial properties mentioned in the
theory. Having once computed a decomposition of a theory, it is advantageous to know whether a decomposition has to be computed
again in the newly-changed theory (obtained from taking into account changes
resulting from execution of an action). In the paper, we address
this problem in the scope of the situation calculus, where a
change of an initial theory is related to the notion of
progression. Progression provides a form of forward reasoning; it relies on forgetting values of those properties, which are subject to change, and computing new values for them. We consider decomposability and inseparability, two component properties known from the literature, and contribute by 1) studying the conditions when these properties are preserved and 2) when they are lost wrt progression and the related operation of forgetting. To show the latter, we demonstrate the boundaries using a number of negative examples. To show the former, we identify cases when these properties are preserved under forgetting and progression of initial theories in local-effect basic action theories of the situation calculus. Our paper contributes to bridging two different communities in Knowledge Representation, namely research on modularity and research on reasoning about actions.
\end{abstract}

\section{Introduction}
Modularity of theories has been established as an important research topic
in knowledge representation. It includes both theoretical and
practical aspects of modularity of theories formulated in different logical
languages  ($\LL$), ranging from weak (but practical) description logics
(DLs) such as $\cal EL$ and DL-Lite to more expressive logics 
\cite{ModularTheoryAndPractice,GruningerApplOntol2012,DecompOnt,FormalPropMod,DecompLogics,Vescovo}, to cite a few.
Surprisingly, this research topic is little explored in the context of
reasoning about actions. 
More specifically, it is natural to decompose a large heterogeneous theory
covering several loosely-coupled application domains into components
that have little or no intersection in terms of signatures. Potentially,
such decomposition can facilitate solving the projection problem, which requires
answering whether a given logical formula is true after executing a sequence
of actions (events). In cases, when a query is a logical formula composed from
symbols occurring in only one of the components, the query can be answered more
easily than in the case when the whole theory is required. This decomposition can
help in solving other reasoning problems (e.g. planning or high-level
program execution) that require a solution to the projection problem as a
prerequisite. To the best of our knowledge, the only previous work that explored
decomposition of logical theories for the purposes of solving the projection problem are the papers 
\cite{EyalAmirAAAI2000,EyalAmirKR2002}. These papers investigate decomposition
in the situation calculus \cite{MH69,Reiter2001}, a well-known logical formalism for
representation of actions and their effects. The author proposed reasoning
procedures for a situation calculus theory by dividing the whole theory syntactically into weakly-related partitions. Specifically, he developed algorithms that use local computation inside syntactically-identified partitions and message passing between partitions. 
We take a different approach in our paper: instead of decomposing the whole action theory into subsets, as in \cite{EyalAmirAAAI2000,EyalAmirKR2002}, 
we consider signature decompositions of an initial theory only.
Our components are not necessarily syntactic subsets of the initial theory. 
We concentrate on foundations, and explore properties of components produced 
by our decomposition. Whenever possible, we try to formulate these properties 
in a general logical language $\LL$, that is a fragment of second order logic; however, when necessary, we focus on a specific logic.

This paper considers the decomposability and inseparability properties of 
logical theories. These properties are well known in research on
modularization in the area of knowledge representation
\cite{DecompOnt,DecompLogics,FormalPropMod,ConsExtEL}, but have not been studied previously in the scope of the situation calculus. Both
properties are concerned with subdividing theories into components
to facilitate reasoning. Informally, decomposability of a theory
means that the theory can be equivalently represented as a union of two
(or several) theories sharing a given set ($\D$) of 
signature symbols. Inseparability of theories wrt some
signature $\D$ means that the theories have the same set of
logical consequences in the signature $\D$. If a theory ($\bt$) is
$\D$--decomposable into $\D$--inseparable components, then (under
certain restrictions on the underlying logic) each component of the
decomposition contains all information from $\bt$ in 
its own signature. This is an ideal case of
decomposition, since in this case the problem of entailment from
$\bt$ can be reduced to entailment from components, which are
potentially smaller than the theory $\bt$.

In the area of reasoning about actions, an initial logical theory
represents knowledge that is subject to change due to the effects of
actions on some of the properties mentioned in the theory. It can
be updated with new information caused by actions, while some other knowledge should
be forgotten, as it is no longer true in the next situation. We consider
two types of update operators: 1) forgetting in arbitrary theories and
2) progression of theories in the situation calculus. Forgetting is a
well-known operation on theories first introduced by Fangzhen Lin
and Ray Reiter in their seminal paper \cite{ForgetIt}. Forgetting
a signature $\si$ in a theory $\bt$ means obtaining a theory
indistinguishable from $\bt$ in the rest of the signature symbols
$\sig(\bt)\setminus\si$. In this sense, forgetting a signature is
close to the well-known notion of uniform interpolation.
Forgetting a ground atom $P(\bar{t})$ in a theory $\bt$ results in a
theory that implies all the consequences of $\bt$ ``modulo'' the
truth value of $P(\bar{t})$. The operation of forgetting is
closely related to progression in basic action theories in the
situation calculus.

The situation calculus \cite{Reiter2001} is a knowledge representation logical
formalism, which has been designed for axiomatization of problems in 
planning and high-level program execution. 
The idea is to 1) axiomatize a set of initial states (as an initial theory), 
2) axiomatize preconditions telling when actions can be performed, and then 3)
add the axioms about the effects of actions on situation-dependent properties. 
After these steps, one can reason about the consequences of sequences of actions to determine whether 
properties of interest hold in a given situation resulting from executing 
a sequence of actions  and whether a certain sequence of actions is consecutively executable. 
In the situation calculus, the so-called basic action theories represent such 
axiomatizations  \cite{Reiter2001}. Each basic action theory
contains an initial theory that represents incomplete knowledge about an 
initial situation $S_0$. In a special case, when there is complete knowledge about
a finite number of individuals having unique names, the initial theory
can be implemented as a relational database \cite{Reiter2001,DeGiacomoMancini2004}.
 Roughly, a basic action theory $\DD$ is a union of an
initial theory $\DD_{S_0}$ with some theory $\bt$, defining
transitions among situations, and a set of ``canonical''
axioms assumed to be true for all application problems represented in the
situation calculus. Informally speaking, an update of the initial theory after 
execution of an action is called ``progression of the initial theory wrt an action''.
More precisely, progression of $\DD_{S_0}$ wrt some
action $\alpha$ is a logical consequence of $\DD$ which contains
all information from $\DD$ about the situation resulting from the
execution of $\alpha$ in the situation $S_0$. Ideally, it is computed as updating
$\DD_{S_0}$ with some logical consequences of $\bt$, once all information in 
$\DD_{S_0}$, which is no longer true in the resulting situation, has been forgotten. 
Progression is
important for practical agents with indefinite horizon since progression is
the only feasible way of maintaining knowledge about the world. 
Exploiting modularity in the vast agent's knowledge is important to guarantee 
that progression of the agent's knowledge will be computationally feasible.

   Historically, the ``situation calculus''  (earlier referred to as ``situational logic'')
is the earliest logical framework developed  in the area of artificial intelligence (AI).
Having been developed in the 1960s by John McCarthy and his colleagues \cite{McC63,MH69,Green69}, 
it is one of the most popular logical frameworks for reasoning about actions;
it is presented in most well-known textbooks on AI. 
It is worth mentioning that
there are both conceptual and technical differences between the
situation calculus, designed for reasoning about
arbitrary actions, and the Floyd--Hoare logic, Dijkstra's
predicate transformers, and dynamic logic, and other related
formalisms, which have been developed for reasoning about the
correctness of computer programs. For example, the latter category of
formalisms would consider the operator assigning a new value to a
variable in a program as a primitive action, while the former
would consider as primitive the actions on higher level of
abstraction, such as moving a book from its current
location to the table. For this reason, the situation calculus is
chosen as foundation for high-level programming languages in
cognitive robotics \cite{CogRobKRHandbook}. In our paper, when we
refer to the ``situation calculus'', we are following the axiomatic
approach and notation proposed by R.Reiter \cite{Reiter2001} who
developed a general approach to axiomatizing direct effects and
non-effects of actions. It has been observed for a long time that
in practical applications, real-world actions have no effect on
most properties. However, it was Reiter who first proposed an
elegant axiomatization that represents compactly non-effects of
actions. Reiter's book covers several extensions of the situation
calculus to reasoning about concurrent actions, instantaneous
actions, processes extended in time, interaction between action
and knowledge, stochastic actions, as well as high-level
programming languages based on the situation calculus. In our
paper, we will focus on the cases of situation calculus when actions are sequential,
atemporal, and deterministic. Despite this focus, our
results can be subsequently adapted to characterize more general
classes of actions. The main limitation of our work is in
concentrating on direct effects only. Indirect effects of actions
are beyond the scope of the present study and will be considered in future work.

In this paper, we are interested in the case when the initial theory is decomposed into
inseparable components, studying which conditions guarantee preservation of 
decomposability and inseparability of components after forgetting or progression. We would like to avoid recomputing a decomposition of an updated
initial theory after executing an action. Moreover, we would
like to know whether the components remain inseparable after
progression. Such invariance of decomposability and inseparability wrt progression
is important since progression may continue indefinitely as long as new 
actions are being executed. If decomposability and inseparability are always preserved, 
then it would suffice to compute a
decomposition of the initial theory once -- this decomposition will remain ``stable"
after progression wrt any arbitrary sequence of actions. Additionally, if an executed action
has effects only on one component of the initial theory, then we would
like to be able to compute progression using only this part
instead of the whole initial theory. This leads to the question of
when the decomposability and inseparability properties are
preserved under progression and under forgetting. To answer this question we have
to better understand  the properties of these two operations. In our study, for brevity,
when we refer to ``decomposability'' and ``inseparability'' properties of components,
we will use the phrase {\it component properties}.

	This paper contributes to the general understanding of 
forgetting and progression in the literature, since new results on them are needed for 
the purposes of our investigation. Not surprisingly, both forgetting and progression 
have intricate interactions with properties of decomposed components. 
We will demonstrate that, in general, it is very difficult to guarantee the preservation of
decomposability and inseparability, because there is a certain
conceptual distance between these notions on one hand, and
forgetting and progression on the other -- we provide examples
witnessing this. Nevertheless, we will identify cases when these properties remain
invariant. Our results show that some of these cases have a
practically important formulation. An important contribution of the paper is in 
formulating clear negative examples that demonstrate cases when decomposability
and inseparability are lost under progression. Thus, the paper contributes to understanding the limits of the component approach based on these properties. In particular, our examples
demonstrate that there is little hope to preserve inseparability if the different components
share a fluent. Decomposability turns out to be also a fragile property that
can be easily lost after executing just one action in a simple basic action theory.
Overall, this paper contributes by advancing the study of forgetting and progression,
and also by carrying out a thorough and comprehensive study of when decomposability
and inseparability are preserved and when they are lost.

We start in Section 2 by introducing basic notations and then provide a survey on decomposability and inseparability, the two component properties of theories considered in this paper. Then in Section 3 we
introduce the basics of the situation calculus, proceeding to the
component properties of forgetting in Section
\ref{Section_Forgetting} and progression in Section
\ref{Section_Progression}. The last section, Section 5, includes a summary of
the obtained results. A preliminary shorter version of this paper (without proofs)
appeared in the proceedings of AAAI-13 conference \cite{PonomaryovSoutchanski2013}.
This extended version of our paper includes new results not mentioned
in the conference version as well as proofs\commentout{ In Section~\ref{Section_Forgetting},
the new results include propositions about cases when inseparability of
components is preserved under forgetting, and when it is lost. 
In Section~\ref{Section_Progression}, this version includes a new example for 
the case when decomposition components split, and when inseparability is lost 
after progression, as well as the new Theorem \ref{Teo_PreservationOfInseparability}. 
The Theorem \ref{Teo_PreserveCompLocalEffect} in this version 
was briefly mentioned in the conference paper as the Theorem 1, but without 
any discussion and without a proof.} and a detailed background
material and discussion of previously published results about forgetting and
progression, in order to make this paper self-contained.

\section{Background}\label{Sect_Background}
\subsection{Conventions and Notations}
Let $\LL$ be a logic (possibly many-sorted), which is a fragment (a set of sentences) of
second-order logic (either by syntax or by translation of
formulas), and has the standard model-theoretic Tarskian semantics.
We call the \textit{signature} a subset of non-logical symbols of
$\LL$ (and treat equality as a logical symbol). If $\M_1$ and $\M_2$ are two many--sorted structures and
$\D$ is a signature then we say that $\M_1$ and $\M_2$ \textit{agree} on
$\D$ if they have the same domains for each sort and the same
interpretation of every symbol from $\D$. If $\M$ is a structure
and $\si$ is a subset of predicate and function symbols from $\M$,
then we denote by $\M|_\si$ the \textit{reduct} of $\M$ to
$\si$, i.e., the structure with predicate and function names from
$\si$, where every symbol of $\si$ names the same entity as in
$\M$. The structure $\M$ is called \textit{expansion} of
$\M|_\si$. For a set of formulas $\bt$ in $\LL$, we denote by
$\sig(\bt)$ the signature of $\bt$, i.e. the set of all
non-logical symbols which occur in $\bt$. We will use the same
notation $\sig(\varphi)$ for the signature of a formula $\varphi$
in $\LL$. If $t$ is a term in the logic $\LL$
then the same notation $\sig(t)$ will be used for the set of all non-logical symbols occurring in $t$.
Throughout this paper, we use the notion of \textit{theory} as a
synonym for a set of formulas in $\LL$, which are sentences when
translated into second-order logic. Whenever we mention a set of
formulas, it is assumed that this set is in $\LL$, if the context
is not specified. For two theories, ${\bt}_1$ and ${\bt}_2$, the
notation ${\bt}_1\equiv{\bt}_2$ will be the abbreviation for
the semantic equivalence. If $\bt$ is a set
of formulas in $\LL$ and $\D$ is a signature, then $\Cons(\bt,\D)$
will denote the set of semantic consequences of $\bt$ (in $\LL$)
in the signature $\D$, i.e. $\Cons(\bt,\D)=\{\varphi\in\LL \ \mid \
\bt\models\varphi \ \text{and} \ \sig(\varphi)\subseteq\D\}$. We
emphasize that this is a notation for a set of formulas in $\LL$,
because $\bt$ may semantically entail formulas that are outside of $\LL$.

\subsection{Basic Facts about Decomposability and Inseparability}
In the area of theory modularization, a module (or component) is usually understood as a set of theory consequences that satisfy certain properties. The latter are determined by requirements to a module in the context of application. Some approaches follow the idea that a  module should be a syntactic subset of the axioms of a given theory. For instance, a theory can be partitioned into subsets of axioms meeting certain requirements of balance among the  partition. Thereafter, reasoning wrt the initial theory can be reduced to reasoning within the obtained components via a message passing algorithm, which communicates between the partitions to find information needed to answer a query \cite{AmirAIJ2005}. As a rule, the information to be communicated relates to the signatures shared between the partitions. The advantage of this approach is that the partitioning algorithm can be relatively simple and rely on syntactic analysis of theory axioms, thereby circumventing semantics. On the other hand, it may not be possible to eliminate some dependences between the partitions, if they are induced by syntactic form of the axioms. For instance, if a theory $\bt$ consists of the axioms $\{\forall x \ P(x), \ \ \forall x \  (P(x)\leftrightarrow Q(x))\}$, then it may not be possible to infer that it can be represented as the union of two components $\{\forall x P(x)\}$ and $\{\forall x Q(x)\}$, which do not share any signature symbols. In other words, a theory with syntactic dependencies may have an axiomatization that yields a partitioning into components, which either do not have symbols in common, or in a more  general case, share a fixed signature (given as a parameter of decomposition). 

In our paper, we adopt the following notion that was introduced in \cite{DecompLogics} and applied to the study of modularity in \cite{DecompOnt}.

\begin{de}[$\D$--decomposability property]\label{Def_Decomposable}
Let $\bt$ be a theory in $\LL$ and $\D\subseteq\sig(\bt)$ a
subsignature. We call $\bt$ \emph{$\D$--decomposable}, if there are
theories ${\bt}_1$ and ${\bt}_2$ in $\LL$ such that
\begin{itemize}
\item $\sig(\bt_1)\cap\sig(\bt_2)=\D$, but
$\sig(\bt_1)\neq\D\neq\sig(\bt_2)$;

\item $\sig(\bt_1)\cup\sig(\bt_2)=\sig(\bt)$;

\item ${\bt}\equiv{\bt}_1\cup{\bt}_2.$
\end{itemize}

The pair $\langle{\bt}_1, {\bt}_2\rangle$ is called
\emph{$\D$--decomposition} of $\bt$ and the theories ${\bt}_1$ and
${\bt}_2$ are called \emph{$\D$--decomposition components} of $\bt$. We
will sometimes omit the word ``decomposition" and call the sets
${\bt}_1$ and ${\bt}_2$ simply components of $\bt$, when the
signature $\D$ is clear from the context. The sets
$\sig(\bt_1)\setminus\D$ and $\sig(\bt_2)\setminus\D$ are called
signature ($\D$--decomposition) components of $\bt$.
\end{de}

The notion of $\D$--decomposition is defined using a pair of
theories, but it can be easily extended to the case of a family of
theories. It is important to realize that ${\bt}_1$ and ${\bt}_2$
are not necessarily subsets of axioms of $\bt$ in the above definition.
We only require that ${\bt}\equiv{\bt}_1\cup{\bt}_2$. Clearly, if
$\LL$ satisfies compactness and $\bt$ is a finite
$\D$--decomposable theory in $\LL$ for a signature $\D$, then
there is a $\D$--decomposition $\langle{\bt}_1, {\bt}_2\rangle$ of
$\bt$, where $\bt_1$ and $\bt_2$ are finite.

Note that the axioms of theory $\bt$ given before Definition \ref{Def_Decomposable} can not be (syntactically) partitioned into subsets having no signature symbols in common. However $\bt$ has a different axiomatization given by $\forall x P(x)$ and $\forall x Q(x)$ and hence, is $\varnothing$-decomposable. Thus, in general $\D$-decomposition can be finer than a syntactic partitioning based on a particular axiomatization of a theory. 

According to the definition of decomposability, computing a decomposition means finding another (equivalent) representation of a theory, which defines the required components. This means that a decomposition procedure must employ logical reasoning. Therefore, potentially it is more computationally complex than syntactic partitioning, which splits a theory into syntactic subsets of axioms. However, the research on algorithmic properties of decomposability (see e.g., \cite{ComplexityFOL,DisjointANDDecomposition,DecompOnt,DecompFOL,DecompLogics}) shows that deciding whether a theory is $\D$-decomposable turns out 
to be not harder than deciding the entailment in the underlying logic. Studying the complexity of decomposability in different logics is an ongoing research topic. 
An algorithm for computing decomposition components can be obtained, e.g., 
from a procedure of computing uniform interpolants (if the logic enjoys efficient uniform 
interpolation, see Proposition 2 in \cite{DecompLogics} and \cite{DecompFOL}), or 
by applying the technique of eliminating non-$\D$-symbols from the axioms of a theory. 
The technique is described in \cite{DecompOnt} for the logics $\cal{EL}$ and 
\textit{DL-Lite}, which is further studied in \cite{ConceptInterpolation} and can be extended to more expressive 
Description Logics. 
Using any form of equivalent rewriting means that the obtained axiomatization may be of a size larger than the set of axioms of the original theory. In particular, finding a decomposition may imply computing explicit definitions, in which case the size of the components depends on the complexity of such definitions in the underlying logic. For instance, a decomposition component may be of size exponentially larger than the original theory, which is evidenced by Example 28 in \cite{DecompOnt}. It is known that in general there is no upper bound on the complexity of explicit definitions in first-order logic \cite{Friedman,Mundici} and computing them is usually harder than entailment in FOL fragments (see e.g. \cite{BethDefinabilityDLs}). On the other hand, one can take control over the growth of the component sizes by carefully choosing which signature $\Delta$ can be shared between the components. Using Example 28 from \cite{DecompOnt}, for instance, it possible to describe a situation when tuning up $\Delta$ can exponentially reduce the component sizes. In general however, this question motivates research on the succinctness of explicit definitions and uniform interpolants in different logics.

\medskip

An important requirement often considered in the literature is that a module must contain all information about a signature of interest $\D$, which is typically a subset of the signature of the module. 
In other words, it is required that a module must entail the same consequences in signature $\D$, as the source theory. Having fixed a signature $\D$, the ability to see differences between two theories strongly depends on the logic being used as ``lens'' for their examination: the more expressive power the logic employed has, the more differences it is possible to see. Probably the most powerful tool in measuring similarity of theories is the language of second-order logic. If two theories have the same sets of second-order consequences in a signature $\D$, then the classes of reducts of their models onto $\D$ coincide, i.e. both theories ``define'' the same semantics for $\D$-symbols. Keeping in mind that a module is usually understood as a set of consequences of a source theory, it is important to note the following model-theoretic fact, which will be helpful for grasping the results of this paper. It says that if a logic $\LL$ is weaker than second-order, then in general, a set of consequences of a theory $\bt$ in $\LL$ may not be able to capture the intended semantics of symbols from a subsignature $\D$, as defined by $\bt$. 

\begin{fact}\label{Fact_ConsNoModelExpansion}
If $\bt$ is a theory in $\LL$ and $\D$ a signature, then some
models of $\Cons(\bt,\D)$ may not have an expansion to a model of
$\bt$.

Indeed, let $\LL$ be first-order logic and $\{P,f\}$ be a signature, where $P$ 
is a unary predicate and $f$ is a unary function. Let $\bt$ be a theory saying 
that $f$ is a bijection between the interpretation of $P$ and its complement. 
Thus, $\bt$ axiomatizes the class of models, where the interpretation of $P$ 
and its complement are of the same cardinality. Let $\M$ be a model from this class 
and let $\cal N$ be a model of the same signature $\{P\}$ in which the interpretation 
of $P$ is a countable set, but the complement is uncountable. The models $\M$ 
and $\cal N$ are elementary equivalent, i.e., no formula in signature $\{P\}$ can 
distinguish between these two models. For instance, this can be shown by using the fact  (e.g., see \cite{Ershov1980})
that every sentence in signature $\{P\}$ is equivalent to a boolean combination 
of formulas $\exists^ {\geqslant m} P$ and $\exists^ {\geqslant m} \neg P$,
where an integer $m > 0$, which mean ``$P$ (respectively, $\neg P$) holds on 
at least $m$ distinct elements''. Therefore, $\cal N$ is a model of 
$\Cons(\bt,\{P\})$, but clearly, it has no expansion to a model of $\bt$. 
\end{fact}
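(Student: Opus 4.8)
The plan is to establish this impossibility result by exhibiting a single concrete counterexample, exploiting the well-known fact that first-order logic cannot express equicardinality of two infinite sets. First I would take $\LL$ to be first-order logic and work over the signature $\{P,f\}$, with $P$ a unary predicate and $f$ a unary function, setting $\D=\{P\}$. I would let $\bt$ be the (finite) axiomatization stating that $f$ restricts to a bijection from the extension of $P$ onto its complement: namely $\fl x\,(P(x)\iml\neg P(f(x)))$ together with injectivity of $f$ on $P$ and surjectivity of $f$ from $P$ onto $\neg P$. Every model $\M$ of $\bt$ then satisfies $|P^{\M}|=|(\neg P)^{\M}|$, which is the cardinality balance I intend to break at the level of consequences.

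The core of the argument is to produce a $\D$-structure that models $\Cons(\bt,\D)$ yet violates this balance. I would pick a model $\M\models\bt$ in which both $P^{\M}$ and its complement are countably infinite, and a $\{P\}$-structure $\N$ in which $P^{\N}$ is countably infinite while its complement is uncountable. Since $\M\models\bt$ and every formula in $\Cons(\bt,\D)$ is a $\D$-sentence, the reduct $\M|_{\D}$ already satisfies $\Cons(\bt,\D)$; so it suffices to transfer this to $\N$ by showing that $\N$ and $\M|_{\D}$ are elementarily equivalent.

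The hard part — and the real content of the statement — will be verifying this elementary equivalence over the one-unary-predicate signature. I would appeal to the standard quantifier-classification fact for a single unary predicate: every sentence in signature $\{P\}$ is equivalent to a Boolean combination of the cardinality-threshold formulas $\exs^{\geqslant m}P$ and $\exs^{\geqslant m}\neg P$, expressing that $P$ (respectively $\neg P$) holds on at least $m$ elements. Because both $\M|_{\D}$ and $\N$ satisfy every such threshold formula, both $P$ and $\neg P$ being infinite in each, the two structures agree on all $\{P\}$-sentences, whence $\N\models\Cons(\bt,\D)$. (Alternatively one could run an Ehrenfeucht--Fra\"iss\'e game between the two structures, but the quantifier-classification route is cleaner here.)

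Finally I would close the argument by a cardinality obstruction: any expansion of $\N$ to the full signature $\{P,f\}$ modelling $\bt$ would have to interpret $f$ as a bijection between a countable set and an uncountable set, which is impossible. Hence $\N$ is a model of $\Cons(\bt,\D)$ admitting no expansion to a model of $\bt$, establishing the claim. I expect the only delicate point to be justifying the quantifier classification (or the equivalent back-and-forth), since the remaining steps are immediate once elementary equivalence is in hand.
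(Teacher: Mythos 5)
Your proposal is correct and follows essentially the same route as the paper: the same signature $\{P,f\}$ with $\bt$ asserting that $f$ is a bijection between $P$ and its complement, the same countable-versus-uncountable structure $\N$, and the same quantifier-classification fact (Boolean combinations of $\exists^{\geqslant m}P$ and $\exists^{\geqslant m}\neg P$) to establish elementary equivalence. The only difference is cosmetic: you explicitly choose $\M$ with both $P^{\M}$ and its complement countably infinite, a hypothesis the paper leaves implicit but which is indeed needed for the threshold-formula argument to go through.
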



As will be noted in Section \ref{Section_Forgetting}, forgetting is an operation, which gives a set of (second-order) consequences axiomatizing the same class of models, as the original theory, modulo forgotten signature/ground atom.

\medskip

It is known that in general, a set of consequences of a theory may not be finitely axiomatizable in the logic, in which the theory is formulated. For instance, the following example is widely known in the literature on Description Logics (e.g., see Section 3.2 in \cite{MathLogicLifeScience}).  


\begin{fact}\label{Fact_ConsNotFinitelyAx}
If $\bt$ is a theory in $\LL$ and $\D$ a signature, then
$\Cons(\bt,\D)$ may not be finitely axiomatizable in $\LL$.

Let $\bt$ be the first-order theory axiomatized by the following two
axioms:

$\forall x\, [A(x)\rightarrow B(x)\, ]$

$\forall x\, [B(x)\rightarrow\exists y (R(x,y)\wedge B(y))\, ]$

\noindent 
Consider the signature $\D=\{A,R\}$. Then it is not hard to verify that $\Cons(\bt,\D)$ is equivalent to the following infinite set of formulas:

$\forall x\, A(x)\rightarrow\exists y R(x,y)$

$\forall x\, A(x)\rightarrow\exists y \exists u[\, R(x,y)\wedge R(y,u)\, ]$

$\forall x\, A(x)\rightarrow\exists y\exists u\exists v[\, R(x,y)\wedge R(y,u)\wedge R(u,v)\, ]$

...

\noindent 
By compactness, this theory is not finitely
axiomatizable in first-order logic.
\end{fact}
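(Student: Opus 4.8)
The plan is to establish the two assertions in the statement separately. Write $\gamma_n$ for the $n$-th formula in the displayed list, so that $\gamma_n$ says that from every element of $A$ there is an $R$-path consisting of $n$ edges, and set $\Gamma=\{\gamma_n\mid n\geqslant 1\}$. First I would show $\Cons(\bt,\D)\equiv\Gamma$, and then that $\Gamma$ is not finitely axiomatizable in first-order logic. The easy half of the equivalence is that every model of $\Cons(\bt,\D)$ satisfies $\Gamma$: this amounts to checking $\bt\models\gamma_n$, which follows by taking any $\M\models\bt$ and any $a\in A^{\M}$, applying the first axiom to get $B(a)$, and then iterating the second axiom $n$ times to extract an $R$-path with $n$ edges out of $a$. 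Since each $\gamma_n$ is a sentence in the signature $\D=\{A,R\}$, this also records that $\gamma_n\in\Cons(\bt,\D)$.

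The substantial half is that every model of $\Gamma$ satisfies every $\D$-consequence of $\bt$, which I would prove by a model-expansion argument. Given $\M\models\Gamma$, pass to an $\omega$-saturated elementary extension $\M^{*}$ of $\M$ in the signature $\D$; then $\M^{*}\models\Gamma$ as well. The idea is to interpret $B$ in $\M^{*}$ as the set of elements that begin an \emph{infinite} $R$-path, and to verify that $(\M^{*},B)\models\bt$. Axiom~2 is then immediate, since the tail of an infinite path out of an element $a$ witnesses a successor that again begins an infinite path. Axiom~1 reduces to the key lemma that, in the saturated model $\M^{*}$, an element begins an infinite $R$-path whenever it begins arbitrarily long finite $R$-paths; combined with $\M^{*}\models\Gamma$ this puts every $A$-element into $B$. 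I would prove this lemma by building the infinite path step by step: at a current endpoint $a_i$ that begins arbitrarily long finite paths, consider the $1$-type over the parameter $a_i$ asserting $R(a_i,x)$ together with, for every $m$, the existence of an $R$-path with $m$ edges starting at $x$. This type is finitely satisfiable (a path of length $N+1$ out of $a_i$ supplies a successor from which a path of length $N$ emanates), so $\omega$-saturation realizes it by some $a_{i+1}$, and the construction continues. Once $(\M^{*},B)\models\bt$, any $\D$-consequence $\varphi$ of $\bt$ holds in $(\M^{*},B)$, hence in $\M^{*}$ (as $\varphi$ does not mention $B$), hence in $\M$ by elementarity; thus $\Gamma\models\varphi$, completing the equivalence $\Cons(\bt,\D)\equiv\Gamma$.

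For non-finite-axiomatizability I would argue by compactness. Suppose $\Gamma$ were equivalent to a single sentence $\chi$ (conjoining any finite axiomatization). Note that $\gamma_{n+1}\models\gamma_n$, since truncating a path with $n+1$ edges gives one with $n$ edges, so finite subsets of $\Gamma$ collapse to a single $\gamma_N$. From $\Gamma\models\chi$ and compactness we would get $\gamma_N\models\chi$ for some $N$, while $\chi\equiv\Gamma\models\gamma_{N+1}$ forces $\gamma_N\models\gamma_{N+1}$. This is refuted by an explicit countermodel: take one $A$-element $a_0$ heading a simple chain $a_0,a_1,\dots,a_N$ with exactly $N$ edges and no further $R$-edges, and interpret $A$ as $\{a_0\}$. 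Here every $A$-element begins a path with $N$ edges but none begins a path with $N+1$ edges, so $\gamma_N$ holds and $\gamma_{N+1}$ fails, a contradiction.

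The routine parts are the easy inclusion and the compactness argument at the end; the crux is the saturation lemma turning arbitrarily long finite paths into a genuine infinite path, which is precisely what makes the interpretation of $B$ work. Passing to a saturated extension is essential: in an arbitrary model of $\Gamma$ an element may begin arbitrarily long finite paths while every one of its (possibly infinitely many) successors begins only bounded ones, so the naive definition of $B$ would violate Axiom~2. This is also consistent with Fact~\ref{Fact_ConsNoModelExpansion}, since we only expand an elementary extension of $\M$, not $\M$ itself.
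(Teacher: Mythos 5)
Your proposal is correct and takes essentially the same route as the paper: the identical infinite axiomatization $\Gamma$ and the identical compactness argument for non-finite-axiomatizability. The paper leaves the equivalence $\Cons(\bt,\D)\equiv\Gamma$ as an unproved assertion (``it is not hard to verify''), and your $\omega$-saturation lemma correctly supplies the one genuinely non-trivial step --- every model of $\Gamma$ has an elementary extension that expands to a model of $\bt$ --- which is exactly the right way to make that assertion rigorous, and your countermodel for $\gamma_N\not\models\gamma_{N+1}$ completes the compactness step the paper only gestures at.
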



A well-known concept used to characterize similarity of two theories wrt a signature is \emph{inseparability}. This notion has also appeared in 
the context of entailment in Description Logics, e.g., see \cite{FormalPropMod,ConsExtEL}.

\begin{de}[$\D$--inseparability]\label{Def_Inseparable}
Theories $\bt_1$ and $\bt_2$ in $\LL$ are called \emph{$\D$--insepa\-rable}, for a
signature $\D$, if $\Cons(\bt_1,\D)=\Cons(\bt_2,\D)$. 
\end{de}

In other words, $\bt_1$ and $\bt_2$ are $\D$--inseparable if 
for any $\LL$-formula $\psi$ in signature $\D$, $\bt_1$ entails $\psi$ iff $\bt_2$ does.
That is, in the language $\LL$, no query in signature $\D$ separates $\bt_1$ and $\bt_2$ from each other. If $\Cons()$ is augmented with the third parameter specifying a logic, in which consequences are taken, then inseparability gives rise to a variety of notions of similarity between theories. As informally noted, two theories may be inseparable wrt a logic $\LL$, but entail different consequences wrt a language more expressive than $\LL$. For the purpose of this paper, we consider the non-parametrized notion of inseparability, assuming that the language of interest is the underlying logic $\LL$, in which the theories are formulated. This assumption is natural if one is only interested in entailment of $\LL$-formulas. 

\smallskip

Inseparability plays an important role for decompositions. Assume that we have a theory $\bt$ that is $\D$-decomposable into some components $\bt_1$ and $\bt_2$.  Although, the union $\bt_1\cup\bt_2$ must entail all
consequences of $\bt$ in the signature $\D$, the components $\bt_1$
and $\bt_2$ may not be $\D$--inseparable, if we demand them to be
finite. For example, the set of $\D$--consequences of $\bt_2$ may not be
finitely axiomatizable in $\LL$ by axioms of
$\bt_1$. This easily follows from Fact
\ref{Fact_ConsNotFinitelyAx} which shows that this phenomenon
is already possible in weak languages such as the sub-boolean
description logic $\mathcal{EL}$. On the other hand,
$\D$--inseparability of decomposition components can always be obtained if
the underlying logic $\LL$ has uniform interpolation (cf.
Proposition 2 in \cite{DecompLogics}). Both $\D$--decomposition and $\D$--inseparability are required to achieve modularity. Without $\D$--inseparability the components are not self-sufficient,
since a component may not entail some of the consequences in the shared vocabulary $\D$. The ideal case is when a theory $\bt$ has $\D$-decomposition into finite $\D$--inseparable components, as noted in Fact \ref{Fact_Reduc2Comp} further in this section.

In contrast to decomposability, deciding $\D$-inseparability of theories is usually 
harder, than deciding entailment in the logic in which the theories 
are formulated, as proved by the results in \cite{FormalPropMod}, \cite{ConsExtEL}, 
and the results on the complexity of deciding conservative extensions 
\cite{Lutz07conservativeextensions,Ghilardi06conservativeextensions}. 
However, there are practical cases in which this property is guaranteed to hold for 
any decomposition components of a given theory wrt a certain signature $\D$. 
For example, if the theory without equality is a set of ground atoms, then 
such theory is $\D$-decomposable iff there exist two subsets of atoms 
having only $\D$-symbols in common and containing at least one non-$\D$-symbol. 
This property is easy to check by computing syntactic connectedness of the signature
symbols. It is straightforward to verify that if $\D$ does not contain predicate symbols,
then the obtained decomposition components are guaranteed to be $\D$-inseparable. In other words, a set of ground atoms can be easily decomposed into inseparable components, if they share constants only with no common predicate symbols. A practically important generalization of theories consisting of ground atoms is \textit{proper$^+$} theories \cite{LakemeyerLevesque02,ProgressionLocalEffect}. Developing computationally tractable techniques for decomposition of \textit{proper$^+$} theories into inseparable components is of particular interest. We note the importance of having inseparable decomposition components below. 

\medskip

The well-known property of logics related to signature
decompositions of theories is the Parallel Interpolation Property (PIP)
first considered in a special form in \cite{Makinson} and studied
later in a more general form in \cite{DecompOnt}. 

\begin{de}[Parallel Interpolation Property]\label{Def_PIP}
A logic $\LL$ is said to have the \emph{parallel interpolation
property} (PIP) if for any theories $\bt_1$, $\bt_2$ in $\LL$ with
$\sig(\bt_1)\cap\sig(\bt_2)=\D$ and any formula $\varphi$ in
$\LL$, the condition $\bt_1\cup\bt_2\models\varphi$ yields the
existence of sets of formulas $\bt_1'$ and $\bt_2'$ in $\LL$ such
that:
\begin{itemize}
\item  $\bt_i\models\bt_i'$, for $i=1,2$, and
$\bt_1'\cup\bt_2'\models\varphi$;

\item $\sig(\bt_i')\setminus\D \subseteq
(\sig(\bt_i)\cap\sig(\varphi))\setminus\D$.
\end{itemize}
\end{de}

Note that PIP is closely related to Craig's interpolation
\cite{Craig1957Herbrand-Gentzen,Craig2008Synthese}. 
In fact, PIP  can be understood as an iterated version of Craig's interpolation 
in the logics that have compactness and deduction theorem (see Lemma~1 in 
\cite{DecompLogics}). Many logics known to have Craig interpolation -- e.g., second- 
and first-order logics, numerous modal logics, and some description logics, also have PIP. 
It is easy to note that, in the presence of PIP, decomposing a set $\bt$
of formulas into inseparable components wrt a signature $\D$ gives
a family of theories that imply all the consequences of $\bt$ in
their own subsignatures.

\begin{fact}\label{Fact_Reduc2Comp}
Let $\LL$ have PIP, $\bt$ be a theory in $\LL$, and $\D$ a
signature. Let $\langle{\bt}_1, {\bt}_2\rangle$ be a
$\D$--decomposition of $\bt$, with ${\bt}_1$ and ${\bt}_2$ being
$\D$--inseparable. Then for any formula $\varphi$ with
$\sig(\varphi)\subseteq\sig(\bt_i)$, for some $i=1,2$, we have
$\bt\models\varphi$ iff $\bt_i\models\varphi$.
\end{fact}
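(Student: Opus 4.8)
The plan is to split the biconditional and treat each direction separately, with essentially all the content concentrated in the harder direction. Without loss of generality I would assume $i=1$, so that $\sig(\varphi)\subseteq\sig(\bt_1)$; the case $i=2$ is symmetric. The direction ``$\bt_1\models\varphi$ implies $\bt\models\varphi$'' is immediate, since $\bt\equiv\bt_1\cup\bt_2$ entails every formula of $\bt_1$ and therefore entails $\varphi$ as well.

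For the converse, suppose $\bt\models\varphi$. First I would rewrite this using decomposability as $\bt_1\cup\bt_2\models\varphi$, and then, since $\sig(\bt_1)\cap\sig(\bt_2)=\D$, invoke PIP to obtain sets of formulas $\bt_1'$ and $\bt_2'$ satisfying $\bt_j\models\bt_j'$ for $j=1,2$, together with $\bt_1'\cup\bt_2'\models\varphi$ and the signature bound $\sig(\bt_j')\setminus\D\subseteq(\sig(\bt_j)\cap\sig(\varphi))\setminus\D$.

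The crux of the argument---and the step I expect to carry all the weight---is to notice that the PIP signature bound collapses $\bt_2'$ into the shared vocabulary $\D$. Indeed, because $\sig(\varphi)\subseteq\sig(\bt_1)$ and $\sig(\bt_1)\cap\sig(\bt_2)=\D$, we have $\sig(\bt_2)\cap\sig(\varphi)\subseteq\D$, so that $(\sig(\bt_2)\cap\sig(\varphi))\setminus\D=\varnothing$ and hence $\sig(\bt_2')\subseteq\D$. Thus $\bt_2'$ is a set of $\D$-formulas entailed by $\bt_2$, i.e. $\bt_2'\subseteq\Cons(\bt_2,\D)$.

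At this point $\D$--inseparability does the remaining work. Since $\Cons(\bt_2,\D)=\Cons(\bt_1,\D)$, we obtain $\bt_1\models\bt_2'$. Combined with $\bt_1\models\bt_1'$ from PIP this gives $\bt_1\models\bt_1'\cup\bt_2'$, and together with $\bt_1'\cup\bt_2'\models\varphi$ it yields $\bt_1\models\varphi$, as required. The only genuinely non-routine point is the signature collapse of $\bt_2'$; once that is secured, inseparability and transitivity of entailment finish the proof mechanically.
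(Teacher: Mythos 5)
Your proof is correct and follows essentially the same route as the paper's: apply PIP to $\bt_1\cup\bt_2\models\varphi$, observe that the signature bound forces $\sig(\bt_2')\subseteq\D$, and then use $\D$--inseparability to transfer $\bt_2'$ to $\bt_1$. The only difference is that you spell out explicitly why $(\sig(\bt_2)\cap\sig(\varphi))\setminus\D=\varnothing$, a step the paper asserts without elaboration.
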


\begin{proof} 
Assume $\sig(\varphi)\subseteq\sig(\bt_1)$. If
$\bt_1\models\varphi$ then $\bt\models\varphi$ by definition of
$\D$--decomposability. If $\bt\models\varphi$ then ${\bt}_1\cup
{\bt}_2\models\varphi$ and by PIP, there are ${\bt}_1'$ and
${\bt}_2'$ such that $\bt_1\models\bt_1'$, $\bt_2\models\bt_2' \
$, ${\bt}_1'\cup {\bt}_2'\models\varphi$, and
$\sig({\bt}_2')\subseteq\D$. As ${\bt}_1$ and ${\bt}_2$ are
$\D$--inseparable, we obtain $\bt_1\models\bt_2'$ and conclude
that $\bt_1\models\varphi$. 
\end{proof}

\medskip

In other words, in the presence of PIP, inseparable decomposition components 
can be used instead of the original theory for checking the entailment of formulas
in the corresponding subsignatures. This is the reason for our
interest in the inseparability property in connection with decompositions.
As shown in \cite{EyalAmirAAAI2000,EyalAmirKR2002,AmirAIJ2005}, a decomposition of a theory 
can be beneficial even without inseparability thanks to applying the known methods 
of distributed reasoning via message passing between components. However, if
components are inseparable, then the reasoner can avoid message passing completely.

\subsection{Basics of the Situation Calculus}\label{Sect_SituationCalculus}
The language of the situation
calculus $\LL_{sc}$ has the first-order syntax over three sorts
\textit{action}, \textit{situation}, \textit{object}. It is
provided with the standard model-theoretic semantics. It is defined
over the countably infinite alphabet
$A_{sc}=\{do,\preceq,S_0,Poss\}\cup{\A}\cup{\F}\cup{\Obj}\cup\Stat$,
where $do$ is a binary function symbol of sort situation; $\preceq$ is a
binary relation on situations; $S_0$ is the constant of sort
situation; $Poss(a,s)$ is a binary predicate (saying whether $a$
is possible in $s$) with the first argument of sort action and the
second one of sort situation; $\A$ is a set of action functions
with arguments of sort object, $\F$ is a set of so-called fluents,
i.e., predicates having as arguments a tuple (vector) of sort
object and one last argument of sort situation; $\Obj$ is a set of
constants of sort object; and $\Stat$ is a set of static
predicates and functions, i.e., those that only have objects as
arguments. A symbol $v\in A_{sc}$ (predicate or function) is
called \textit{situation-independent} if $v\in
\A\cup{\Obj}\cup{\Stat}$. A ground term is of sort situation
iff it is either the constant $S_0$ or a term $do(A(\bar{t}),S)$, where
$A(\overline{t})$ is a ground action term and $S$ is a ground
situation term. For instance, a term $do(A_2(\bar{t_2}),
do(A_1(\bar{t_1}),S_0))$ denotes the situation resulting from executing
actions $A_1(\bar{t_1})$ and $A_2(\bar{t_2})$ consecutively from the
initial situation $S_0$. Informally, static predicates specify
object properties that never change no matter what actions are executed and fluents
describe those object properties that are situation--dependent.
The language of the situation calculus is used to formulate
\textit{basic action theories} ($\cal{BAT}$s); they
may serve as the formal specifications of planning problems. Every
$\cal{BAT}$ consists of a set of foundational axioms $\Si$, which
specify constraints on how the function $do$ and fluents must be
understood, a theory $D_{una}$ stating the unique name assumption
for action functions and objects, an initial theory $D_{S_0}$
describing knowledge about the initial situation $S_0$, a theory $D_{ap}$
specifying preconditions of action execution, and a theory $D_{ss}$
(the set of successor-state axioms, SSAs for short) which contains
definitions of fluents in the next situation in terms of static
predicates and the values of fluents in the previous situation.
A detailed example of a $\cal{BAT}$ is given at the end of this section.

\begin{example}[The Blocks World]\label{BWonly}
We illustrate some of the syntactic definitions using the well-known
Blocks World example. The domain of objects in this example consists of blocks 
that can form towers such that a block can be on the top of only one other block
and conversely only one block can be staying on the top of another block.
The unary predicate $Block$ holds for objects. The towers of blocks can be
described using the fluents $On(x,y,s)$, a block $x$ is on $y$ in situation $s$,
and $Clear(x,s)$, a block $x$ is clear in $s$  meaning that there is no block
on top of $x$ in situation $s$. The first fluent applies to pairs of blocks
in a tower, while the second fluent characterizes the top block.  An initial 
theory $D_{S_0}$ may include axioms about the initial configuration of blocks 
named using object constants $A,B,C$, e.g., $On(A,B,S_0)$, the block $A$ is on $B$ 
initially, $\neg\exists x On(x,A,S_0)$ and $\neg\exists x On(x,C,S_0)$, i.e., 
there are no blocks on top of blocks $A$ and $C$. Notice that both fluents are
predicates with situation as the last argument. A theory $D_{una}$ includes
axioms saying that all blocks $A,B,C$ are pairwise distinct.
The function $move(x,y,z)$ maps blocks $x,y,z$ into a separate sort action 
that represents moving block $x$ staying on top of block $y$ from block $y$ 
onto another block $z$. The precondition axioms $D_{ap}$ characterize when this
action is possible, e.g., $move(A,B,C)$ is possible in the initial situation $S_0$,
because both $A$ and $C$ are clear, but $move(B,A,C)$ is not possible in $S_0$, 
because the block $B$ is not clear, and it is not staying on $A$ in $S_0$. 
The situation $do(move(A,B,C),S_0)$ results from executing action $move(A,B,C)$
in the initial situation $S_0$. This action has effects on the fluents in the 
sense that the fluent predicates about $S_0$ may change their truth values in the situation
$do(move(A,B,C),S_0)$. Observe however that in $do(move(A,C,B),do(move(A,B,C),S_0))$
fluents are true iff they are true in $S_0$, since $move(A,C,B)$ is
inverse wrt $move(A,B,C)$ when these actions executed consecutively. The following 
successor state axiom characterizes all effects of all actions on the fluent $On$:
\medskip 

\noindent $
\begin{array}{l}
\hspace{0.1cm}
\forall x,y,z,a,s \ \ 
On(x,\!z,do(a,\!s)) \liff \ \exists y (a\!=\! move(x,\!y,\!z)) \lor On(x,\!z,\!s)\!\land\! 
     \neg \exists y ( a\! =\! move(x,\!z,\!y))
\end{array}
$
\\[1ex]
More specifically, block $x$ is on block $z$ after doing an action $a$ in situation $s$
iff the last action $a$ was moving $x$ from some other block $y$ to $z$, or if
$x$ was already on $z$ in $s$, and the last action $a$ did not move it elsewhere.
Subsequently, we do not write the $\forall$-quantifiers explicitly at front of
the axioms. 
\end{example}

In every basic action theory $\DD$ over a signature
$\si\subseteq A_{sc}$, the set of foundational axioms $\Si$ consists of the
following formulas \cite{Reiter1993} (note the axiom schema for induction):
\begin{trivlist}
\item $\forall \ a_1,a_2,s_1,s_2 \
[do(a_1,s_1)=do(a_2,s_2)\rightarrow a_1=a_2 \wedge s_1=s_2]$ \item
$\forall \ s \ \neg (s \preceq S_0 \wedge s\neq S_0)$ \item
$\forall \ s_1,s_2 \  [s_1\preceq s_2 \leftrightarrow \exists a \
(do(a,s_1)\preceq s_2) \vee s_1=s_2]$ \item $\forall P \
P(S_0)\wedge \forall a,s [P(s)\rightarrow P(do(a,s))] \rightarrow
\forall s P(s)$
\end{trivlist}
Reiter observed in \cite{Reiter1993} that foundational axioms $\Si$
generalize a single successor function over natural numbers to the case of
multiple successors over situations. The second order induction axiom serves 
to exclude non-standard trees as models.
\commentout{	
@article{Reiter1993,
  author    = {Raymond Reiter},
  title     = {Proving Properties of States in the Situation Calculus},
  journal   = {Artif. Intell.},
  volume    = {64},
  number    = {2},
  pages     = {337--351},
  year      = {1993},
  doi       = {10.1016/0004-3702(93)90109-O}
	}
}	

For every pair of distinct action functions $\{A,A'\}\subseteq\si$
and every pair $\langle a,b\rangle$ of distinct object constants
from $\si$, \, a theory $D_{una}$ contains axioms of
the form:
\begin{trivlist}
\item $a\neq b$
\item $\forall \ \bar{x},\bar{y} \ A(\bar{x})\neq A'(\bar{y})$
\item $\forall \ \bar{x},\bar{y} \ A(x_1,\ldots ,
x_n)=A(y_1,\ldots , y_n)\rightarrow x_1=y_1 \wedge\ldots\wedge
x_n=y_n$ if $A$ is n--ary.
\end{trivlist}
\noindent No other axioms are in $D_{una}$.

To define the remaining subtheories of $\cal{BAT}$, we need to
introduce the following syntactic notion (taken from \cite{PR99,Reiter2001}).
\begin{de}\label{De_UniformTheory}
A formula $\varphi$ in language $\LL_{sc}$ is called \emph{uniform} in a
situation term $S$ if:
\begin{itemize}
\item[1.] it does not contain quantifiers over variables of sort
situation;
\item[2.] it does not contain equalities between situation terms;
\item[3.] the predicates $Poss, \preceq$ do not occur in $\varphi$:
$\{Poss,\preceq\}\cap\sig(\varphi)=\varnothing$;
\item[4.] for every fluent $F\in\sig(\varphi)$, the term in the
situation argument of $F$ is $S$.
\end{itemize}
A set $\bt$ of formulas in $\LL_{sc}$ is called uniform in a
situation term $S$ if every formula of $\bt$ is uniform in $S$.
\end{de}
By definition, a set $\bt$ of formulas uniform in a situation term
$S$ either does not contain any situation terms (and hence, fluents), or 
the only situation term is $S$ which occurs as the situation argument of 
each fluent from $\sig(\bt)$. In the example above, the formula on 
the right hand side of the SSA is a formula uniform in $s$. If $\bt$ is a
set of sentences uniform in situation term $S$, i.e., $\bt$ has no
free variables, and $S$ occurs in formulas of $\bt$, then by items
(1), (2) of the definition, $S$ must be ground and thus, it must
either be the constant $S_0$, or have the form $do(A(\bar{t}),
S')$, where $S'$ is a ground situation term. Note that if the constant
$S_0$ or the binary function symbol $do$ is present in $\sig(\bt)$ and $\bt$ is
uniform in $S$, then necessarily $S_0\in\sig(S)$, or
$do\in\sig(S)$, respectively. By items (1) and (2), such theory $\bt$ does not
restrict the interpretation of the term $S$ and the cardinality of
the sort \textit{situation}, so the observations above lead to the
following property of uniform theories, which informally can be
summarized by saying that in sentences of a theory $\bt$ uniform
in a ground situation term $S$, we can understand this situation
term as playing a role of an index that can remain implicit.
Whenever we change the interpretation of $S$ (e.g., by choosing a
different interpretation for $do$ and $S_0$) in a model of $\bt$,
it suffices to ``move'' interpretations of fluents to this new
point to obtain again a model for $\bt$.

\begin{lem}\label{Lemma_ModelPropertyofUniformTheories}
Let $\bt$ be a set of sentences uniform in a ground situation term
$S$. Let $\M=\langle Act\cup Sit\cup Obj, \ \mathbf{do},
\mathbf{S_0}, \mathbf{F_1},\ldots , \mathbf{F_n}, \ \mathcal{I} \
\rangle$ be a model of $\bt$, where $Act$, $Sit$, and $Obj$ are
domains for the corresponding sorts \textit{action},
\textit{situation}, and \textit{object}, $\mathbf{do}$ and
$\mathbf{S_0}$ are the interpretations of the function $do$ and
constant $S_0$, respectively, $\mathbf{F_1},\ldots ,
\mathbf{F_n}$ are the interpretations of
fluents from $\sig(\bt)$, and $\mathcal{I}$ is the interpretation
of the rest of symbols from $\sig(\bt)$. For example,
$\mathbf{F_i}$ is a set of tuples $\langle u_1,\ldots
,u_{m-1},\mathbf{S}\rangle$, where $\mathbf{S}$ is the
interpretation of the ground term $S$ in $\M$.

Consider the structure $\M'=\langle Act\cup Sit'\cup Obj, \
\mathbf{do}', \mathbf{S_0}', \mathbf{F_1}',\ldots , \mathbf{F_n}',
\ \mathcal{I} \ \rangle,$  where $Sit'$ is an arbitrary set, the
domain for sort \textit{situation}, $\mathbf{do}'$ and
$\mathbf{S_0}'$ are arbitrary interpretations of $do$ and $S_0$ on
$Sit'$, respectively, and for $i\leqslant n$, $\mathbf{F_i}'$
denotes the interpretation of the fluent $F_i$ as a set of tuples
$\langle u_1,\ldots ,u_{m-1},\mathbf{S'}\rangle$, with
$\mathbf{S'}$ being the interpretation of term $S$ in $\M'$ and $\langle
u_1,\ldots ,u_{m-1},\mathbf{S}\rangle\in\mathbf{F_i}$.

Then, $\M'$ is a model of $\bt$. By definition, the interpretation
of situation--independent predicates and functions is the same in
$\M'$ and $\M$.
\end{lem}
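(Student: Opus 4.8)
The plan is to prove, by structural induction on formulas, that $\M$ and $\M'$ satisfy exactly the same sentences uniform in $S$; since $\M\models\bt$ and every member of $\bt$ is such a sentence, this immediately gives $\M'\models\bt$. To make the induction go through I would strengthen the claim to subformulas: for every subformula $\psi$ occurring in a sentence of $\bt$ and every variable assignment $\nu$ sending the free variables of $\psi$ into the shared domains $Act$ and $Obj$, I claim $\M,\nu\models\psi$ iff $\M',\nu\models\psi$. The first thing to note is that, by item~1 of Definition~\ref{De_UniformTheory}, these sentences contain no quantifiers over situations, so no subformula acquires a free variable of sort situation; hence $\nu$ need only range over object and action variables, which are interpreted over the common domains $Obj$ and $Act$.

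Next I would analyze terms. Every object-sorted term is built from object variables, object constants from $\Obj$, and static functions from $\Stat$, and every action term is an action function from $\A$ applied to such object terms; none of these produces or consumes a situation, so object and action terms contain no situation subterms and are evaluated entirely through the shared interpretation $\mathcal{I}$ of situation-independent symbols. Consequently each object or action term has the same value under $\nu$ in $\M$ and in $\M'$.

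The base case consists of atomic formulas, and the fluent atoms are the crux. By item~2 there are no equalities between situation terms, so every equality is between object or action terms and evaluates identically in both structures by the previous paragraph; by item~3 the predicates $Poss$ and $\preceq$ do not occur; static atoms are again handled by $\mathcal{I}$. For a fluent atom $F(\bar t, S)$, item~4 guarantees that the situation argument is exactly $S$. The object arguments $\bar t$ evaluate to the same tuple $\bar u$ in both structures, so $\M,\nu\models F(\bar t,S)$ iff $\langle \bar u,\mathbf{S}\rangle\in\mathbf{F}$, while $\M',\nu\models F(\bar t,S)$ iff $\langle \bar u,\mathbf{S'}\rangle\in\mathbf{F'}$. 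By construction $\langle \bar u,\mathbf{S'}\rangle\in\mathbf{F'}$ precisely when $\langle \bar u,\mathbf{S}\rangle\in\mathbf{F}$, so the two sides agree. This is the one place where the definition of $\M'$ is used and where uniformity is genuinely essential.

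The inductive step is then routine: Boolean connectives follow immediately from the induction hypothesis, and for a quantifier $\exists x$ or $\forall x$ the bound variable $x$ is of sort object or action (never situation, by item~1), so its range is the shared domain $Obj$ or $Act$; extending $\nu$ by any witness preserves truth values by the induction hypothesis. Applying the strengthened claim to the sentences of $\bt$, which have no free variables, yields $\M\models\varphi$ iff $\M'\models\varphi$ for each $\varphi\in\bt$, hence $\M'\models\bt$. I expect no real obstacle beyond this bookkeeping; the only subtlety worth checking carefully is that $\mathbf{S'}$, the value of the ground term $S$ in $\M'$, is well defined from the arbitrary but fixed interpretations $\mathbf{do}'$ and $\mathbf{S_0}'$ together with the shared interpretation of action functions, so that each $\mathbf{F_i}'$ is a legitimate fluent interpretation concentrated at the single situation point $\mathbf{S'}$.
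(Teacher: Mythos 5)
Your proof is correct and takes exactly the approach the paper indicates: the paper disposes of this lemma with the one-line remark that it "can be easily proved by induction over possible syntactic form of sentences in $\bt$," and your argument is precisely that induction, carried out in full. Your careful treatment of the fluent-atom base case (where items 1--4 of the uniformity definition and the construction of $\mathbf{F_i}'$ are actually used) and of the well-definedness of $\mathbf{S'}$ supplies the details the paper omits.
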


This lemma can be easily proved by induction over possible syntactic form of sentences in $\bt$.
If $S$ and $S'$ are two situation terms and $\bt$ is a set of
formulas uniform in $S$, then we denote by $\bt(S'/S)$ the set of
formulas obtained from $\bt$ by replacing every occurrence of $S$
with $S'$. This notation will be extensively used in Section
\ref{Section_Progression}. Obviously, $\bt(S'/S)$ is uniform in
$S'$.

\medskip

The initial theory $\DD_{S_0}$ of $\DD$ is defined as an arbitrary set
of sentences in the signature $\si$ that are uniform in the situation
constant $S_0$. Throughout the paper, we assume that $\DD_{S_0}$
is a theory in (any fragment of) second-order logic that can
be translated into a set of sentences of first-order logic uniform in
$S_0$. In particular, $\DD_{S_0}$ can include both an ABox and a TBox in an appropriate
Description Logic, as argued in \cite{GuSoutchanski2010,YehiaDL2012}.

Next, for every n-ary action function $A\in\si$, a theory $\DD_{ap}$
includes an axiom of the form
$$\forall \ \bar{x}, s \ \big(\, Poss(A(\bar{x}),s)\leftrightarrow
\Pi_A(\bar{x},s)\, \big)\, ,$$ where $\Pi_A(\bar{x},s)$ is a formula uniform
in $s$ with free variables among $\bar{x}$ and $s$.
Informally, $\Pi_A(\bar{x},s)$ characterizes preconditions for
executing the action $A$ in the situation $s$. No other formulas are in
$\DD_{ap}$.

\medskip

\textbf{Example \ref{BWonly} (continuation)}.
The following is the precondition axiom for $move(x,y,z)$:
\medskip

$
\hspace{-0.5cm}
\begin{array}{l}
Poss(move(x,y,z),s) \liff \  Block(x)\land Block(y)\land 
Block(z)\land On(x,y,s)\land \\ \hspace{6.5cm} Clear(x,s)\land Clear(z,s)\land x \not= z
\end{array}
$

\medskip

The action $move(x,y,z)$ is possible iff $x,y,z$ are blocks, $x$ is located 
on $y$ in situation $s$, and both the block $x$ that is to be moved, and
a destination block $z$ are not occupied by any other blocks. Notice 
the preconditions do not allow moving a block back to the same location where
it was before.

\medskip

\noindent Finally, for every fluent $F\in\si$, a theory $D_{ss}$
contains an axiom of the form
$$\forall\ \bar{x},a,s\ \big(\, F(\bar{x},do(a,s))\leftrightarrow \gamma_F^+(\bar{x},a,s)\vee
\ F(\bar{x},s)\wedge\neg \gamma_F^-(\bar{x},a,s)\, \big) \hspace{0.6cm} (\dagger),$$

\noindent 
specifying a condition $\gamma_F^+(\bar{x},a,s)$ when fluent $F$ becomes true 
in situation $do(a,s)$, or when $F$ remains true in situation $do(a,s)$ 
if it is true in $s$, unless another condition $\gamma_F^-(\bar{x},a,s)]$ holds.
Here, $\gamma_F^+$ is a disjunction of formulas of the
form $[\exists \bar{y}]
(a=A^+(\bar{t})\wedge\phi^+(\bar{x},\bar{y},s))$, where $A^+$ is an action
function, $\bar{t}$ is a (possibly empty) vector of object terms with
variables at most among $\bar{x}$ and $\bar{y}$, and $\phi^+$ is a
formula uniform in $s$ with variables at most among $\bar{x}$,
$\bar{y}$, and $s$. We write $[\exists \bar{y}]$ to show that
$\exists \bar{y}$ is optional; it is present only if $\bar{t}$
includes $\bar{y}$ or if $\phi$ has an occurrence of $\bar{y}$.
The formula $\phi^+$ is called a  {\it positive context
condition} meaning that $A^+(\bar{t})$ makes the fluent $F$ true if
this context condition holds in $s$, but otherwise, $A^+(\bar{t})$
has no effect on $F$. Similarly, $\gamma_F^-$ is a disjunction of
formulas of the form $[\exists \bar{z}]
(a=A^-(\bar{t'})\wedge\phi^-(\bar{x},\bar{z},s))$, where $A^-$ is an
action function, $\bar{t'}$ is a (possibly empty) vector of object
terms with variables at most among $\bar{x}$ and $\bar{z}$, and
$\phi^-$ is a formula uniform in $s$ with variables
 at most among $\bar{x}$, $\bar{z}$, and $s$.
 The formula $\phi^-$ is called a {\it negative context condition} meaning that
$A^-(\bar{t})$ makes the fluent $F$ false if this context condition holds in $s$,
but otherwise, $A^-(\bar{t})$ has no effect on $F$.
In the definition above, we assume that the empty disjunction is
equal to \textit{false}. No other formulas are in $\DD_{ss}$.
This completes the definition of $\DD_{ss}$.
Subsequently, the following will be useful.

\begin{de}[SSA and active position of an action]
The axioms of $\DD_{ss}$ in the form above are called \emph{successor state
axioms} (SSAs) of a basic action theory $\DD$.

An action function $f$ is said to be in \emph{active position} of some SSA
$\varphi\in\DD_{ss}$ if $f$ occurs either as $A^+$, or $A^-$ in the
definition of $\DD_{ss}$ above. 

We say that $\varphi\in\DD_{ss}$ is SSA \emph{for the fluent} $F$ if $F$ is
the fluent from the left-hand side of $\varphi$.
\end{de}

\textbf{Example \ref{BWonly} (continuation)}.
The following is the SSA for the fluent $EH(x,s)$ meaning
the height of a block $x$ is even, i.e., the number of blocks under $x$ is odd:
\medskip

$
\begin{array}{ll}
EH(x, do(a, s)) \leftrightarrow 
	& \exists y, z\big(a \!=\! move(x,y,z) \land \neg EH(z, s)\big)\ \lor\\
	& EH(x, s) \land \neg\exists y, z\big(a\! =\! move(x, y, z) \land EH(z, s)\big).
\end{array}
$

\medskip

Then formula $\neg EH(z,s)$ is a positive context condition. If it holds in $s$,
i.e., if the height of block $z$ is not even in a situation $s$, then in the situation
that results from moving $x$ from $y$ to a block $z$, the height of $x$ becomes 
even. But if the positive context condition does not hold in $s$, then 
$move(x,y,z)$ does not make the height of block $x$ even. Also, if the height of 
$x$ is even in $s$, then it remains even unless a block $x$ is moved from $y$ 
on top of $z$ and the height of $z$ is even in $s$. The formula $EH(z,s)$ is a 
negative context condition, i.e., if the height of block $z$ is even in $s$, 
then the action  $move(x,y,z)$ has a negative conditional effect on 
the fluent $EH(x,s)$ in the sense that this fluent becomes false in 
the situation that results from doing $move(x,y,z)$ in $s$. In this SSA, 
an action function $move$ occurs both as $A^+$ and $A^-$ on the right hand 
side of this SSA.

\smallskip

Following the consistency requirement on SSAs by Reiter
(see Proposition 3.2.6 in \cite{Reiter2001}), we require that if
an action function $f$ occurs in active position in some SSA for a fluent $F$, then $f$ is not in active position in either $\gamma^+_F$, or $\gamma^-_F$.
\commentout { 
The SSAs are obtained under an assumption that for each fluent $F$ the formula\\
$\begin{array}{c}
\ttab\ttab\ttab\ttab\ttab
\neg\exists \bar{x},a,s \big(\, \gamma_F^+(\bar{x},a,s)\land \gamma_F^-(\bar{x},a,s)\, \big)
\end{array}$\\
is entailed by background axiomatization of effects of actions on
fluents. } 
Informally, this means that an action cannot have both positive
and negative effects on $F$. \commentout{under the same context
condition}

Each SSA for a fluent $F$ completely defines the truth value of
$F$ in the situation $do(a,s)$ in terms of what holds in situation
$s$. Also, SSA compactly represents non-effects by quantifying
$\forall a$ over variables of sort action. Only action terms that
occur explicitly on the right-hand side of  SSA for a fluent
$F$ have effects on this fluent, while all other actions have no
effect.

We note that the original version of Reiter's situation calculus
admits functional fluents, e.g. functions having a vector of
arguments of sort object and one last argument of sort situation.
Reiter defines the notion of SSA for functional fluents \cite{Reiter2001}. 
Without loss of generality, we omit functional fluents in this paper. 

The following fundamental result, which will be used in our Theorem \ref{Teo_PreservationOfInseparability}, says that the initial theory together with the UNA is the core of any basic action theory, while the rest of the constituent theories may be considered as add-ons. 

\begin{prop}[Theorem 1 in \cite{PR99}]\label{Prop_RelativeSatisfiability} A basic
action theory $\Si\cup D_{una}\cup D_{S_0}\cup D_{ap}\cup D_{ss}$
is satisfiable iff $D_{una}\cup D_{S_0}$ is satisfiable.
\end{prop}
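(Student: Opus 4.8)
The plan is to establish the two implications separately. The left-to-right direction is immediate, since any subset of a satisfiable theory is satisfiable; in particular $D_{una}\cup D_{S_0}$ is satisfiable whenever the full theory is. All the content lies in the converse: from a model of $D_{una}\cup D_{S_0}$ I must manufacture a model of the entire theory $\Si\cup D_{una}\cup D_{S_0}\cup D_{ap}\cup D_{ss}$.

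First I would take an arbitrary model $\M$ of $D_{una}\cup D_{S_0}$ and normalize its situation part. Since $D_{S_0}$ is uniform in $S_0$ and $D_{una}$ contains no situation terms at all, their union is uniform in $S_0$, so Lemma \ref{Lemma_ModelPropertyofUniformTheories} applies: I replace the situation domain by the free term algebra generated from $S_0$ under $do$ over the (unchanged) action domain, keeping every situation-independent symbol and the fluent values at $S_0$ fixed, and interpreting $\preceq$ as the induced reachability order. Call the result $\M'$. By the lemma $\M'$ still satisfies $D_{una}\cup D_{S_0}$, and by the free construction it satisfies $\Si$: injectivity of $do$ holds by freeness, the two defining axioms for $\preceq$ hold for the reachability order, and the second-order induction axiom holds because every situation in the tree is obtained from $S_0$ by finitely many applications of $do$, leaving no non-standard elements.

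Next I would extend the fluent interpretations from $S_0$ to the whole tree so as to force $D_{ss}$, and then read off $Poss$ from $D_{ap}$. The key observation is that in each successor state axiom of form $(\dagger)$ the defined atom $F(\bar{x},do(a,s))$ does not reappear on the right-hand side: the context conditions $\gamma_F^+$ and $\gamma_F^-$ are uniform in $s$, so the entire right-hand side is determined by the action $a$, the static symbols, and the fluent values at $s$. Thus $(\dagger)$ is a genuine primitive recursion, and defining all fluents simultaneously by induction on the depth of situations yields a unique interpretation at every situation that satisfies $D_{ss}$ by construction. Finally I set $Poss(A(\bar{x}),s)$ to hold exactly when $\Pi_A(\bar{x},s)$ holds; since each $\Pi_A$ is uniform in $s$ it depends only on already-defined data, so $D_{ap}$ is satisfied while $\Si$, $D_{una}$, and $D_{S_0}$ remain undisturbed.

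The main thing to get right, rather than a deep obstacle, is the well-foundedness of this fluent extension: one must verify that the recursion along the situation tree never refers to a value it is currently defining and that the simultaneous definition of several fluents at a given situation is consistent. Both follow from the syntactic shape of $(\dagger)$ — the left-hand atom is absent from the right — so that once the values at $s$ are fixed, every fluent value at each successor $do(a,s)$ is pinned down without circularity. Everything else reduces to a routine check that the constructed $\M'$ meets the foundational, precondition, and successor-state axioms.
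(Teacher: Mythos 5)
Your proposal is correct, and it is essentially the canonical argument: the paper itself gives no proof of this proposition, importing it verbatim as Theorem~1 of \cite{PR99}, and your construction — trivial left-to-right direction, then, for the converse, normalizing a model of $D_{una}\cup D_{S_0}$ via Lemma~\ref{Lemma_ModelPropertyofUniformTheories} to a free situation tree satisfying $\Si$, extending fluents by simultaneous recursion on situation depth so that the successor state axioms hold by construction, and finally defining $Poss$ from the right-hand sides of the precondition axioms — is exactly the relative-satisfiability argument of Pirri and Reiter. The points you flag as needing care (non-circularity of the recursion because the right-hand side of $(\dagger)$ is uniform in $s$, and the fact that interpreting $Poss$ last disturbs nothing since $Poss$ is excluded from uniform formulas) are indeed the only delicate steps, and you handle them correctly.
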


Suppose $\A_1,\cdots,\A_n$ is a sequence of ground action
terms, and $\varphi(s)$ is a formula with one free variable $s$ of
sort situation which is uniform in $s$. One of the most important
reasoning tasks in the situation calculus is the \textit{projection
problem}: that is, to determine whether

\centerline{$\DD\models \varphi(do(\A_n, do(\A_{n-1},
do(\cdots,do(\A_1,S_0))))).$}

\noindent Informally, $\varphi$ represents some property of
interest and  entailment holds iff this property is true in the
situation resulting from performing the sequence of actions
$\A_1,\cdots,\A_n$ starting from $S_0$.

Another basic reasoning task is the \textit{executability
problem}. Let

\centerline{$executable(do(\A_n, do(\A_{n-1},
do(\cdots,do(\A_1,S_0)))))$}

\noindent be an abbreviation of the formula \\
\centerline{
   $Poss(\A_1,S_0)\nd \bigwedge_{i=2}^{n}Poss(\A_i, do(\A_1, do(\cdots,do(\A_{i-1},S_0))).$
} Then, the executability problem is to determine whether
\\
\centerline{ $\DD\models executable(do(\A_n, do(\A_{n-1},
do(\cdots,do(\A_1,S_0)))))$, }

\noindent i.e. whether it is possible to perform the sequence of
actions starting from $S_0$.

Planning and high-level program execution are two important
settings, where the executability and projection problems arise
naturally. \textit{Regression} is a central computational
mechanism that forms the basis for an automated solution to the
executability and projection tasks in the situation calculus
(\cite{Reiter2001}). Regression requires reasoning backwards: a
given formula

\centerline{$\varphi(do(\A_n, do(\A_{n-1},
do(\cdots,do(\A_1,S_0)))))$}

\noindent
is recursively transformed into a logically equivalent formula by
using SSAs until the resulting formula has only occurrences of
the situation term $S_0$. It is easy to see that regression
becomes computationally intractable if the sequence of actions
grows indefinitely \cite{GuSoutchanski2010}. In this case, an alternative
to regression is progression, which provides forward-style
reasoning. The initial theory $\DD_{S_0}$ is updated to take into
account the effects of an executed action. Computing the progression of a
given theory $\DD_{S_0}$ requires forgetting facts in $\DD_{S_0}$
which are no longer true after executing an action. The closely
related notions of progression and forgetting are discussed in the
next sections.

\begin{de}[local-effect SSA and
$\cal{BAT}$]\label{Def_LocallEffect} An SSA $\varphi\in\DD_{ss}$
for the fluent $F$ is called \emph{local-effect} if the set of arguments of
every action function in active position of $\varphi$ contains all
object variables from $F$. A basic action theory is said to be
local-effect if every axiom of $\DD_{ss}$ is a local-effect SSA.
\end{de}


Local-effect $\cal{BAT}$s are a well-known\footnote{The phrase \textit{local-effect} 
actions first appeared in \cite{LiuLevesqueIJCAI2005}, but it was motivated by actions 
with \textit{simple effects} defined in the paper \cite{LinKR04}, where simple
effects are understood similar to the Def.~\ref{Def_LocallEffect}.}
class of theories, for which the operation of progression (Section \ref{Section_Progression})
can be computed effectively  \cite{ProgressionLocalEffect}, without regard to
decidability of the underlying theory $\DD_{S_0}$. They are
special in the sense that the truth value of each fluent defined
by a local-effect SSA can change only for objects explicitly named as arguments
of the executed action. Therefore, in local-effect $\cal{BAT}$s, each action can change
only finitely many ground fluent atoms. This allows for computing forgetting (the
operation considered in Section \ref{Section_Forgetting}) efficiently. 
Informally speaking, forgetting erases from $\DD_{S_0}$ those finitely many 
fluent atoms which changed after executing an action. 

\medskip

\textbf{Example \ref{BWonly} (continuation)}.  Observe that in 
the Blocks World example considered above, the action $move$ has only 
local effects on the fluents $On$ and $Clear$. As an informal example of
an action that has global effects, consider the action $drive(t,l_1,l_2)$ of 
driving a truck $t$ loaded with boxes from one location $l_1$ to another 
location $l_2$. Consider also the fluent  $At(x,l,s)$ that holds if an object 
$x$  is at a location $l$ in $s$. Observe that this action would have a global
effect on location of all boxes loaded on the truck since these boxes are not
named explicitly in the action function $drive(t,l_1,l_2)$, but the SSA for
$At(x,l,s)$ would have a $\forall$-quantifier over the object argument $x$. 
Therefore, the truth value of $At(x,l,s)$ changes not only for $t$, but also for
other objects not mentioned in $drive(t,l_1,l_2)$. It would be awkward  to include all 
the boxes loaded in $t$ as arguments of this action. For this reason, axioms for 
the logistics domain should include actions with global effects on the fluents.

\medskip

Before we proceed to a discussion of component properties under forgetting 
(Section \ref{Section_Forgetting}) and to progression of initial theories 
(Section \ref{Section_Progression}), 
we consider an example that helps to illustrate the notion of $\cal{BAT}$ and the advantages of decomposition of its initial theory. Our example combines
the simplified Blocks World (BW) with a kind of Stacks World.
A complete axiomatization of BW modelled as a finite collection
of finite chains can be found in \cite{CookLiuJLC2003}. In this example, and
subsequently, we resort to the common situation calculus convention that free 
variables in $\cal{BAT}$ axioms are implicitly taken to be universally quantified at front.

\begin{example}[A running example of $\cal{BAT}$]\label{BWexample}
{\rm 	
The blocks-and-stacks-world consists of a finite set of blocks and a finite set of other entities. 
Blocks can be located on top of each other, while other entities can be either in a heap of unlimited capacity, or can be organized in stacks. There is an unnamed manipulator that can
move a block from one block to another, provided that there is nothing on the top of the blocks. It can also put an entity from the heap upon a stack with a named top element, or move the top element of a stack into the heap. For stacking/unstacking operations we adopt the push/pop terminology and use the unary predicate $Block$ to distinguish between blocks and other entities. We use the following action functions and relational fluents to axiomatize this example as a local-effect $\cal{BAT}$ in SC.

\smallskip

\noindent{\bf Actions}\nopagebreak
\samepage
\vspace*{-1.5mm}
\begin{itemize}
\setlength{\itemsep}{0.0\itemsep} 
\item
     $move(x,y,z)$: Move block $x$ from block $y$ onto block $z$, provided
     both $x$ and $z$ are clear.
\item
	\!$\!push(x,\!y)$: Stack entity $x$ from the heap on top of entity~$y$.
\item
	$pop(x)$: Unstack entity $x$ into the heap, 
	provided $x$ is the top element and is not in the heap.
\end{itemize}

\noindent{\bf Fluents}
\vspace*{-1.5mm}
\begin{itemize}
\setlength{\itemsep}{0.0\itemsep} 
\item
  $On(x,z,s)$: Block $x$ is on block $z$, in situation $s$.
\item
  $Clear(x,s)$: Block $x$ has no other blocks on top of it in  $s$.
\item
  $Top(x,s)$: Entity $x$ is the top element of a stack in $s$.
\item 
  $Inheap(x,s)$:  Entity $x$ is in the heap in situation $s$.
\item 
  $Under(x,y,s)$: Entity $y$ is directly under $x$ in a stack in situation $s$.
\end{itemize}

The sub-theories of the basic action theory are defined as follows.

\medskip

\noindent{\bf Successor state axioms (theory $\DD_{ss}$)}
\nopagebreak
\samepage

$
\begin{array}{l}
\hspace{-0.5cm}
On(x,z,do(a,s)) \liff \ \exists y (a\!=\! move(x,y,z)) \lor On(x,z,s)\land 
     \neg \exists y ( a\! =\! move(x,z,y))\\[1ex]
\hspace{-0.5cm}
Clear(x,do(a,s)) \liff\ \exists y, z ( a\!=\!move(y,x,z) 
\land\\ 
\hspace{3.8cm} On(y,x,s) ) \lor\, Clear(x,s)\!\land
 \neg \exists y,\!z (a\! =\! move(y,z,x))\\[1ex]
\hspace{-0.5cm}
Inheap(x,do(a,s)) \liff \ a\! =\! pop(x) \lor
       Inheap(x,s) \land \neg \exists y (a\! =\! push(x,y) )\\[1ex]
\hspace{-0.5cm}
Top(x,do(a,s)) \liff\  \exists y 
(\, a\!=\!push(x,y)\, ) \lor
\exists y (\, a\!=\!pop(y)\land Under(y,x,s)\, ) \ \lor \\
\hspace{4.9cm}
Top(x,s) \land  a\! \not=\! pop(x)\land \neg \exists y (a\!=\!push(y,x))\\[1ex]
\hspace{-0.5cm}
Under(x,y,do(a,s)) \liff \ a\!=\! push(x,y)\ \lor
	Under(x,y,s)\land a\! \not=\! pop(x)
\end{array}
$

\medskip

The first axiom is saying that a block $x$ will be on top of a block $z$
after moving $x$ from another block $y$ onto $z$, or if $x$ was already on $z$
and it was not moved elsewhere. The second axiom is saying that $x$ will become
clear, i.e., there will be no blocks on top of $x$ after moving the block $y$
that was previously on top of $x$ onto another block $z$. Otherwise, if a block $x$
was already clear, it remains clear unless some block $y$ will be moved from 
the block $z$ onto the block $x$. The third axiom asserts that an entity $x$ is
in a heap once it has been removed from a stack, or if it was already in a heap, 
and it was not stacked on top of another entity $y$. In the fourth axiom, when
an entity $x$ is stacked upon an entity $y$, $x$ become the new top. 
Also, it becomes the top, when $x$ was located under some $y$ that was removed
into a heap. Otherwise, an entity $x$ remains on the top unless it was unstacked
or buried under by stacking another entity $y$ onto $x$. In the last fifth axiom,
an entity $y$ will be under another entity $x$ after stacking $x$ on top of $y$,
or $y$ remains under $x$ after any action that does not remove $x$ into a heap.
It is easy to observe that all these SSAs are local-effect, and we will exploit this fact later in our paper.

\medskip

\noindent{\bf Action precondition axioms (theory $\DD_{ap}$)}

$
\hspace{-0.5cm}
\begin{array}{l}
Poss(move(x,y,z),s) \liff \  Block(x)\land Block(y)\land 
Block(z)\land On(x,y,s)\land \\ \hspace{6.5cm} Clear(x,s)\land Clear(z,s)\land x \not= z
\end{array}\\[0.5ex]
\hspace{-0.5cm}
\begin{array}{l}
Poss(push(x,y),s) \liff \neg Block(x)\land \neg Block(y)\land
Top(y,s)\land Inheap(x,s)
\end{array}\\[0.5ex]
\hspace{-0.5cm}
\begin{array}{l}
Poss(pop(x),s) \liff \neg Block(x)\land Top(x,s) \\ 
\end{array}
$\\

The precondition axioms are self-explanatory. The action $move(x,y,z)$ is possible
in any situation $s$ where a block $x$ is located on top of a block $y$, both
$x$ and a destination block $z$ are clear  (i.e., not obstructed by any blocks
on top of them) and $x$ is different from $z$. The last condition precludes 
moving $x$ on top of itself. According to the second precondition axiom, it is
possible to stack $x$ on $y$ in any situation $s$, if $x$ and $y$ are entities
which are not blocks, $y$ is the top of a stack, and $x$ is in a heap.
The opposite operation of unstacking $x$ is possible if and only if $x$ is a
top entity in situation $s$.

\medskip

\noindent{\bf Initial Theory ($\DD_{S_0}$)} 
is defined as the set of axioms\footnote{Some of these axioms, e.g., 
the second axiom, remain true after executing any of the possible actions, but 
this fact is irrelevant to the purposes of this example.}  using object constants $\{A,B,C\}$: 

\smallskip

$\begin{array}{l}
\neg\exists y  On(y,x,S_0)\land \exists y  On(x,y,S_0)\land \neg Inheap(x,S_0)\!\rightarrow\! Clear(x,S_0)\\
\exists y \ On(x,y,S_0)\rightarrow Block(x) \\ 
(Top(x,S_0)\lor Inheap(x,S_0))\rightarrow\neg Block(x)\\
On(A,B,S_0)\!\land Block(B)\!\land Block(C)\!\land Clear(A,S_0)\!\land Clear(C,S_0)
\end{array}$

\medskip

\noindent{\bf Unique names axioms for actions and objects (theory
$\DD_{una}$)} is the set of unique names axioms for all pairs of object constants
and action functions used above.

\medskip

\noindent Then $\Si\cup\DD_{una}\cup\DD_{ap}\cup\DD_{ss}\cup\DD_{S_0}$ is
the resulting local-effect basic action theory.

\medskip

Notice that all fluents are syntactically related in $\DD_{S_0}$, so purely
syntactic techniques fail to decompose $\DD_{S_0}$ into components sharing no fluents.
However, $\DD_{ss}$ is the union of two theories with the intersection of 
signatures equal to $\{do\}$. The set of precondition axioms is also union of
two theories -- the first axiom by itself is one of them, and the conjunction of
the second and third axioms is another one -- with the intersection of
signatures equal to $\{Poss,Block\}$.  At the same time, the initial theory 
$\DD_{S_0}$ is $\D$--decomposable for $\D=\{Block,S_0\}$ into 
two distinct $\D$--inseparable components:

\medskip

\noindent
$
\begin{array}{l}	
\neg\exists y \  On(y,x,S_0)\land \exists y \  On(x,y,S_0)\rightarrow Clear(x,S_0)\\
\exists y \ On(x,y,S_0)\rightarrow Block(x) \\ 
On(A,B,S_0)\!\land Block(B)\!\land Block(C)\!\land Clear(A,S_0)\!\land Clear(C,S_0)
\end{array}
$

\smallskip

and

\smallskip

\noindent$\begin{array}{l}	
(Top(x,S_0)\lor Inheap(x,S_0))\rightarrow\neg Block(x)\\
\exists x \ Block(x)
\end{array}$
} 
\end{example}

This example is continued after Theorem \ref{Teo_PreserveCompLocalEffect} in Section \ref{Section_Progression}, where we will show that the progression for $\cal{BAT}$s of this kind preserves both decomposability and inseparability of the decomposition components.


\section{Properties of Forgetting}\label{Section_Forgetting} There are two basic types of forgetting considered in the literature: forgetting a signature and forgetting a ground atom. As will be explained in Section \ref{Section_Progression}, progression of $\mathcal{BAT}$s is closely related to forgetting. In particular, computing progression of a local-effect $\mathcal{BAT}$ involves forgetting a set of ground atoms representing facts that are no longer true after an action execution. Thus, in order to understand the behavior of the component properties of theories under progression, one needs to first examine their relationship to forgetting, which is the purpose of this section. Although we are focused on forgetting ground atoms, the counterpart results for signature forgetting often come for free and are therefore included into this section. Moreover, they help to see the difference between the two types of forgetting, which contributes to a better understanding of this operation wrt the component properties, which we believe would be of interest to a broader audience in the literature. To emphasize broader applicability of these results, we consider a general first- and second-order logic setting in the remainder of this section. \\

Let us define a relation
on structures as follows. Let $\si$ be a signature or a ground
atom and $\M$, $\M'$ be two many--sorted structures. Then we set
$\M\sim_{\si}\M'$ if:
\begin{itemize}
\item $\M$ and $\M'$ have the same domain for each sort;
\item $\M$ and $\M'$ interpret all symbols which do not occur in $\si$
identically;
\item if $\si$ is a ground atom $P(\bar{t})$ then $\M$ and $\M'$
agree on interpretation $\bar{u}$ of $\bar{t}$ and for every
vector of elements $\bar{v}\neq \bar{u}$,
we have $\M\models P(\bar{v})$ iff $\M'\models
P(\bar{v})$. 
\end{itemize}
Obviously, $\sim_{\si}$ is an equivalence relation.

\medskip

The following notion summarizes the well-known Definitions 1 and 7 in \cite{ForgetIt}.

\begin{de}[Forgetting an atom or signature]\label{De_Forgetting}
Let $\bt$ be a theory in $\LL$ and $\si$ be either a signature, or
some ground atom. A set $\bt'$ of formulas in a fragment of
second-order logic is called the result of \emph{forgetting} $\si$ in
$\bt$ (denoted by $\forget(\bt, \si)$) if for any structure $\M'$,
we have $\M'\models\bt'$ iff there is a model $\M\models\bt$ such
that $\M\sim_\si\M'$.
\end{de}

It is known that $\forget(\bt,$ $\si)$ always exists, i.e. it is second-order
definable, for a finite set of formulas $\bt$ in $\LL$ and a finite
signature or a ground atom $\si$ (see \cite{ForgetIt}, or Section 2.1
in \cite{ProgressionLocalEffect}). On the other hand, the
definition yields $\bt\models\forget(\bt, \si)$; thus,
$\forget(\bt, \si)$ is a set of second-order consequences of $\bt$ which
suggests that it may not always be definable in the logic, where
$\bt$ is formulated, and it may not be finitely axiomatizable in
this logic, even if $\bt$ is so.

\begin{fact}[Basic properties of
forgetting]\label{Fact_BasicPropertiesOfForgetting} If $\si$ and
$\pi$ are signatures or ground atoms and $\bt$, $\bt'$ are
theories in $\LL$ then:
\begin{itemize}
\item $\forget(\bt, \si\cup\pi)\equiv\forget(\forget(\bt,
\si),\pi)$ \ (if $\si$ and $\pi$ are signatures) 
\item $\forget(\forget(\bt, \si),\pi)\equiv\forget(\forget(\bt,
\pi),\si)$ 
\item $\forget(\forget(\bt, \si), \si)\equiv
\forget(\bt, \si)$ 
\item $\forget(\bt, \si)\equiv\bt$ \ if $\si$ is
a signature with $\si\cap\sig(\bt)=\varnothing$, or a ground atom
with predicate not contained in $\sig(\bt)\ $ 
\item $\forget(\bt\cup\bt',\si)\not\equiv\forget(\bt,\si)\cup\forget(\bt',\si)$
(see Example \ref{Ex_ComponentwiseForgetting}) 
\item $\forget(\varphi\vee\psi, \si)\equiv \forget(\varphi,
\si)\vee\forget(\psi,\si)$ \ (if $\varphi,\psi$ are formulas in $\!\LL$).
\end{itemize}
\end{fact}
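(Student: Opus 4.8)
The plan is to derive every item directly from the model-theoretic definition of $\forget$ (Definition~\ref{De_Forgetting}), reading forgetting as a closure operation: by that definition, $\M'\models\forget(\bt,\si)$ holds exactly when $\M'$ is $\sim_\si$-related to some model of $\bt$, so the class of models of $\forget(\bt,\si)$ is precisely the $\sim_\si$-closure of the class of models of $\bt$. With this dictionary in hand, each claimed equality among forgetting results becomes an elementary statement about the relation $\sim_\si$, and I would prove every equivalence by checking that the two theories have the same models.

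I would dispatch the three simple items first. For the disjunction law I would use that the model class of $\varphi\orr\psi$ is the union of those of $\varphi$ and $\psi$, and that forming the $\sim_\si$-closure commutes with unions (the image of a union under a relation is the union of the images); unwinding the definition then yields $\M'\models\forget(\varphi\orr\psi,\si)$ iff $\M'\models\forget(\varphi,\si)$ or $\M'\models\forget(\psi,\si)$. For idempotence, $\forget(\forget(\bt,\si),\si)\equiv\forget(\bt,\si)$, I would invoke that $\sim_\si$ is an equivalence relation, as already noted: reflexivity gives one inclusion and transitivity the other. For the ``irrelevant target'' law I would observe that whenever $\si\cap\sig(\bt)=\varnothing$ (signature case), or $\si=P(\bar t)$ with $P\notin\sig(\bt)$ (atom case), any pair $\M\sim_\si\M'$ agrees on the interpretation of every symbol of $\sig(\bt)$ — the only permitted changes lie outside $\sig(\bt)$ — hence $\M\models\bt$ iff $\M'\models\bt$; so the $\sim_\si$-closure of the model class of $\bt$ equals that class itself, using reflexivity for the nontrivial inclusion.

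The two composition laws require a small relational lemma. For the signature-splitting law I would prove $\sim_{\si\cup\pi}\ =\ \sim_\si\circ\sim_\pi$ for signatures $\si,\pi$: the inclusion $\supseteq$ is immediate, since agreeing outside $\si$ and then outside $\pi$ forces agreement outside $\si\cup\pi$; for $\subseteq$, given $\M\sim_{\si\cup\pi}\M'$ I would exhibit the intermediate structure $\M''$ that copies $\M'$ on the symbols of $\si$ and copies $\M$ everywhere else, and check $\M\sim_\si\M''\sim_\pi\M'$. Pushing this identity through the closure dictionary gives $\forget(\bt,\si\cup\pi)\equiv\forget(\forget(\bt,\si),\pi)$. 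The commutativity law then follows for signatures from this identity together with $\si\cup\pi=\pi\cup\si$; for ground atoms it follows because $\sim_\si$ and $\sim_\pi$ change only the truth values of their respective predicates at two fixed tuples — the interpretations of the terms are frozen by the ground-atom clause of $\sim$, so the two loci of change do not depend on the order of forgetting and $\sim_\si\circ\sim_\pi=\sim_\pi\circ\sim_\si$. Finally, the non-distributivity item is a negative claim and needs no derivation: it is witnessed by Example~\ref{Ex_ComponentwiseForgetting}, to which I would simply refer.

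The step I expect to be most delicate is the commutativity law. The clean reduction is to the relational identity $\sim_\si\circ\sim_\pi=\sim_\pi\circ\sim_\si$, but this genuinely relies on $\si$ and $\pi$ being of the same kind: if one were to forget a constant occurring in the term of the other's ground atom, the position of the atom-toggle would shift when that constant is reinterpreted, and the two orders of forgetting would cease to agree. Consequently I would state and use this commutation only for two signatures or for two ground atoms — the intended scope — and in the ground-atom case I would be careful to record that the term interpretations are held fixed precisely so that the two toggled positions are order-independent.
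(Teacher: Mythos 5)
Your proof is correct, and it takes the route the paper itself gestures at: the paper offers no argument beyond ``these follow immediately from the definition, or from the results in Lin--Reiter,'' and your closure dictionary (the models of $\forget(\bt,\si)$ are exactly the $\sim_\si$-closure of the models of $\bt$) is precisely how one makes that remark rigorous. Each item checks out: the relational identity $\sim_{\si\cup\pi}=\sim_\si\circ\sim_\pi$ for signatures, with the intermediate structure copying $\M'$ on $\si$ and $\M$ elsewhere; idempotence from $\sim_\si$ being an equivalence relation; the irrelevant-target law from the observation that the only reinterpretable symbol lies outside $\sig(\bt)$ (in the ground-atom case this is the predicate alone, since the term's symbols are frozen); commutation of closure with unions for the disjunction law; and the pointer to Example \ref{Ex_ComponentwiseForgetting} for the negative item.

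The one place where you genuinely depart from the paper is your restriction of the commutation law to two signatures or two ground atoms, and this is not excessive caution but a needed correction of the Fact as literally stated. The paper's preamble admits mixed pairs, and there the claim fails for exactly the reason you name: take $\bt=\{\forall x\,P(x)\}$, $\si=\{c\}$, $\pi=P(c)$. Forgetting $P(c)$ first gives $\forall x\,(x\neq c\rightarrow P(x))$, and forgetting $\{c\}$ afterwards yields $\exists y\,\forall x\,(x\neq y\rightarrow P(x))$; in the other order, forgetting $\{c\}$ leaves $\bt$ unchanged (as $c\notin\sig(\bt)$), and forgetting $P(c)$ then gives $\forall x\,(x\neq c\rightarrow P(x))$. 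These two results are not equivalent: reinterpreting $c$ moves the point at which the atom-toggle is permitted, so the composed relations differ. Since the paper only ever invokes this Fact for same-kind pairs (iterated forgetting of the ground atoms in $\Omega(S_0)$, and of signature symbols), your restricted statement covers every actual use and is the version that is true.
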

These properties either follow immediately from the definition, or from 
the results proven in \cite{ForgetIt}.

\begin{prop}[Signature of forget($\bt$,$\si$)]\label{Prop_SignatureOfForgetting}
Let $\bt$ be a theory in $\LL$, $\si$ a signature (or a ground
atom) and let $\forget(\bt,\si)$ be a set of
formulas in a language $\LL'$, a fragment of second-order logic
with PIP. Then $\forget(\bt,\si)$ is logically equivalent in
$\LL'$ to a set of formulas in the signature $\sig(\bt)\setminus\si$
($\sig(\bt)$, respectively).
\end{prop}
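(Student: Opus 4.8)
The plan is to prove both cases uniformly by showing that $\forget(\bt,\si)$ is logically equivalent in $\LL'$ to its own set of $\D$-consequences $\Cons(\forget(\bt,\si),\D)$, where $\D=\sig(\bt)\setminus\si$ in the signature case and $\D=\sig(\bt)$ in the ground-atom case. Since $\Cons(\forget(\bt,\si),\D)$ consists of $\LL'$-formulas in the signature $\D$ and is entailed by $\forget(\bt,\si)$, only the converse entailment requires work. I will obtain it in two steps: first, a purely model-theoretic observation that the model class of $\forget(\bt,\si)$ is closed under arbitrary reinterpretation of the symbols outside $\D$; and second, the standard fact that, in a logic with interpolation (equivalently PIP, cf.\ the remark after Definition~\ref{Def_PIP}), an axiomatizable class with this closure property is axiomatizable already in the sublanguage $\D$.

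For the closure step in the signature case, recall from Definition~\ref{De_Forgetting} that $\M'\models\forget(\bt,\si)$ iff $\M\sim_\si\M'$ for some $\M\models\bt$. Suppose $\M'\models\forget(\bt,\si)$ via such an $\M$, and let $\N'$ be any structure with the same domains as $\M'$ that agrees with $\M'$ on every symbol of $\D$ (so $\N'$ may reinterpret the symbols of $\si$ and the symbols outside $\sig(\bt)$ arbitrarily). Define $\N$ to interpret the symbols of $\si$ as $\M$ does and all remaining symbols as $\N'$ does. Then $\N\sim_\si\N'$ by construction, and $\N$ agrees with $\M$ on all of $\sig(\bt)$: on $\si\cap\sig(\bt)$ by the choice of $\N$, and on $\D$ because $\N$ copies $\N'$, which agrees with $\M'$, which in turn agrees with $\M$ off $\si$. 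Since symbols outside $\sig(\bt)$ do not affect satisfaction of $\bt$, we get $\N\models\bt$, whence $\N'\models\forget(\bt,\si)$. The ground-atom case $\si=P(\bar t)$ is analogous: if $P\notin\sig(\bt)$ it is trivial, as $\forget(\bt,P(\bar t))\equiv\bt$ by Fact~\ref{Fact_BasicPropertiesOfForgetting}; if $P\in\sig(\bt)$ (with $\bar t$ over $\sig(\bt)$, as in our applications) the same construction works, the point being that $\sim_{P(\bar t)}$ relaxes only the value of $P$ at the single tuple denoted by $\bar t$, whose interpretation is preserved because $\N$ and $\M$ agree on the function symbols of $\bar t$.

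It remains to pass from this semantic closure to $\D$-definability. Writing $T=\forget(\bt,\si)$, I take any $\N\models\Cons(T,\D)$ and argue $\N\models T$. By compactness, $T\cup\mathrm{Th}_\D(\N)$ is satisfiable: otherwise a finite subset of $\mathrm{Th}_\D(\N)$ would have a negation that is a $\D$-sentence entailed by $T$, hence a member of $\Cons(T,\D)$ false in $\N$, a contradiction. A model $\M$ of $T\cup\mathrm{Th}_\D(\N)$ satisfies $\M|_\D\equiv\N|_\D$. Using elementary amalgamation of the $\D$-reducts and realizing the amalgam as a common $\D$-reduct of elementary extensions $\M^*\succeq\M$ and $\N^*\succeq\N$ on a shared domain, we obtain $\M^*|_\D=\N^*|_\D$ with $\M^*\models T$; the closure established above then yields $\N^*\models T$, and since $T$ is a set of sentences, $\N\models T$. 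Thus $T\equiv\Cons(T,\D)$ in $\LL'$, which is the desired equivalence to a set of $\D$-formulas.

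I expect the second step to be the main obstacle. The definition of forgetting only guarantees closure under structures with \emph{identical} $\D$-reducts, whereas the consequences $\Cons(T,\D)$ merely pin down the $\D$-reduct up to elementary $\D$-equivalence. Bridging this gap is precisely what interpolation (equivalently PIP), together with compactness and amalgamation, provides; in logics lacking these properties the statement can fail, which is why the hypothesis restricts $\LL'$ to a fragment of second-order logic enjoying PIP. A secondary caveat, already noted above, is that in the ground-atom case the claimed signature $\sig(\bt)$ is accurate only when the terms $\bar t$ of the forgotten atom are built from $\sig(\bt)$; otherwise one must add the extra symbols of $\bar t$, though this is harmless in the progression setting where the forgotten fluent atoms range over the vocabulary of $\bt$.
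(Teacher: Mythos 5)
Your first step (the closure of the model class of $\forget(\bt,\si)$ under arbitrary reinterpretation of the symbols outside $\D$) is correct, and it is essentially the same semantic observation on which the paper's proof rests. The genuine gap is in your second step. There you invoke compactness (to satisfy $T\cup\mathrm{Th}_\D(\N)$), Robinson-style elementary amalgamation of the $\D$-reducts, and preservation of $T$ under passing from $\N^*$ down to its elementary substructure $\N$. None of these is granted by the hypotheses: the proposition assumes only that $\LL'$ is a fragment of second-order logic with PIP, and PIP does not imply compactness or joint consistency (the paper's remark that PIP is iterated Craig interpolation holds only \emph{in logics that already have} compactness and the deduction theorem, not conversely). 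This is not a technicality one can wave away, because the paper's principal use of this proposition is exactly the case $\LL'=$ second-order logic --- forgetting is in general only second-order definable --- and second-order logic has PIP but fails compactness and Robinson consistency. Moreover, even granting the amalgamation, your final transfer ``$\N^*\models T$, hence $\N\models T$'' silently assumes that the (first-order) elementary substructure relation preserves the formulas of $T$; if $T$ contains genuinely second-order sentences this is false, and an $\LL'$-elementary-extension notion strong enough to fix it is again not supplied by PIP.

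The paper closes the same gap with an argument that needs nothing beyond PIP and the closure property: form a copy $\forget(\bt,\si)^*$ of $T=\forget(\bt,\si)$ in which every symbol of $\si\cup[\sig(T)\setminus\sig(\bt)]$ is uniquely renamed to a fresh symbol. Your closure argument then shows $\forget(\bt,\si)^*\models_{\LL'} T$ (a model of the copy, reinterpreted back, witnesses a model of $\bt$ up to $\sim_\si$). Since $\sig(\forget(\bt,\si)^*)\cap\sig(T)\subseteq\sig(\bt)\setminus\si$, PIP applied to this single entailment produces a set $\Theta$ of formulas in the signature $\sig(\bt)\setminus\si$ with $\forget(\bt,\si)^*\models\Theta$ and $\Theta\models T$; and since every model of $T$ expands to a model of the copy (interpret each fresh symbol as its original), one also gets $T\models\Theta$, so $T\equiv\Theta$. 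If you replace your compactness-and-amalgamation step with this renaming argument, your write-up becomes a correct proof of the statement in its intended generality; your concluding caveat about the extra symbols of $\bar t$ in the ground-atom case is a fair, minor observation and is independent of this issue.
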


\begin{proof} 
We consider the case when $\si$ is a signature; the case of
a ground atom being proved analogously. Assume that
$\si\cap\sig(\forget(\bt,\si))\neq\varnothing$. Denote by
$\forget(\bt,\si)^*$ a ``copy" of the set of formulas
$\forget(\bt,\si)$, where each symbol from
$\si\cup\allowbreak [\sig\allowbreak(\forget\allowbreak(\bt,\allowbreak \si))\setminus\allowbreak \sig(\bt)]$ 
is uniquely replaced with a fresh symbol, not present in
$\sig(\forget\allowbreak(\bt,\si))$. We claim that
$\forget(\bt,\si)^*\models_{\LL'}\forget(\bt,\si)$. There is
nothing to prove if $\forget(\bt,\allowbreak \si)^*$ is unsatisfiable. Note
that, by definition of forgetting, $\forget(\bt,\si)^*$ and
$\forget(\bt,\allowbreak \si)$ are satisfiable iff $\bt$ is. Let us assume
that $\bt$ is satisfiable. Take an arbitrary model
$\M^*\models\forget(\bt,\si)^*$; then there exists a model
$\M'\models\forget(\bt,\allowbreak\si)$ which agrees on
$\sig(\forget(\bt,\si)^*)$ with $\M^*$ and interprets symbols from
$\si\cup[\sig(\forget(\bt,$ $\si))\setminus\sig(\bt)]$ equally to
the interpretation of the corresponding fresh symbols in $\M^*$.
Therefore, we may assume that $\M^*\sim_{\si}\M'$. By definition
of forgetting, there is a model $\M\models\bt$ such that
$\M'\sim_{\si}\M$, hence $\M^*\sim_{\si}\M$ and
$\M^*\models\forget(\bt,\si)$. We have
$\forget(\bt,\si)^*\models_{\LL'} \forget(\bt,\si)$ and
$\sig(\forget(\bt,\si)^*)\cap\allowbreak  \sig($ $\forget(\bt,\allowbreak \si))$
$\subseteq\sig(\bt)\setminus\si$. By PIP, there is a set of
formulas $\Theta$ in signature $\sig(\bt)\setminus\si$ such that
$\forget(\bt,\si)^*\models_{\LL'}\Theta$ and $\Theta\models_{\LL'}
\forget(\bt,\si)$. Note that $\forget(\bt,\si)^*$
$\models_{\LL'}\Theta$ yields
$\forget(\bt,\si)\models_{\LL'}\Theta$, because every model of
$\forget(\bt,\si)$ can be expanded to a model of
$\forget(\bt,\si)^*$ and the reduct of this model onto (a subset
of) $\sig(\bt)\setminus\si$ suffices to satisfy $\Theta$. Thus, we
conclude that $\forget(\bt,\si)$ is equivalent to $\Theta$. 
\end{proof}

\begin{cor}
Let $\bt$ be a theory in $\LL$ having PIP and $\si$ a
signature. Then $\bt\equiv\forget(\bt,\si)$ iff $\bt$ is
equivalent to a set of formulas in the signature
$\sig(\bt)\setminus\si$.
\end{cor}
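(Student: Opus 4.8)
The plan is to prove the two implications of the biconditional separately. The direction that assumes $\bt$ is equivalent to a set of formulas in the signature $\sig(\bt)\setminus\si$ is elementary and comes straight from the model-theoretic Definition~\ref{De_Forgetting} of forgetting; the converse direction, which assumes $\bt\equiv\forget(\bt,\si)$, is where the hypothesis that $\LL$ has PIP enters, and I would reduce it to the already-established Proposition~\ref{Prop_SignatureOfForgetting}.

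For the ``if'' direction, suppose $\bt\equiv\Gamma$ for some set $\Gamma$ of $\LL$-formulas with $\sig(\Gamma)\subseteq\sig(\bt)\setminus\si$. As already noted in the text, $\bt\models\forget(\bt,\si)$ always holds (it follows by reflexivity of $\sim_\si$ from Definition~\ref{De_Forgetting}), so I only need the reverse entailment. First I would take any structure $\M'\models\forget(\bt,\si)$; by definition there is a model $\M\models\bt$ with $\M\sim_\si\M'$. Since $\si$ is a signature, $\M\sim_\si\M'$ means $\M$ and $\M'$ share all sort domains and interpret every symbol outside $\si$ identically, in particular every symbol of $\sig(\Gamma)\subseteq\sig(\bt)\setminus\si$. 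As $\M\models\Gamma$ and the truth of $\Gamma$ depends only on the domains and the interpretation of $\sig(\Gamma)$, I conclude $\M'\models\Gamma$, hence $\M'\models\bt$. This yields $\forget(\bt,\si)\models\bt$, and combined with the trivial entailment gives $\bt\equiv\forget(\bt,\si)$.

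For the ``only if'' direction, suppose $\bt\equiv\forget(\bt,\si)$. The key observation is that this equivalence places $\forget(\bt,\si)$ -- which a priori is only second-order definable -- inside $\LL$ up to semantic equivalence, since it is equivalent to the $\LL$-theory $\bt$. Because $\LL$ is a fragment of second-order logic having PIP, I would apply Proposition~\ref{Prop_SignatureOfForgetting} with $\LL'=\LL$, replacing $\forget(\bt,\si)$ by the equivalent $\LL$-theory $\bt$. The proposition then produces a set $\Theta$ of $\LL$-formulas in the signature $\sig(\bt)\setminus\si$ with $\forget(\bt,\si)\equiv\Theta$. Combining with the hypothesis, $\bt\equiv\forget(\bt,\si)\equiv\Theta$, so $\bt$ is equivalent to $\Theta$, a set of formulas in signature $\sig(\bt)\setminus\si$, as required.

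The only delicate point I anticipate is the legitimacy of invoking Proposition~\ref{Prop_SignatureOfForgetting} with $\LL'=\LL$ in the converse direction: that proposition is phrased for $\forget(\bt,\si)$ viewed as a set of formulas in a fragment $\LL'$ of second-order logic, whereas forgetting is in general only second-order definable and need not lie in $\LL$. What makes the step valid is precisely the hypothesis $\bt\equiv\forget(\bt,\si)$, which lets me substitute the $\LL$-theory $\bt$ for $\forget(\bt,\si)$ throughout, since both the proposition's hypothesis and its conclusion are stated up to semantic equivalence $\equiv$. I expect this substitution to be the main thing requiring care; the remaining steps are routine unwindings of the definitions of forgetting and of the relation $\sim_\si$.
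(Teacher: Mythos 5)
Your proof is correct and follows essentially the route the paper intends: the corollary is stated without an explicit proof as an immediate consequence of Proposition~\ref{Prop_SignatureOfForgetting}, and your ``only if'' direction invokes exactly that proposition with $\LL'=\LL$ (correctly justified by noting that $\bt\equiv\forget(\bt,\si)$ lets $\bt$ itself serve as the result of forgetting, since forgetting is defined only up to semantic equivalence), while your ``if'' direction is the routine unwinding of Definition~\ref{De_Forgetting}. Nothing further is needed.
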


We note that the similar statement does not hold when $\si$ is a
ground atom. It follows from Proposition
\ref{Prop_SignatureOfForgetting} that in case $\si$ is a
signature, $\forget(\bt, \si)$ axiomatizes the class of reducts of
models of $\bt$ onto the signature $\sig(\bt)\setminus\si$. Clearly,
if $\bt$ is a theory in language $\LL$, then $\forget(\bt, \si)$
may not be in $\LL$, however it is always expressible in
second-order logic if $\bt$ is finitely axiomatizable (we note that
second-order logic has PIP). For the
case when $\si$ is a signature, $\forget(\bt, \si)$ is known as
$\sig(\bt)\setminus\si$--uniform interpolant of $\bt$ wrt the language $\LL$ and second-order queries, that is, wrt the pair
($\LL$, second-order logic), see Definition 13 in
\cite{FormalPropMod} and Lemma 39 in \cite{ConsExtEL} for a
justification. In other words, $\bt$ and $\forget(\bt, \si)$
semantically entail the same second-order formulas in signature
$\bt\setminus\si$.

If $\si$ is a ground atom $P(\bar{t})$ then, by definition, for
any model $\M\models\bt$, $\forget(\bt,\allowbreak \si)$ must have two
``copies'' of $\M$: a model with the value of $P(\bar{t})$ false
and a model where this value is true. Let $\LL$ be first-order
logic. In contrast to forgetting a signature, for any recursively
axiomatizable theory $\bt$ in $\LL$ and a ground atom $\si$, one
can effectively construct the set of formulas $\forget(\bt, \si)$
in $\LL$ such that $\forget(\bt, \si)$ is finitely axiomatizable
iff $\bt$ is. This follows from Theorem 4 in \cite{ForgetIt},
where it is shown that forgetting a ground atom $P(\bar{t})$ in a
theory $\bt$ can be computed by simple syntactic manipulations:
\begin{itemize}
\item for an axiom $\varphi\in\bt$, denote by
$\varphi[P(\bar{t})]$ the result of replacing every occurrence of
atom $P(\bar{t'})$ (with $\bar{t'}$ a term) by formula
$[\bar{t}=\bar{t'}\wedge P(\bar{t})]\vee[\bar{t}\neq\bar{t'}\wedge
P(\bar{t'})]$

\item denote by $\varphi^+[P(\bar{t})]$ the formula
$\varphi[P(\bar{t})]$ with every occurrence of the ground atom
$P(\bar{t})$ replaced with \textit{true} and similarly, denote by $\varphi^-[P(\bar{t})]$ the formula $\varphi[P(\bar{t})]$
with $P(\bar{t})$ replaced with \textit{false}

\item then $\forget(\bt, P(\bar{t}))$ is equivalent to
$(\bigwedge_{\varphi\in\bt}\varphi^+[P(\bar{t})]) \ \bigvee \
(\bigwedge_{\varphi\in\bt}\varphi^-[P(\bar{t})]).$
\end{itemize}

The disjunction corresponds to the union of two classes of models
obtained from models of $\bt$, with the ground atom $P(\bar{t})$
interpreted as \textit{true} and \textit{false}, respectively.
This fact is important for effective computation of progression
for local-effect $\cal{BAT}$s mentioned in Section
\ref{Section_Progression}. 

\begin{example}[Forgetting a ground atom]
Consider a theory $\bt=\{\varphi\}$, where $\varphi=\neg P(c)$, i.e., $P(c)$ is false in every model of $\bt$. Consider forgetting $P(c)$ in $\bt$. By the (semantic) definition of forgetting, the set of models of $\forget(\bt, P(c))$ consists of models of $\bt$ and those models, in which $P(c)$ is true. Therefore, any structure is a model of $\forget(\bt, P(c))$. Now consider the syntactic definition of forgetting given above. We have 
\begin{equation*}
\varphi[P(c)]=\neg([c=c \wedge P(c)]\vee [c\neq c\wedge P(c)])
\end{equation*}
thus, $\varphi^+[P(c)]\equiv false$, $\varphi^-[P(c)]\equiv true$ and hence, $\forget(\bt, P(c))\equiv \varphi^+[P(c)] \vee \varphi^-[P(c)]$ is a tautology. 

Now consider forgetting $P(c)$ in the theory $\bt=\{\varphi\}$, where $\varphi=\forall x P(x)$. By the (semantic) definition of forgetting, any structure, in which $P$ is true on every element, except possibly, the interpretation of $c$, is a model of $\forget(\bt, P(c))$. By the syntactic definition of forgetting we have:
\begin{equation*}
\varphi[P(c)]= \forall x \ (\ [x=c\wedge P(c)]\vee [x\neq c\wedge P(x)] \ )
\end{equation*}
thus,  $\varphi^+[P(c)]=\forall x \ (\ x=c \vee [x\neq c \wedge P(x)] \ )$, $\varphi^-[P(c)]=\forall x \ (\  x\neq c \wedge P(x) \ )$. Since $\varphi^-[P(c)]\equiv false$, we obtain $\forget(\bt, P(c))\equiv \varphi^+[P(c)]$ and hence, $\forget(\bt, P(c))\equiv \forall x\ (\ x\neq c\ \rightarrow P(x) \ )$.
\end{example}

We note that in case a theory $\bt$ is
finitely axiomatizable, computing $\forget(\bt, P(\bar{t}))$ in
the way above doubles the size of theory in the worst case, due to the disjunction. It is
sometimes necessary to consider forgetting of some set $S$ of
ground atoms in a theory $\bt$. This is equivalent to iterative
computation of forgetting of atoms from $S$ starting from the theory
$\bt$ (the order on atoms can be chosen arbitrary as noted in Fact
\ref{Fact_BasicPropertiesOfForgetting}). However, it is important
to note that the size of the resulting theory is $O(2^{|S|}\times
|\bt|)$, where $|S|$ is the number of atoms in $S$ and $|\bt|$ is
the size of $\bt$.

\begin{prop}[Interplay of forgetting and entailment]\label{Prop_ForgettingDerivDiagram} Let $\bt$ and
$\bt_1$ be two sets of formulas in $\LL$, with $\bt\models\bt_1$,
and $\si$ be a signature or a ground atom. Then the following
holds:
\begin{center}
\makebox[12cm]{

\put(-50,0){{\large $\bt$}} 

\put(0,0){{\large $\models$}} 

\put(0,-40){{\large $\models$}} 

\put(50,0){{\large $\bt_1$}} 

\put(-50,-20){\rotatebox[origin=c]{270}{{\large $\models$}}} 

\put(50,-20){\rotatebox[origin=c]{270}{{\large $\models$}}}

\put(-90,-40){{\large $\forget(\bt,\si)$}} 

\put(30,-40){{\large $\forget(\bt_1,\si)$}} 

}

\end{center}
\end{prop}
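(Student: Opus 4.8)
The plan is to notice that three of the four entailments in the diagram are immediate, so that the entire content of the proposition sits in the bottom arrow $\forget(\bt,\si)\models\forget(\bt_1,\si)$. The top arrow $\bt\models\bt_1$ is the hypothesis. For the two vertical arrows I would invoke reflexivity of $\sim_{\si}$ (noted to be an equivalence relation just before Definition~\ref{De_Forgetting}): given any $\M\models\bt$ we have $\M\sim_{\si}\M$, so by Definition~\ref{De_Forgetting} $\M\models\forget(\bt,\si)$, whence $\bt\models\forget(\bt,\si)$; the right-hand arrow $\bt_1\models\forget(\bt_1,\si)$ is the same statement with $\bt_1$ in place of $\bt$. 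This is exactly the fact $\bt\models\forget(\bt,\si)$ already recorded after the definition.

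For the bottom arrow I would argue directly at the level of model classes. Let $\M'$ be an arbitrary structure with $\M'\models\forget(\bt,\si)$. By the semantic characterization in Definition~\ref{De_Forgetting}, there is a model $\M\models\bt$ with $\M\sim_{\si}\M'$. Since $\bt\models\bt_1$, every model of $\bt$ is a model of $\bt_1$, so $\M\models\bt_1$ as well. Thus $\M$ is a single structure that both satisfies $\bt_1$ and is $\sim_{\si}$-related to $\M'$; applying Definition~\ref{De_Forgetting} to $\bt_1$ with $\M$ as the witness yields $\M'\models\forget(\bt_1,\si)$. As $\M'$ was arbitrary, $\forget(\bt,\si)\models\forget(\bt_1,\si)$. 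Equivalently, the argument is just monotonicity of the map $\bt\mapsto\forget(\bt,\si)$ on model classes: forgetting sends $\{\M \mid \M\models\bt\}$ to its $\sim_{\si}$-saturation, and saturation preserves inclusions, so $\mathrm{Mod}(\bt)\subseteq\mathrm{Mod}(\bt_1)$ forces $\mathrm{Mod}(\forget(\bt,\si))\subseteq\mathrm{Mod}(\forget(\bt_1,\si))$.

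I expect no genuine obstacle: the proof is a short chain of applications of the semantic definition of forgetting together with the meaning of $\models$. The only point needing a moment's care is that it must go through uniformly whether $\si$ is a signature or a single ground atom; but since both the relation $\sim_{\si}$ and Definition~\ref{De_Forgetting} are stated without reference to which of the two cases holds, no case split is needed and the same argument covers both. I would also remark in passing that $\forget(\bt,\si)$ and $\forget(\bt_1,\si)$ may lie in a fragment of second-order logic rather than in $\LL$, but since the reasoning never leaves the purely model-theoretic setting this causes no difficulty.
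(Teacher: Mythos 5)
Your proof is correct and takes essentially the same route as the paper's: the vertical entailments follow from the definition of forgetting (the paper leaves the reflexivity of $\sim_{\si}$ implicit, which you make explicit), and the bottom entailment is proved by the identical argument of taking $\M'\models\forget(\bt,\si)$, extracting a witness $\M\models\bt$ with $\M\sim_{\si}\M'$, using $\bt\models\bt_1$, and reapplying the definition for $\bt_1$. Your closing "saturation" reformulation is a nice additional gloss but does not change the argument.
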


\begin{proof}
Follow the diagram starting from the top-left column.
 By definition of forgetting, every model of $\bt$ is a
model of $\forget(\bt,\si)$, so we have $\bt\models\forget(\bt,\si)$ as shown 
in the left column of the diagram.  Similarly, $\bt_1\models\forget(\bt_1,\si)$
in the right column of the diagram. To prove entailment at the bottom  we rely
on entailments in the columns and on the given entailment at the top, i.e.,
we navigate the diagram up from the bottom-left corner, then move right, 
and finally go down to the bottom-right expression. Let $\M'$ be an arbitrary
model of $\forget(\bt,\si)$. Then there is a model $\M\models\bt$
such that $\M\sim_\si\M'$. Since $\bt\models\bt_1$, we have
$\M\models\bt_1$, so we conclude that
$\M'\models\forget(\bt_1,\si)$, because $\M$ is a model satisfying
the conditions of Definition \ref{De_Forgetting} for $\bt_1$ and $\M'$. 
Thus, we proved entailment shown in the bottom row of the diagram. 
\end{proof}

\begin{prop}[Preservation of consequences under forgetting]\label{Prop_ConsecPreservationUnderForgetting}
Let $\bt$ be a theory in $\LL$ and $\si$ be either a signature or
a ground atom. Let $\varphi$ be a formula such that either
$\sig(\varphi)\cap\si=\varnothing$ (in case $\si$ is a signature),
or which does not contain the predicate from $\si$ (if $\si$ is a
ground atom). Then $\bt\models\varphi$ iff $\forget(\bt,
\si)\models\varphi$.
\end{prop}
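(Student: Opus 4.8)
The plan is to argue entirely at the semantic level, using the characterization of forgetting in Definition~\ref{De_Forgetting} together with the standard coincidence lemma of first- and second-order logic: the truth value of a formula in a structure depends only on the domain and on the interpretation of the symbols actually occurring in it. Once this is isolated, the proposition becomes almost immediate; the hypothesis on $\sig(\varphi)$ is precisely what is needed to invoke the coincidence lemma across the relation $\sim_\si$.

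First I would dispose of the easy implication, which requires no assumption on $\varphi$. By Definition~\ref{De_Forgetting} every model of $\bt$ is a model of $\forget(\bt,\si)$, so $\bt\models\forget(\bt,\si)$ (this is exactly the left column of the diagram established in the proof of Proposition~\ref{Prop_ForgettingDerivDiagram}). Hence if $\forget(\bt,\si)\models\varphi$, then $\bt\models\varphi$ by transitivity of entailment.

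For the converse I would assume $\bt\models\varphi$ and take an arbitrary model $\M'\models\forget(\bt,\si)$. By Definition~\ref{De_Forgetting} there is a model $\M\models\bt$ with $\M\sim_\si\M'$, and since $\bt\models\varphi$ we have $\M\models\varphi$. It then remains to transfer satisfaction from $\M$ to $\M'$, and this is where the hypothesis on $\varphi$ enters. By the definition of $\sim_\si$, the two structures share the same domain for every sort and agree on the interpretation of every symbol outside $\si$: when $\si$ is a signature this is stated directly, and when $\si$ is a ground atom $P(\bar t)$ they agree on the interpretation of $\bar t$ and on $P(\bar v)$ for all $\bar v$ different from the common interpretation of $\bar t$, so the only possible discrepancy is the truth value of the single atom $P(\bar t)$. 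The assumption $\sig(\varphi)\cap\si=\varnothing$ (respectively, that $P\notin\sig(\varphi)$) guarantees that $\M$ and $\M'$ agree on every symbol of $\sig(\varphi)$. By the coincidence lemma we obtain $\M\models\varphi$ iff $\M'\models\varphi$, hence $\M'\models\varphi$; as $\M'$ was arbitrary, $\forget(\bt,\si)\models\varphi$.

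The only point I expect to require care is the ground-atom case, where ``agreeing outside $\si$'' does not literally mean agreeing on the whole interpretation of $P$, but only on $P$ away from the tuple $\bar t$. I would make explicit that, since $P$ does not occur in $\varphi$ at all, even this residual disagreement on $P(\bar t)$ is invisible to $\varphi$, so the coincidence lemma still applies verbatim. Everything else is a routine unfolding of the semantic definition of forgetting, so I do not anticipate any genuine difficulty beyond stating the coincidence argument cleanly for both cases of $\si$.
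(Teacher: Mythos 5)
Your proof is correct and follows essentially the same route as the paper's: both directions rest on $\bt\models\forget(\bt,\si)$ for the easy implication, and on pulling back a model of $\forget(\bt,\si)$ to a $\sim_\si$-related model of $\bt$ and transferring satisfaction of $\varphi$ across $\sim_\si$ (the paper phrases this as a contradiction, you argue directly, which is an inessential difference). Your explicit handling of the ground-atom case — noting that the residual disagreement on $P(\bar t)$ is invisible to $\varphi$ since $P\notin\sig(\varphi)$ — is a detail the paper leaves implicit, but it is the same argument.
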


\begin{proof} 
From Proposition \ref{Prop_ForgettingDerivDiagram}, we have
$\bt\models \forget(\bt, \si)$, thus $\forget(\bt,
\si)\models\varphi$ yields $\bt\models\varphi$. Now let
$\bt\models\varphi$ and assume there is a model $\M'$ of
$\forget(\bt, \si)$ such that $\M'\not\models\varphi$. By
definition of forgetting, there exists a model $\M$ of $\bt$ such
that $\M\sim_\si \M'$, i.e. $\M$ and $\M'$ have the same universe
and may differ only on interpretation of signature $\si$ (ground
atom $\si$). By the condition on signature of $\varphi$, then $\M$
is not a model of $\varphi$, which contradicts
$\bt\models\varphi$. 
\end{proof}

\medskip

Now we answer the question when inseparability is preserved under forgetting. This is important for our research, since we are interested in preservation of inseparability under progression, the operation which relies on forgetting in local-effect $\mathcal{BAT}$s. We demonstrate that it is important to distinguish between forgetting something in $\Delta$ (the common symbols of theories) or outside of the shared signature. While Proposition \ref{Prop_ConsecPreservationUnderForgetting} shows that the situation is simple in the latter case, it is apriory unclear, whether the same holds in the former. Example \ref{Ex_InseparabilityLostForgettingAtom} demonstrates that this is not true, while the accompanying Propositions \ref{Prop_InseparabilityPreservationUnderForgetting}, \ref{Prop_ModelInseparabilityPreservationUnderForgetting} describe the cases when this situation can be recovered. We believe that giving the accompanying positive results is important in order to provide a big picture to the reader. Proposition \ref{Prop_InseparabilityPreservationUnderForgetting} shows that signature forgetting (the arguably more frequently used type of forgetting in the literature) preserves inseparability, while Proposition \ref{Prop_ModelInseparabilityPreservationUnderForgetting} tackles this question from another perspective. It shows that semantic inseparability, the property also well studied in the literature, is the stronger form of inseparability, which is invariant under forgetting.

Observe that by Proposition
\ref{Prop_ConsecPreservationUnderForgetting} and the first item of Fact \ref{Fact_BasicPropertiesOfForgetting}, when studying
preservation of $\D$--inseparabi\-lity of two sets of formulas for a
signature $\D$, it is sufficient to consider the case of
forgetting a subset of $\D$ or a ground atom with the predicate from
$\D$, respectively.

\begin{prop}[Preservation of $\D$--insep. under signature forgetting]\label{Prop_InseparabilityPreservationUnderForgetting}
$\ \  $ Let $\LL$ have PIP and $\bt_1$ and $\bt_2$ be two
$\D$--inseparable sets of formulas in $\LL$ with
$\sig(\bt_1)\cap\sig(\bt_2)=\D$, for a signature $\D$. Let $\si$ be
a subsignature of $\D$ and $\forget(\bt_1,\si)$ and
$\forget(\bt_2,\si)$ be sets of formulas of $\LL$. Then
$\forget(\bt_1,\si)$ and $\forget(\bt_2,\si)$ are
$\D$--inseparable.
\end{prop}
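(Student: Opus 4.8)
The plan is to reduce the $\D$--inseparability of the two forgotten theories to the $\D$--inseparability of $\bt_1$ and $\bt_2$ themselves, by interpolating away the forgotten symbols. Since $\D$--inseparability is symmetric in its two arguments, it suffices to show that every $\LL$--formula $\psi$ with $\sig(\psi)\subseteq\D$ satisfying $\forget(\bt_1,\si)\models\psi$ also satisfies $\forget(\bt_2,\si)\models\psi$; the converse follows by swapping the roles of $\bt_1$ and $\bt_2$. The difficulty is that $\psi$ may contain symbols of $\si$ (since $\si\subseteq\D$), so that neither Proposition~\ref{Prop_ConsecPreservationUnderForgetting} nor the $\D$--inseparability of $\bt_1,\bt_2$ can be applied to $\psi$ directly. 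The key idea is therefore to replace $\psi$ by an $\LL$--consequence living in the smaller signature $\D\setminus\si$, which is disjoint from $\si$.

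First I would use Proposition~\ref{Prop_SignatureOfForgetting} (with $\LL'=\LL$, which is legitimate because $\forget(\bt_1,\si)$ is assumed to lie in $\LL$ and $\LL$ has PIP) to replace $\forget(\bt_1,\si)$ by an equivalent set $\Theta_1$ of $\LL$--formulas with $\sig(\Theta_1)\subseteq\sig(\bt_1)\setminus\si$; thus $\Theta_1\models\psi$ and the premise signature no longer mentions $\si$. Next I would interpolate: applying PIP (Definition~\ref{Def_PIP}) to $\Theta_1\models\psi$, taking the empty theory as the second component so that the shared signature is $\varnothing$, yields a set $\theta\subseteq\LL$ together with a set of formulas entailed by $\varnothing$ (hence valid) whose union entails $\psi$; discarding the valid part gives $\Theta_1\models\theta\models\psi$ with $\sig(\theta)\subseteq\sig(\Theta_1)\cap\sig(\psi)\subseteq(\sig(\bt_1)\setminus\si)\cap\D=\D\setminus\si$, where the last equality uses $\D\subseteq\sig(\bt_1)$. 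In particular $\sig(\theta)\cap\si=\varnothing$ and $\sig(\theta)\subseteq\D$.

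It then remains to chain the available transfer results along $\theta$. From $\forget(\bt_1,\si)\equiv\Theta_1\models\theta$ with $\sig(\theta)\cap\si=\varnothing$, Proposition~\ref{Prop_ConsecPreservationUnderForgetting} gives $\bt_1\models\theta$; since $\sig(\theta)\subseteq\D$, each formula of $\theta$ lies in $\Cons(\bt_1,\D)=\Cons(\bt_2,\D)$, so $\bt_2\models\theta$; applying Proposition~\ref{Prop_ConsecPreservationUnderForgetting} once more (again because $\sig(\theta)\cap\si=\varnothing$) yields $\forget(\bt_2,\si)\models\theta$, and finally $\theta\models\psi$ gives $\forget(\bt_2,\si)\models\psi$, as required. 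I expect the main obstacle to be precisely the first two steps: one must guarantee that the interpolant can be chosen entirely free of the symbols in $\si$, which is exactly why rewriting $\forget(\bt_1,\si)$ in its reduced signature via Proposition~\ref{Prop_SignatureOfForgetting} \emph{before} interpolating is essential. Without this preliminary step, the common signature of the premise and $\psi$ could still retain symbols of $\si$, and the transfer through Proposition~\ref{Prop_ConsecPreservationUnderForgetting} would break down.
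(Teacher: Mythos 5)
Your proof is correct and takes essentially the same route as the paper's: rewrite $\forget(\bt_1,\si)$ in the reduced signature $\sig(\bt_1)\setminus\si$ via Proposition~\ref{Prop_SignatureOfForgetting}, use PIP to extract an interpolant $\theta$ with $\sig(\theta)\subseteq\D\setminus\si$, and then chain $\forget(\bt_1,\si)\models\theta$ through Proposition~\ref{Prop_ConsecPreservationUnderForgetting}, the $\D$--inseparability of $\bt_1$ and $\bt_2$, and Proposition~\ref{Prop_ConsecPreservationUnderForgetting} again to reach $\forget(\bt_2,\si)\models\theta\models\psi$. The only difference is that you make explicit how PIP is instantiated (taking the empty theory as the second component), a detail the paper leaves implicit.
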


\begin{proof} 
Let $\varphi$ be a formula with $\sig(\varphi)\subseteq\D$
such that $\forget(\bt_1,\si)\models\varphi$. By Proposition
\ref{Prop_SignatureOfForgetting}, we may assume that for $i=1,2$
the signature of $\forget(\bt_i,\si)$ is a subset of
$\sig(\bt_i)\setminus\si$. We depict the direction of the proof in the figure below.
\begin{center}

\makebox[12cm]{

\put(-85,0){{\large $\bt_1\models\bt'_1$}} 

\put(10,0){{\large $\Rightarrow$}} 



\put(65,0){{\large $\bt_2\models\bt'_1$}} 

\put(-81,-20){\rotatebox[origin=c]{270}{{\large \rotatebox[origin=c]{180}{$\Rightarrow$}}}} 

\put(68,-20){\rotatebox[origin=c]{270}{{\large \rotatebox[origin=c]{360}{$\Rightarrow$}}}}

\put(-140,-40){{\large $\forget(\bt_1,\si)\models\bt'_1\models\varphi$}} 


\put(40,-40){{\large $\forget(\bt_2,\si)\models\bt'_1\models\varphi$}} 

}

\end{center}

We start with the lower-left corner of the diagram and navigate up, 
then right, and finally down.
As $\forget(\bt_1,\si)\models\varphi$, by PIP, there is a set of
formulas $\bt_1'$ with
$\sig(\bt_1')\subseteq\allowbreak  \sig(\forget(\bt_1,\allowbreak \si))\cap\allowbreak \sig(\varphi)$
$\subseteq\D\setminus\si$ such that
$\forget(\bt_1,\si)\models\bt_1'$ and $\bt_1'\models\varphi$.
Then, by Proposition
\ref{Prop_ConsecPreservationUnderForgetting}, we have
$\bt_1\models\bt_1'$. This proves entailment in the top-left corner.
Since $\bt_1$ and $\bt_2$ are
$\D$--inseparable and $\sig(\bt_1')\subseteq\D$, we obtain
$\bt_2\models\bt_1'$. Therefore, the top-right entailment holds. Again, since
$\sig(\bt_1')\cap\si=\varnothing$, by Proposition
\ref{Prop_ConsecPreservationUnderForgetting}, we conclude that
$\forget(\bt_2,\si)\models\bt_1'$ and thus,
$\forget(\bt_2,\si)\models\varphi$. 
\end{proof}

\medskip

The following example demonstrates that a similar result does not
hold under forgetting a ground atom with the predicate from $\D$.

\begin{example}[$\D$--inseparability lost under forgetting a ground atom]\label{Ex_InseparabilityLostForgettingAtom} We give an
example of a logic $\LL$, sets of formulas $\bt_1$, $\bt_2$ in
$\LL$, and a signature $\D=\sig(\bt_1)\cap\sig(\bt_2)$ such that
$\bt_1$ and $\bt_2$ are $\D$--inseparable, but
$\forget(\bt_1,R(c,c))$ and $\forget\allowbreak(\bt_2,\allowbreak R(c,c))$ are not, for a
ground atom $R(c,c)$ with a predicate $R\in\D$.
Let $\LL$ be Description Logic $\cal{ELO}^\bot$, i.e. the
sub-boolean logic $\cal{EL}$ augmented with nominals and the
bottom concept $\bot$. Let $\Si=\{R,a,c\}$ be signature, where $R$
is a role name (binary predicate) and $a,c$ are nominals (i.e.
constants). Define a set of formulas $\bt_1$ in the signature $\Si$
as $\{\{a\}\sqcap \{c\}\sqsubseteq\bot, \ \{c\}\sqsubseteq \exists
R.\{a\}, \ \top\sqsubseteq \exists R.\top \}$.
Set $\D=\{R,c\}$ and consider the set of formulas
$\bt_2=\{\top\sqsubseteq\exists R.\top, \ Taut(c)\}$, where
$Taut(c)$ is a tautology with the nominal $c$ (e.g., the
formula $\{c\}\sqsubseteq\top$). We have
$\sig(\bt_1)\cap\sig(\bt_2)=\D$ and it is easy to check that
$\bt_2$ is equivalent to $\Cons(\bt_1,\D)$ in the logic
$\cal{ELO}^\bot$; thus, $\bt_1$ and $\bt_2$ are $\D$--inseparable.
Now consider $\forget(\bt_1, R(c,c))$ and $\forget(\bt_2,$
$R(c,c))$ as sets of formulas in second-order logic (we assume the
standard translation of formulas of $\cal{ELO}^\bot$ into the
language of second-order logic). We verify that they are not
$\D$--inseparable and the formula $\top\sqsubseteq\exists R.\top$
is the witness for this. By definition of $\bt_1$, we have
$\forget(\bt_1, R(c,c))\models\bt_1$, since any model of $\bt_1$
with a changed truth value of the predicate $R$ on the pair $\langle
c,c \rangle$ is still a model of $\bt_1$. On the other hand,
$\forget(\bt_2, R(c,c))\not\models \top\sqsubseteq\exists R.\top$,
because $\bt_2$ has the one--element model $\M$, where $R$ is
reflexive (on the sole element corresponding to $c$). Hence, by
definition of forgetting, the one-element model $\M'$ with $R$
false on the pair $\langle c,c \rangle$ must be a model of
$\forget(\bt_2, R(c,c))$, but obviously, $\M'\not\models
\top\sqsubseteq\exists R.\top$.
\end{example}

It turns out that the preservation of inseparability under forgetting
a ground atom requires rather strong model-theoretic conditions
like (*) in Proposition
\ref{Prop_ModelInseparabilityPreservationUnderForgetting} below.
Specialists might notice that (*) is equivalent to
\textit{semantic $\D$--inseparability} of the initial sets of
formulas (see Definition 11 in \cite{FormalPropMod}) which is very hard to decide from the computational point of view
(see Theorem 3 in \cite{ConsExtALC}, Lemma 40 in
\cite{ConsExtEL}). Nevertheless, there are practically useful restrictions under which the complexity becomes feasible \cite{ModelThInseparability-PositiveResults}. Semantic $\D$--inseparability is 
stronger than the notion of inseparability given in Definition
\ref{Def_Inseparable}: it means that the theories are indistinguishable by second-order formulas. On the other hand, Proposition
\ref{Prop_ModelInseparabilityPreservationUnderForgetting} says
that whenever there is a chance to satisfy (*) for two given sets
of formulas, one does not need to check it again after forgetting
something in their common signature. To compare condition (*) with
Example \ref{Ex_InseparabilityLostForgettingAtom}, note that the
aforementioned one-element model of $\bt_2$ does not expand to a model
of $\bt_1\cup\bt_2$.

\begin{prop}[Preservation of $\D$--inseparability under forgetting]\label{Prop_ModelInseparabilityPreservationUnderForgetting}
Let $\bt_1$ and $\bt_2$ be two sets of formulas in $\LL$, with
$\sig(\bt_1)\cap\sig(\bt_2)=\D$, for a signature $\D$, which satisfy
the following condition (*): for $i=1,2$, any model of $\bt_i$ can
be expanded to a model of $\bt_1\cup\bt_2$.
Then:
\begin{itemize}
\item $\bt_1$ and $\bt_2$ are $\D$--inseparable;
\item for $\si$ a signature or a ground atom, $\forget(\bt_1,\si)$
and $\forget(\bt_2,\si)$ satisfy (*) as well.
\end{itemize}
\end{prop}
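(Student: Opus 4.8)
The plan is to treat both bullets purely model-theoretically, using only the semantic definitions of $\Cons$, of $\sim_{\si}$, and of forgetting; no appeal to PIP or to any syntactic machinery is needed. Throughout I work over the ambient signature $\sig(\bt_1)\cup\sig(\bt_2)$, so that a ``model of $\bt_i$'' is a reduct of an ambient structure and ``expansion'' means reinstating the interpretations of the symbols outside $\sig(\bt_i)$. Symbols of $\si$ lying outside this ambient signature can be dropped at the outset, since forgetting them is vacuous by Fact~\ref{Fact_BasicPropertiesOfForgetting}, so I may assume $\si$ is contained in $\sig(\bt_1)\cup\sig(\bt_2)$.

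For the first bullet I would argue directly from $(*)$. Fix $\varphi$ with $\sig(\varphi)\subseteq\D$ and assume $\bt_1\models\varphi$; I show $\bt_2\models\varphi$. Let $\M_2$ be any model of $\bt_2$. By $(*)$ (case $i=2$) it expands to some $\M\models\bt_1\cup\bt_2$. Then $\M\models\bt_1$ gives $\M\models\varphi$, and since $\sig(\varphi)\subseteq\D\subseteq\sig(\bt_2)$ and $\M$ agrees with $\M_2$ on $\sig(\bt_2)$, the truth value of $\varphi$ is the same in both structures, so $\M_2\models\varphi$. As $\M_2$ was arbitrary, $\bt_2\models\varphi$; the reverse inclusion is symmetric, giving $\Cons(\bt_1,\D)=\Cons(\bt_2,\D)$.

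For the second bullet the key is a lifting lemma: \emph{if $\M$ is an expansion of $\M_2$ (i.e.\ $\M|_{\sig(\bt_2)}=\M_2$) and $\M_2\sim_{\si}\N_2$, then there is an expansion $\N$ of $\N_2$ with $\M\sim_{\si}\N$.} Granting this, the conclusion follows quickly. Take any $\N_2\models\forget(\bt_2,\si)$; by Definition~\ref{De_Forgetting} there is $\M_2\models\bt_2$ with $\M_2\sim_{\si}\N_2$, which by $(*)$ expands to $\M\models\bt_1\cup\bt_2$. The lemma yields $\N$ expanding $\N_2$ with $\M\sim_{\si}\N$; since $\M\models\bt_1$ and $\M\models\bt_2$, the definition of forgetting gives $\N\models\forget(\bt_1,\si)$ and $\N\models\forget(\bt_2,\si)$, so $\N$ is the required expansion to a model of $\forget(\bt_1,\si)\cup\forget(\bt_2,\si)$. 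The case $i=1$ is symmetric, so $\forget(\bt_1,\si)$ and $\forget(\bt_2,\si)$ satisfy $(*)$.

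It remains to build $\N$, the only place where a computation is needed. I define $\N$ on the ambient domain (shared by $\M,\M_2,\N_2$) by letting it agree with $\M$ on every symbol outside $\si$, and overwriting the $\si$-part using $\N_2$: in the signature case, for $v\in\si$ I set $v^{\N}:=v^{\N_2}$ if $v\in\sig(\bt_2)$ and $v^{\N}:=v^{\M}$ otherwise; in the ground-atom case $\si=P(\bar t)$, I keep $P^{\N}$ equal to $P^{\M}$ at every tuple except the interpretation $\bar u$ of $\bar t$, where I set its value to that of $P(\bar t)$ in $\N_2$ (here one uses $\sig(\bar t)\subseteq\sig(\bt_2)$ so that $\bar t$ has the same value $\bar u$ in $\M$, $\M_2$ and $\N_2$). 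A routine check then shows $\N\sim_{\si}\M$ (they agree off $\si$ by construction, with the same domain) and $\N|_{\sig(\bt_2)}=\N_2$ (on $\sig(\bt_2)\setminus\si$ via $\M|_{\sig(\bt_2)}=\M_2\sim_{\si}\N_2$, and on the $\si$-part by construction). The main obstacle is exactly the bookkeeping in this lemma: one must ensure that the modification passing from $\M_2$ to $\N_2$ can be transported to the larger structure $\M$ without disturbing the $\sig(\bt_1)\setminus\D$ symbols, and in the ground-atom case that the distinguished tuple $\bar u$ is interpreted consistently across $\M$, $\M_2$ and $\N_2$; the degenerate subcase $P\notin\sig(\bt_2)$, where forgetting is trivial and $\N_2$ is already a model of $\bt_2$, should be noted separately.
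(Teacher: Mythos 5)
Your proof is correct and takes essentially the same route as the paper's: your first bullet is the paper's argument (stated directly rather than by contradiction), and your lifting lemma producing $\N$ — agreeing with $\M$ outside $\si$ and overwritten on the $\si$-part to match $\N_2$ — is exactly the paper's model $\M'$, which it introduces as a structure with $\M'\sim_\si\M$ that agrees with $\M_2'$ on $\si$ (or on the predicate of $\si$) and is therefore an expansion of $\M_2'$ satisfying both $\forget(\bt_1,\si)$ and $\forget(\bt_2,\si)$. The only difference is that you make the bookkeeping of this construction explicit where the paper leaves it implicit.
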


\begin{proof} 
$\D$--inseparability is the immediate consequence of (*):
if $\varphi$ is a formula with $\sig(\varphi)\subseteq\D$,
$\bt_1\models\varphi$, but $\bt_2\not\models\varphi$, then there
is a model $\M_2$ of $\bt_2$ such that $\M_2\not\models\varphi$.
Then there is an expansion $\M$ of $\M_2$ such that
$\M\models\bt_1\cup\bt_2$, $\M|_{\sig(\bt_1)}\models\bt_1$, but
$\M|_{\sig(\bt_1)}\not\models\varphi$, a contradiction. Now let
us verify that for $i=1,2$, any model of $\forget(\bt_i,\si)$ can
be expanded to a model of
$\forget(\bt_1,\si)\cup\forget(\bt_2,\si)$. For instance, let
$\M_2'$ be a model of $\forget(\bt_2,\si)$. Consider a model
$\M_2$ of $\bt_2$, such that $\M_2\sim_\si\M_2'$, and expand it to a
model $\M$ of $\bt_1\cup\bt_2$. Then by definition of forgetting,
there must be a model $\M'\models\forget(\bt_1,\si)$ with
$\M'\sim_\si\M$, which agrees with $\M_2'$ on $\si$ (if $\si$ is a
signature), or on the predicate of $\si$ (if $\si$ is a ground
atom). By construction, $\M'$ is an expansion of $\M_2'$ and thus
a model for $\forget(\bt_1,\si)\cup\forget(\bt_2,\si)$. 
\end{proof}

\medskip

Let $\bt_1$ and $\bt_2$ be two sets of formulas in $\LL$, with
$\sig(\bt_1)\cap\sig(\bt_2)=\D$, for a signature $\D$, and let $\si$
be either a subsignature of $\D$ or a ground atom with the predicate
from $\D$. It is known that in general, forgetting $\si$  may not
be distributive over union of sets of formulas. The entailment
$\forget(\bt_1\cup\bt_2,\si)\models\forget(\bt_1,\si)\cup\forget(\bt_2,\si)$
holds by Proposition \ref{Prop_ForgettingDerivDiagram}, but
Example \ref{Ex_ComponentwiseForgetting} below easily shows that
even strong semantic conditions related to modularity do not
guarantee the reverse entailment. On the other hand, forgetting
something outside of the common signature of $\bt_1$ and $\bt_2$
is distributive over union, as formulated in Corollary
\ref{Cor_ForgettingWithSpecComponent} which is a consequence of the
criterion in Proposition
\ref{Prop_CriterionComponentwiseForgetting} and is used in the proof of one of our main results, Theorem \ref{Teo_PreserveCompLocalEffect}.

\begin{example}[Failure of componentwise forgetting in $\D$]\label{Ex_ComponentwiseForgetting}
Let \ $\LL$ \ be first-order logic and $\D=\{P,c\}$ be the signature
consisting of a unary predicate $P$ and a constant $c$. Define
theories $\bt_1$ and $\bt_2$ as: $\bt_1=\{A\rightarrow P(c)\}$,
$\bt_2=\{P(c)\rightarrow B\}$, where $A,B$ are nullary predicate
symbols. We have $\sig(\bt_1)\cap\sig(\bt_2)=\D$ and for $i=1,2$,
any model of $\bt_i$ can be expanded to a model of
$\bt_1\cup\bt_2$. Clearly, $\bt_1$ and $\bt_2$ are
$\D$--inseparable and for $i=1,2$, $\Cons(\bt_i,\D)$ is the set of
tautologies in $\D$.
By definition of forgetting, for $i=1,2$, $\forget(\bt_i,P(c))$ is
a set of tautologies and thus,
$\forget(\bt_1,P(c))\cup\forget(\bt_2,P(c))\not\models\forget(\bt_1\cup\bt_2,P(c))$,
because $\forget(\bt_1\cup\bt_2,P(c))\models A\rightarrow B$ (by
Proposition \ref{Prop_ConsecPreservationUnderForgetting}).
For the case of forgetting a signature, say a nullary predicate $P$,
it suffices to consider $\D=\{P\}$ and theories
$\bt_1=\{A\rightarrow P\}$, $\bt_2=\{P\rightarrow B\}$, where
$A,B$ are nullary predicates.
\end{example}

\begin{prop}[A criterion for componentwise forgetting]\label{Prop_CriterionComponentwiseForgetting}
Let $\bt_1$ and $\bt_2$ be two sets of formulas and $\si$
either a signature or a ground atom. Then the following statements are
equivalent:
\begin{itemize}
\item
$\forget(\bt_1,\si)\cup\forget(\bt_2,\si)\models\forget(\bt_1\cup\bt_2,\si)$
\item for any two models $\M_1\models\bt_1$ and
$\M_2\models\bt_2$, with $\M_1\sim_\si\M_2$, there exists a model
$\M\models\bt_1\cup\bt_2$ such that $\M\sim_\si\M_i$, for some
$i=1,2$.
\end{itemize}
\end{prop}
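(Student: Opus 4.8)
The plan is to prove the two directions of the biconditional by moving between the semantic definition of forgetting (Definition \ref{De_Forgetting}) and the model-theoretic condition. Recall that by definition, $\M'\models\forget(\bt,\si)$ iff there is a model $\M\models\bt$ with $\M\sim_\si\M'$. So the whole proof is really bookkeeping with the $\sim_\si$ relation and the fact that it is an equivalence relation.

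For the forward direction, I would assume the entailment $\forget(\bt_1,\si)\cup\forget(\bt_2,\si)\models\forget(\bt_1\cup\bt_2,\si)$ and take two models $\M_1\models\bt_1$, $\M_2\models\bt_2$ with $\M_1\sim_\si\M_2$. Since $\M_1\models\bt_1$ and $\M_1\sim_\si\M_1$ (reflexivity), we get $\M_1\models\forget(\bt_1,\si)$; and because $\M_1\sim_\si\M_2\models\bt_2$ (using symmetry so that $\M_2\sim_\si\M_1$), we also get $\M_1\models\forget(\bt_2,\si)$. Hence $\M_1$ satisfies $\forget(\bt_1,\si)\cup\forget(\bt_2,\si)$, so by the hypothesized entailment $\M_1\models\forget(\bt_1\cup\bt_2,\si)$. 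Unwinding the definition of forgetting once more, this yields a model $\M\models\bt_1\cup\bt_2$ with $\M\sim_\si\M_1$, which is exactly the required witness (with $i=1$).

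For the converse, I would assume the model-theoretic condition and take an arbitrary $\M'\models\forget(\bt_1,\si)\cup\forget(\bt_2,\si)$, aiming to show $\M'\models\forget(\bt_1\cup\bt_2,\si)$. From $\M'\models\forget(\bt_1,\si)$ there is $\M_1\models\bt_1$ with $\M_1\sim_\si\M'$, and from $\M'\models\forget(\bt_2,\si)$ there is $\M_2\models\bt_2$ with $\M_2\sim_\si\M'$. By transitivity (and symmetry), $\M_1\sim_\si\M_2$, so the hypothesis supplies a model $\M\models\bt_1\cup\bt_2$ with $\M\sim_\si\M_i$ for some $i\in\{1,2\}$. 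Since $\M_i\sim_\si\M'$, transitivity gives $\M\sim_\si\M'$, and therefore $\M'\models\forget(\bt_1\cup\bt_2,\si)$ by the definition of forgetting. As $\M'$ was arbitrary, the entailment holds.

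The only subtlety worth flagging — and the step I would be most careful about — is the repeated reliance on $\sim_\si$ being an equivalence relation; this is granted immediately after the definition of $\sim_\si$ in the excerpt, so no separate argument is needed, but each transfer of satisfaction between $\M',\M_1,\M_2,\M$ must use the correct symmetry/transitivity step. There is no genuine obstacle here: the statement is essentially a direct translation of the syntactic entailment into the language of the $\sim_\si$-classes, and both directions close cleanly once the definition of forgetting is applied in each of its ``there exists a $\sim_\si$-related model'' forms. The argument is uniform in whether $\si$ is a signature or a ground atom, since $\sim_\si$ is defined in both cases.
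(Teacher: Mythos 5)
Your proof is correct and follows essentially the same route as the paper's: both directions are obtained by unwinding the semantic definition of forgetting and exploiting reflexivity, symmetry, and transitivity of $\sim_\si$, with the forward direction evaluating the hypothesized entailment at $\M_1$ itself and the backward direction chaining the $\sim_\si$-witnesses through $\M'$. The only cosmetic difference is that the paper introduces auxiliary models $\M_1',\M_2'$ in the forward direction where you use $\M_1$ directly via reflexivity, which is a slight streamlining of the same argument.
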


\begin{proof}
Note in the second condition, the requirement
$\M\sim_\si\M_i$ for some $i=1,2$ is equivalent to
$\M\sim_\si\M_i$ for all $i=1,2$, by transitivity of $\sim_\si$.
($\Rightarrow$): Let $\M_1\models\bt_1$ and $\M_2\models\bt_2$ be
models with $\M_1\sim_\si\M_2$. Then there are models $\M_1'$ and
$\M_2'$ such that for $i=1,2$, $\M_i'\models\forget(\bt_i,\si)$
and $\M_i'\sim_\si\M_i$. Then, by transitivity of $\sim_\si$, for
all $i,j=1,2$ we have $\M_i'\sim_\si\M_j$ and thus,
$\M_i'\models\forget(\bt_j,\si)$. Then
$\M_1'\models\forget(\bt_1\cup\bt_2,\si)$, so there exists a model
$\M\models\bt_1\cup\bt_2$ such that $\M\sim_\si\M_1'$ and hence,
$\M\sim_\si\M_1$. ($\Leftarrow$): Let $\M'$ be a model of
$\forget(\bt_1,\si)\cup\forget(\bt_2,\si)$. There exist models
$\M_1$ and $\M_2$ such that for $i=1,2$, $\M_i\models\bt_i$ and
$\M_i\sim_\si\M'$. Then $\M_1\sim_\si\M_2$, hence, there must be a
model $\M$ of $\bt_1\cup\bt_2$ with $\M\sim_\si\M_i$ for some
$i=1,2$. Then we obtain that $\M\sim_\si\M'$ and thus, by
definition of forgetting, $\M'$ is a model of
$\forget(\bt_1\cup\bt_2,\si)$. 
\end{proof}

\medskip

To compare this criterion with Example
\ref{Ex_ComponentwiseForgetting}, observe that there exist models
$\M_1\models\bt_1$ and $\M_2\models\bt_2$ with common domain such
that $\M_1\models A\wedge P(c)\wedge\neg B$ and $\M_2\models
A\wedge\neg P(c)\wedge\neg B$. Thus, ${\M_1}\sim_{P(c)}\M_2$,
however, there does not exist a model $\M$ of $\bt_1\cup\bt_2$
such that ${\M}\sim_{P(c)}\M_i$, for some $i=1,2$. Neither $\M_1$,
nor $\M_2$ is a model for $\bt_1\cup\bt_2$.

\begin{cor}[Forgetting in the scope of one component]\label{Cor_ForgettingWithSpecComponent} Let $\bt_1$ and
$\bt_2$ be two sets of formulas, with
$\sig(\bt_1)\cap\sig(\bt_2)=\D$, for a signature $\D$, and $\si$ be
either a subsignature of $\sig(\bt_1)\setminus\D$ or a ground atom
with the predicate from $\sig(\bt_1)\setminus\D$. Then
$\forget(\bt_1\cup\bt_2,\si)$ is equivalent to
$\forget(\bt_1,\si)\cup\bt_2$. Moreover, if $\bt_1$
and $\bt_2$ are $\D$--inseparable, then so are
$\forget(\bt_1,$ $\si)$ and $\bt_2$.
\end{cor}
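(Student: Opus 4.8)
The plan is to reduce everything to results already established in this section, the main leverage being the structural observation that $\si$ is disjoint from $\sig(\bt_2)$. Indeed, since $\si\subseteq\sig(\bt_1)\setminus\D$ while $\sig(\bt_1)\cap\sig(\bt_2)=\D$, no symbol of $\si$ can lie in $\sig(\bt_2)$ (such a symbol would belong to the intersection, i.e. to $\D$); in the ground-atom case this says the predicate of $\si$ is not in $\sig(\bt_2)$. By the fourth item of Fact \ref{Fact_BasicPropertiesOfForgetting}, this immediately gives $\forget(\bt_2,\si)\equiv\bt_2$, which is precisely what lets $\bt_2$ appear unchanged on the right-hand side of the claimed equivalence. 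I would then prove $\forget(\bt_1\cup\bt_2,\si)\equiv\forget(\bt_1,\si)\cup\bt_2$ by establishing the two entailments separately, and finally treat the inseparability statement.

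For the entailment $\forget(\bt_1\cup\bt_2,\si)\models\forget(\bt_1,\si)\cup\bt_2$, I would apply Proposition \ref{Prop_ForgettingDerivDiagram} to the trivial entailments $\bt_1\cup\bt_2\models\bt_1$ and $\bt_1\cup\bt_2\models\bt_2$, obtaining $\forget(\bt_1\cup\bt_2,\si)\models\forget(\bt_1,\si)$ and $\forget(\bt_1\cup\bt_2,\si)\models\forget(\bt_2,\si)$. Since $\forget(\bt_2,\si)\equiv\bt_2$, the conjunction of these two entailments is exactly the desired direction.

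For the reverse entailment $\forget(\bt_1,\si)\cup\bt_2\models\forget(\bt_1\cup\bt_2,\si)$, I would invoke the criterion of Proposition \ref{Prop_CriterionComponentwiseForgetting}, rewriting its first clause via $\forget(\bt_2,\si)\equiv\bt_2$ so that it reads precisely as the entailment to be shown. It then suffices to verify the model-theoretic second clause: for any $\M_1\models\bt_1$ and $\M_2\models\bt_2$ with $\M_1\sim_\si\M_2$, there is a model of $\bt_1\cup\bt_2$ that is $\sim_\si$-related to them. Here the disjointness observation does all the work: because $\M_1$ and $\M_2$ agree on every symbol outside $\si$, they in particular agree on all of $\sig(\bt_2)$, so $\M_1$ already satisfies $\bt_2$ and hence $\M_1\models\bt_1\cup\bt_2$. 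Taking $\M=\M_1$ (which is $\sim_\si$-related to $\M_1$ by reflexivity, and to $\M_2$ by transitivity) discharges the criterion. I expect this verification to be the only genuinely load-bearing step, though the disjointness of $\si$ from $\sig(\bt_2)$ makes it very short.

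Finally, for the \emph{moreover} part I would show $\Cons(\forget(\bt_1,\si),\D)=\Cons(\bt_1,\D)$. Any formula $\varphi$ with $\sig(\varphi)\subseteq\D$ meets the hypothesis of Proposition \ref{Prop_ConsecPreservationUnderForgetting} relative to $\si$ (no symbol of $\si$, respectively its predicate, lies in $\D$), so $\bt_1\models\varphi$ iff $\forget(\bt_1,\si)\models\varphi$; hence the two theories share the same $\D$-consequences. Combining this with the assumed $\D$-inseparability of $\bt_1$ and $\bt_2$, i.e. $\Cons(\bt_1,\D)=\Cons(\bt_2,\D)$, gives $\Cons(\forget(\bt_1,\si),\D)=\Cons(\bt_2,\D)$, which is exactly $\D$-inseparability of $\forget(\bt_1,\si)$ and $\bt_2$.
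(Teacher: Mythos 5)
Your proof is correct and follows essentially the same route as the paper's: both note that $\forget(\bt_2,\si)\equiv\bt_2$ by the choice of $\si$, obtain the forward entailment from Proposition \ref{Prop_ForgettingDerivDiagram}, verify the reverse entailment via the criterion of Proposition \ref{Prop_CriterionComponentwiseForgetting} with exactly the observation that $\M_1\sim_\si\M_2$ forces $\M_1\models\bt_2$, and derive the inseparability claim from Proposition \ref{Prop_ConsecPreservationUnderForgetting} together with the assumed $\D$--inseparability of $\bt_1$ and $\bt_2$. The only difference is that you spell out the intermediate steps (e.g., $\Cons(\forget(\bt_1,\si),\D)=\Cons(\bt_1,\D)=\Cons(\bt_2,\D)$) that the paper leaves implicit.
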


\begin{proof} 
Note that by the choice of $\si$, $\bt_2$ is
equivalent to $\forget(\bt_2,\si)$ and thus, by Proposition
\ref{Prop_ForgettingDerivDiagram}, it suffices to verify the
entailment
$\forget(\bt_1,\si)\cup\forget(\bt_2,\si)\models\forget(\bt_1\cup\bt_2,\si)$.
If there are models $\M_1\models\bt_1$ and $\M_2\models\bt_2$,
with $\M_1\sim_\si\M_2$, then in fact,
$\M_1\models\bt_1\cup\bt_2$, by the choice of $\si$ and definition
of $\sim_\si$. Thus, the criterion from Proposition
\ref{Prop_CriterionComponentwiseForgetting} obviously yields the
required entailment. It remains to note that
$\D$--inseparability of $\forget(\bt_1,\si)$ and
$\forget(\bt_2,\si)$ follows from the choice of $\si$, Proposition
\ref{Prop_ConsecPreservationUnderForgetting}, and
$\D$--inseparability of $\bt_1$ and $\bt_2$. 
\end{proof}

\medskip

In general, the results of this section prove that the operation
of forgetting does not behave well wrt the modularity
properties of the input, since they are logic-dependent. Stronger model-theoretic conditions on
the input are needed due to the model-theoretic nature of
forgetting.


\section{Properties of Progression}\label{Section_Progression}
We have considered some component properties of forgetting. It
turns out that the operation of progression is closely related to
forgetting in initial theories. However, in case of progression, we can not
restrict ourselves to working with initial theories only; we need also
to take into account information from successor state axioms. The
aim of this section is to study component properties of
progression wrt different forms of SSAs and common signatures
$\D$s (\textit{deltas}) of components of initial theories. We
will consider local-effect SSAs discussed in
\cite{ProgressionLocalEffect} and \textit{deltas}, which do not
contain fluents.

We use the following notations further in the paper. For a ground
action term $\alpha$ in the language of the situation calculus, we
denote by $S_\alpha$ the situation term $do(\alpha,S_0)$. 
To define progression, we introduce an equivalence relation on
many-sorted structures in the situation calculus signature. For two
structures $\M$, $\M'$ and a ground action $\alpha$, we set
$\M\sim_{S_\alpha}\M'$ if:
\begin{itemize}
\item $\M$ and $\M'$ have the same sorts for action and object;
\item $\M$ and $\M'$ interpret all situation-independent predicate
and function symbols identically;
\item $\M$ and $\M'$ agree on interpretation of all fluents at
$S_\alpha$, i.e., for every fluent $F$ and every variable
assignment $\theta$, we have $\M,\theta\models
F(\bar{x},S_\alpha)$ iff $\M',\theta\models F(\bar{x},S_\alpha)$.
\end{itemize}

That is, if $\M\sim_{S_\alpha}\M'$ then the structures $\M$ and $\M'$ are allowed to differ in sorts for situation and interpretations of fluents at situation terms, not equal to $S_\alpha$. 

Note that a similar notation with $\sim$ is used to denote the equivalence relation on models from Definition \ref{De_Forgetting} of forgetting. The two notations are easily distinguished depending on the context and are standard in the literature, therefore we adopt both of them in our paper.

\begin{de}[Progression, modified Definition 9.1.1 in \cite{Reiter2001}]\label{Def_Progression}
Let $\DD$ be a basic action theory with unique name axioms
$\DD_{una}$ and the initial theory $\DD_{S_0}$, and let $\alpha$
be a ground action term. A set $\DD_{S_\alpha}$ of formulas in a
fragment of second-order logic is called \emph{progression} of
$\DD_{S_0}$ wrt $\alpha$ if it is uniform in the situation term
$S_\alpha$ and for any structure $\M$, $\M$ is a model of
$\Sigma\cup \DD_{ss}\cup \DD_{ap} \cup \DD_{una}\cup \DD_{S_\alpha}$ iff there is a model $\M'$ of
$\DD$ such that $\M\sim_{S_\alpha}\M'$.
\end{de}


Below, we use $\DD_{S_\alpha}$ to denote progression of the initial
theory wrt the action term $\alpha$, if the context of $\cal{BAT}$ is
clear. We sometimes abuse terminology and call progression not
only the theory $\DD_{S_\alpha}$, but also the operation of
computing this theory (when the existence of an effective
operation is implicitly assumed). It can be seen (Theorem 2 in \cite{LinReiter1997} and Theorem 2.10 in \cite{ProgressionLocalEffect})  that progression always exists, i.e., is
second-order definable, if the signature of $\cal{BAT}$ is finite and
the initial theory $\DD_{S_0}$ is finitely axiomatizable.
On the other hand, by the definition, for any $\cal{BAT}$ $\DD$, we have $\DD\models \DD_{S_\alpha}$
and, similarly to the operation of forgetting, it is possible to
provide an example (see Definition 2, Conjecture 1, and Theorem 2
in \cite{FONonDefinabilityofProgression}), when the progression
$\DD_{S_\alpha}$ is not definable (even by an infinite set of
formulas) in the logic in which $\DD$ is formulated.

\smallskip

To understand the notion of progression intuitively, note the
following. The progression $\DD_{S_\alpha}$ is a set of consequences
of $\cal{BAT}$ that are uniform in the situation term $S_\alpha$;
it can be viewed as the {\it strongest postcondition} of the precondition $\DD_{S_0}$ wrt
the action $\alpha$. Thus, informally, $\DD_{S_\alpha}$ is all the information about the
situation $S_\alpha$ implied by $\cal{BAT}$. 
This is guaranteed by the model-theoretic property with the
relation $\sim_{S_\alpha}$ in the definition. Recall that the
initial theory of $\cal{BAT}$ describes information in the initial
situation $S_0$ and SSAs are essentially the rules for computing the new truth
values of fluents that change after performing actions. Thus, progression
$\DD_{S_\alpha}$ can be viewed as minimal ``modification'' of the initial
theory obtained after executing the action $\alpha$. In particular,
the initial theory of $\cal{BAT}$ can be replaced with
$\DD_{S_\alpha}(S_\alpha/S_0)$ (recall the notation from Section
\ref{Sect_SituationCalculus}) which gives a new $\cal{BAT}$, with
$S_\alpha$ as the new initial situation.  Let $\varphi(s)$ be a formula uniform in a 
situation variable $s$. To solve the projection problem for $\varphi(S_\alpha)$,
i.e., to find whether $\varphi(S_\alpha)$  holds in the situation $S_\alpha$ wrt 
$\cal{BAT}$ $\DD$, one might wish to compute progression $\DD_{S_\alpha}$ and then 
check whether $\DD_{una}\cup\DD_{S_\alpha}\models\varphi(S_\alpha)$ holds (or
equivalently, whether 
$\DD_{una}\cup\DD_{S_\alpha}(S_0/S_\alpha)\models\varphi(S_0)$ holds). 
By Proposition \ref{Prop_RelativeSatisfiability}, this is
equivalent to $\DD\models\varphi(S_\alpha)$, so this  progression-based
approach solves the projection problem for $\varphi(S_\alpha)$. This helps to demonstrate
why progression may be useful.

\smallskip

Consequently, of interest are cases when progression can be computed effectively
as a theory in the same logic that is used to formulate underlying $\DD_{S_0}$.
The well-known approach
is to consider the local-effect $\cal{BAT}$s (recall Definition
\ref{Def_LocallEffect}) in which progression can be obtained by
just a syntactic modification of the initial theory $\DD_{S_0}$
with respect to SSAs. This approach is based on effective
forgetting of a finite set of ground atoms (extracted from SSAs)
in the initial theory of $\cal{BAT}$. Recall the well--known
observation from Section \ref{Section_Forgetting} that, given a
theory $\bt$ (in an appropriate logic $\LL$), forgetting a finite set of
ground atoms in $\bt$ can be computed effectively by
straightforward syntactic manipulations with the axioms of $\bt$.
Thus, the cornerstone of computing progression in the local-effect
case is to extract effectively the set of ground atoms from SSAs
that need to be forgotten. Subsequently, in $\DD_{S_\alpha}$, they
are replaced with new values of fluents, which are
computed from SSAs. An interested reader may consult the whole
paper \cite{ProgressionLocalEffect}, while here we only introduce
necessary notations from Definition 3.4 of
\cite{ProgressionLocalEffect}, which will be used in Theorem
\ref{Teo_PreserveCompLocalEffect}.

\smallskip

Let $\DD$ be a $\cal{BAT}$ with a set $\DD_{ss}$ of SSAs, an
initial theory $\DD_{S_0}$, and a unique name theory
$\DD_{una}$, and let $\alpha$ be a ground action term. 
Take a generic SSA ($\dagger$) for the fluent $F$ (see Section \ref{Sect_SituationCalculus}) and replace
there an action variable $a$ with the action term $\alpha$. Then, use unique name
axioms for actions to replace equalities (or negations of equalities) between 
action functions with equalities (or negations of equalities, respectively) 
between object arguments. After that, apply the usual FO logic equivalences 
to eliminate existential quantifiers inside 
$\gamma^+_F(\bar{x},\alpha,s),\gamma^-_F(\bar{x},\alpha,s)$, if any. 
Recall these are formulas uniform in $s$ that appear on the right-hand side of 
a generic SSA ($\dagger$). Observe that in a local-effect SSA, when one 
substitutes a ground action term  $A(\bar{b_x},\bar{b_z})$ for a variable $a$ in
the formula $[\exists\bar{z}]. a\!=\!A(\bar{x},\bar{z})\land\phi(\bar{x},\bar{z},s)$,
applying UNA for actions yields
$[\exists\bar{z}]. \bar{x}\!=\!\bar{b_x}\land \bar{z}\!=\!\bar{b_z}\land
\phi(\bar{x},\bar{z},s)$, and applying
$\exists z(z\!=\!b\land\phi(z))\equiv \phi(b)$ repeatedly results in the logically
equivalent formula  $\bar{x}\!=\!\bar{b_x}\land \phi(\bar{x},\bar{b_z},s)$.
In a transformed SSA that is obtained after doing all these simplifications,
it is convenient to consider all object constants appearing in equalities 
between object variables and constants. These represent values where the fluent
$F$ changes. To compute the new value of the fluent it is sufficient to instantiate
object variables of $F$ with the corresponding constants. Denote

$\begin{array}{l} \D_F = \{\bar{t} \ \mid \ \bar{x}=\bar{t} \
\text{appears in} \ \gamma^+_F(\bar{x},\alpha,s) \ \text{or} \
\gamma^-_F(\bar{x},\alpha,s) \ \text{in a transformed SSA} \\ \hspace{1.1cm} \text{ for } \ F
 \ \text{instantiated with} \ \alpha \ \text{and equivalently rewritten wrt} \ \DD_{una} \},\\
\Omega(s) = \{F(\bar{t},s) \ \mid \ \bar{t}\in\D_F\}.
\end{array}$

\medskip

Consider $\Omega(s)$  and notice that $\Omega(S_0)$ is a finite set of ground atoms to be
forgotten. According to
Fact~\ref{Fact_BasicPropertiesOfForgetting}, forgetting several
ground atoms can be accomplished consecutively in any order.

An \textit{instantiation} of $\DD_{ss}$ wrt $\Omega(S_0)$, denoted
by $\DD_{ss}[\Omega(S_0)]$, is the set of formulas of the form:
$$F(\bar{t},do(\alpha,S_0))\leftrightarrow
\gamma_F^+(\bar{t},\alpha,S_0)\vee \ F(\bar{t},S_0)\wedge\neg
\gamma_F^-(\bar{t},\alpha,S_0).$$
These formulas represent instantiations of the transformed SSAs with object constants where
the fluents change. 
Observe that $\DD_{ss}[\Omega(S_0)]$ effectively defines new values for those fluents,
which are affected by the action $\alpha$. However, these definitions use fluents
wrt $S_0$, which may include fluents to be forgotten. For this reason, forgetting
should be performed not only in $\DD_{S_0}$, but in $\DD_{ss}[\Omega(S_0)]$ as well.

\begin{prop}[Theorem 3.6 in
\cite{ProgressionLocalEffect}]\label{Prop_ProgressionForLocalEffectCase}
In the notations above, the following is a progression of
$\DD_{S_0}$ wrt $\alpha$ in the sense of Definition
\ref{Def_Progression}:
$$\DD_{S_\alpha}=\ \forget\big(\DD_{ss}[\Omega(S_0)]\cup\DD_{S_0},\Omega(S_0)\big)\ (S_\alpha/S_0).$$
\end{prop}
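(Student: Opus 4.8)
The plan is to verify the two requirements of Definition~\ref{Def_Progression} for the set $\DD_{S_\alpha}=\Phi(S_\alpha/S_0)$, where I write $\Omega=\Omega(S_0)$ and $\Phi=\forget\big(\DD_{ss}[\Omega]\cup\DD_{S_0},\Omega\big)$. First I would deal with uniformity. The atoms being forgotten, $\Omega=\{F(\bar t,S_0)\mid \bar t\in\D_F\}$, all sit at $S_0$, and by the local-effect hypothesis (Definition~\ref{Def_LocallEffect}) they are \emph{exactly} the ground fluent atoms whose truth value $\alpha$ can alter. After forgetting $\Omega$, the surviving fluent atoms in $\Phi$ occur either at $S_0$ (the unchanged fluents, inherited from $\DD_{S_0}$ and from the context conditions $\gamma_F^+,\gamma_F^-$) or at $S_\alpha=do(\alpha,S_0)$ (the changed fluents, introduced by the left-hand sides of $\DD_{ss}[\Omega]$). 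The substitution $(S_\alpha/S_0)$ lifts the former group to $S_\alpha$, making $\DD_{S_\alpha}$ uniform in $S_\alpha$; the semantic soundness of this relabelling is the delicate point discussed below.

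The substantive part is the model-theoretic equivalence, which I would first reduce. Since $\DD_{S_\alpha}$ is uniform in $S_\alpha$, Lemma~\ref{Lemma_ModelPropertyofUniformTheories} shows that whether a structure satisfies $\DD_{S_\alpha}$ depends only on the interpretation of the situation-independent symbols and of the fluents at $S_\alpha$, i.e.\ only on its $\sim_{S_\alpha}$-class. Because $\Sigma\cup\DD_{ss}\cup\DD_{ap}\cup\DD_{una}$ is shared by the progressed theory and by $\DD$, I would restrict attention to structures modelling this common part; the claim then reduces to: the collection of \emph{$S_\alpha$-states} (the situation-independent part together with the fluent values at $S_\alpha$) admitted by $\DD_{S_\alpha}$ coincides with the collection of $S_\alpha$-states carried by models of $\DD$. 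By the construction underlying Proposition~\ref{Prop_RelativeSatisfiability}, every model of $\DD_{una}\cup\DD_{S_0}$ extends to a model of $\DD$ by erecting the situation tree and evaluating $\DD_{ss}$; hence the $S_\alpha$-states of models of $\DD$ are precisely those obtained by applying the instantiated successor-state axioms to the $S_0$-states satisfying $\DD_{una}\cup\DD_{S_0}$.

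Next I would compute the $S_\alpha$-states realized by $\DD_{S_\alpha}$ directly from the definition of forgetting. Iterating Definition~\ref{De_Forgetting} over $\Omega$, a structure $\N$ models $\Phi$ iff some $\N_0\models\DD_{ss}[\Omega]\cup\DD_{S_0}$ agrees with $\N$ on everything except possibly the truth values of the atoms in $\Omega$. Unwinding this, $\N_0$ carries an $S_0$-state satisfying $\DD_{S_0}$ whose change-prone fluents are propagated to $S_\alpha$ through the biconditionals of $\DD_{ss}[\Omega]$, and forgetting $\Omega$ frees precisely the \emph{old} values of the changed fluents while retaining their \emph{new} values at $S_\alpha$ and the values of all unchanged fluents. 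Two observations close the gap to the target set. First, since $\Omega$ is finite the order of forgetting is immaterial (Fact~\ref{Fact_BasicPropertiesOfForgetting}), and doing the forgetting inside the \emph{combined} theory $\DD_{ss}[\Omega]\cup\DD_{S_0}$ is what correctly projects out the mutual constraints that $\DD_{S_0}$ and the context conditions impose among the old values of several changed fluents. Second, for every fluent atom outside $\Omega$ the local-effect property yields $\DD\models F(\bar t,S_0)\liff F(\bar t,S_\alpha)$, so relabelling such an atom from $S_0$ to $S_\alpha$ via $(S_\alpha/S_0)$ is semantically faithful. Reading a valid $S_0$-state off a model of $\DD_{S_\alpha}$ in one direction, and propagating any $\DD_{una}\cup\DD_{S_0}$-state forward in the other, then gives the required coincidence of $S_\alpha$-state collections, which together with the reduction above is exactly the equivalence in Definition~\ref{Def_Progression}.

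I expect the main obstacle to be the bookkeeping around this last match: showing that forgetting the old values $\Omega$ in the two-time-slice theory neither discards a genuine constraint on the new $S_\alpha$-state nor manufactures a spurious one. The delicate case is when a context condition $\gamma_F^\pm$ or an axiom of $\DD_{S_0}$ couples the old value of one changed fluent with that of another; here the facts that $\Omega$ is exactly the set of change-prone atoms (the local-effect hypothesis) and that the forgetting is carried out simultaneously over all of $\Omega$ are essential. These are also precisely what justify the $(S_\alpha/S_0)$ relabelling of the surviving $S_0$-atoms and guarantee that $\DD_{S_\alpha}$ is genuinely uniform in $S_\alpha$.
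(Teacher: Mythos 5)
There is nothing in the paper itself to compare your argument against: Proposition~\ref{Prop_ProgressionForLocalEffectCase} is stated as an imported result (Theorem~3.6 of \cite{ProgressionLocalEffect}) and the paper never proves it, only states and uses it. Judged against the proof in that cited source, your reconstruction follows essentially the same route: the two-slice reading of $\DD_{ss}[\Omega(S_0)]\cup\DD_{S_0}$, the semantic unwinding of iterated ground-atom forgetting (Definition~\ref{De_Forgetting}, Fact~\ref{Fact_BasicPropertiesOfForgetting}), the two payoffs of local-effectness (that $\Omega(S_0)$ contains every ground fluent atom whose value $\alpha$ can alter --- ``exactly'' is a slight overstatement, since context conditions may be unsatisfiable; only the superset direction holds and only it is needed --- and the frame biconditional $\DD\models F(\bar t,S_0)\liff F(\bar t,S_\alpha)$ for atoms outside $\Omega(S_0)$, which licenses the relabelling $(S_\alpha/S_0)$), and the erection of a model of $\DD$ over a prescribed $S_0$-state. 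On the last point you are right to appeal to the \emph{construction underlying} Proposition~\ref{Prop_RelativeSatisfiability} rather than to its statement: as stated it is only an equivalence of satisfiability, which by itself cannot produce a model of $\DD$ agreeing with a given structure on the situation-independent symbols and the initial fluent values.

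Two cautions for a complete write-up. First, your restriction ``to structures modelling the common part $\Sigma\cup\DD_{ss}\cup\DD_{ap}\cup\DD_{una}$'' silently repairs Definition~\ref{Def_Progression}: read verbatim, its backward direction quantifies over \emph{all} structures $\M$, and since $\sim_{S_\alpha}$ constrains neither the situation sort nor $do$, $\preceq$, $Poss$, one can perturb any model of $\DD$ (e.g.\ reinterpret $\preceq$ as the empty relation) into a structure that is $\sim_{S_\alpha}$-equivalent to it yet violates $\Sigma$; so no theory satisfies the definition literally. Your reading --- equivalent to Lin--Reiter's original pair of conditions, namely $\DD\models\DD_{S_\alpha}$ together with the forward model correspondence --- is the intended one and the only one under which the proposition is provable, but it should be stated as a repair of the definition, not folded into a convenience. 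Second, your closing paragraph correctly isolates, but does not carry out, the central bookkeeping: that forgetting all of $\Omega(S_0)$ simultaneously in the combined two-slice theory neither discards nor manufactures constraints on the $S_\alpha$-state. That verification is the actual body of the proof in \cite{ProgressionLocalEffect}; as it stands, your proposal is a faithful and correctly ordered outline, with no step that would fail, rather than a self-contained proof.
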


This formula demonstrates that progression is a set of formulas obtained after
forgetting old values of fluents in the initial theory and in instantiation of 
transformed SSAs that provide new values of fluents, and then replacing  $S_0$ 
with $S_\alpha$. Thus, computing a progression in a local-effect $\cal{BAT}$ 
is an effective syntactic transformation of the initial theory, which
leads to the \textit{unique} form of the updated theory
$\DD_{S_\alpha}$. This fact will be used in Theorem
\ref{Teo_PreserveCompLocalEffect}. It is important to realize that
this transformation can lead to an exponential blow-up of the initial
theory, as noted after Theorem 3.6 in
\cite{ProgressionLocalEffect}, due to the possible exponential
blow-up after forgetting a set of ground atoms. This is not a
surprise, because even in propositional logic, forgetting a symbol
in a formula is essentially the elimination of a ``middle term"
(introduced by Boole), which results in the disjunction of two
instances of the input formula \cite{LinAIjournal2001}. As a
consequence, forgetting may result in a formula that is roughly
twice as long as the input formula. It is important to realize
that the exponential blowup is not inevitable in the case of
progression. As shown in \cite{ProgressionLocalEffect}, there are
practical classes of the initial theories for which there is no
blow-up and the size of the progressed theory is actually linear wrt
the size of the initial theory.

\medskip

\textbf{Example \ref{BWonly} (continuation)}.
As was discussed before, all SSAs in this example are local effect. 
Instantiate the action variable $a$ in the SSAs with a ground action 
$move(C_1,C_2,C_3)$. Then, we get: 

\medskip
$
\hspace{-0.4cm}
\begin{array}{ll}
\! Clear(x, do(move(C_1,C_2,C_3), s)) \leftrightarrow & 
	\exists y, z\big(move(C_1,C_2,C_3)\! =\! move(y, x, z)\big) \lor\\
& \!\!\! Clear(x, s) \land \neg\exists y, z(move(C_1,C_2,C_3)\! =\! move(y, z, x)),\\
On(x, y, do(move(C_1,C_2,C_3), s)) \leftrightarrow & 
	\exists z \big(move(C_1,C_2,C_3)\! =\! move(x, z, y)\big)\ \lor \\
&  On(x,y,s) \land \neg\exists z\big(move(C_1,C_2,C_3)\! =\! move(x, y, z)\big).
\end{array}
$

\medskip
\noindent Applying UNA for actions yields the following axioms:

\medskip
$
\begin{array}{ll}
Clear(x, do(move(C_1,C_2,C_3), s)) \leftrightarrow & 
	\exists y,z \big(y\!=\!C_1\land x\!=\!C_2 \land z\!=\!C_3\big)\ \lor\\
&	Clear(x, s) \land \neg\exists y,z(y\!=\!C_1\land z\!=\!C_2\land x\!=\!C_3),\\
On(x,y,do(move(C_1,C_2,C_3),s)) \leftrightarrow & \exists z (x\!=\!C_1\land z\!=\!C_2\land y\!=\!C_3) \lor\\ 
&	On(x,y,s) \land \neg\exists z(x\!=\!C_1\land y\!=\!C_2\land z\!=\!C_3).
\end{array}
$

\medskip
\noindent Doing the equivalent first order simplifications yields the \textit{transformed} SSAs:

\medskip
$
\hspace{-0.2cm}
\begin{array}{ll}
Clear(x, do(move(C_1,C_2,C_3), s)) \leftrightarrow & 
	(x\!=\!C_2) \lor\ Clear(x, s) \land \neg (x\!=\!C_3),\\
On(x, y, do(move(C_1,C_2,C_3), s)) \leftrightarrow &  (x\!=\!C_1\land y\!=\!C_3) \lor 
	On(x,y,s) \land \neg(x\!=\!C_1\land y\!=\!C_2).
\end{array}
$

\medskip
The \ \textit{argument set} $\Delta_F$ for the fluent $F$ 
wrt a ground action $\alpha$ is a set of constants appearing in the transformed 
SSA for $F$  instantiated with $\alpha$.  For example, the set $\Delta_{clear}$ 
for the fluent $Clear(x,s)$ wrt a ground action $move(C_1,C_2,C_3)$ is $\{C_2,C_3\}$. 
For the fluent $On(x,y,s)$ this argument set $\Delta_{on}$ is 
$\{\langle C_1,C_3\rangle, \langle C_1,C_2\rangle \}$.
The\ \textit{characteristic set} $\Omega$ of a ground action $move(C_1,C_2,C_3)$ 
is a set of all ground atoms subject to change by this action. Therefore\\
$ \hspace*{0.5in}
\Omega(s)=\{Clear(C_2,s), Clear(C_3,s), On(C_1,C_3,s), On(C_1,C_2,s)\}$.\\
Notice that if block $C_3$ is clear at $s$, it no longer remains clear after doing
$move(C_1,C_2,C_3)$ action, but block $C_2$ will become clear. However, this action
has no effect on the property of $C_1$ being clear, and for this reason, $C_1$
is not in $\Delta_{clear}$ and not in the characteristic set $\Omega$.

Using these atoms to instantiate the transformed SSA, i.e., by replacing object
arguments with constants from $\Delta_F$, we obtain the set 
${\cal D}_{ss}[\Omega]$ of formulas representing new values of fluents, e.g.,

\medskip 

$\ \ 
On(C_1,C_3,do(move(C_1,C_2,C_3),S_0))\leftrightarrow\\
	\ttab\ttab\ttab\ttab\ttab
C_1\!=\!C_1\land C_3\!=\!C_3\ \lor\  On(C_1,C_3,S_0)\land \neg(C2\!=\!C_3\land C_3\!=\!C_2).
$

\medskip

After the equivalent simplifications using UNA, the instantiated SSAs  wrt 
$\Omega(S_\alpha)$, where $S_\alpha=do(move(C_1,C_2,C_3),S_0)$ 
will be the following set:

\medskip

\ttab\ttab	 $\{Clear(C_2,S_\alpha), \neg Clear(C_3,S_\alpha), 
On(C_1,C_3,S_\alpha), \neg On(C_1,C_2,S_\alpha) \}$.

\medskip

Note that in this example ${\cal D}_{ss}[\Omega]$ are very simple,  but in a
general case, if a SSA includes context conditions, these axioms may include 
fluents wrt $S_0$. Finally, according
to Proposition \ref{Prop_ProgressionForLocalEffectCase}, to compute a progression
$\DD_{S_\alpha}$ of an initial theory $\DD_{S_0}$ for BW, we have to forget
all old values of the fluents from $\Omega(S_0)$ in the theory 
$\DD_{ss}[\Omega]\cup \DD_{S_0}$, and subsequently replace the situation
$S_\alpha$ with $S_0$.

\medskip

Now we are ready to formulate the results on component properties of
progression in terms of decomposability and inseparability. We
start with negative examples in which every $\cal{BAT}$ is
local-effect and the initial theories are formulated in first-order logic.  
As the progression $\DD_{S_\alpha}$ is a set
of formulas uniform in some situation term $S_\alpha$, which may
occur in every formula of $\DD_{S_\alpha}$ (thus potentially
spoiling decomposability), we consider the mentioned decomposability and inseparability
properties regarding the theory $\DD_{S_\alpha}(S_0/S_\alpha)$ instead of
$\DD_{S_\alpha}$. Otherwise, in every result we would have to
speak of $\D\cup\sig(S_\alpha)$--decomposability of progression\commentout{
instead of just $\D$--decomposability}, since the symbols from
$\sig(S_\alpha)\! =\! \sig(do(\alpha,S_0))\! =\! \{do,S_0\} \cup \sig(\alpha)$
may occur in all components.

Consider a $\cal{BAT}$ $\DD$ with $\D$--decomposable initial
theory $\DD_{S_0}$ for a signature $\D$. The general definition
of a successor state axiom gives enough freedom to design examples
showing (non-)preservation of the decomposability property of
$\DD_{S_0}$ or inseparability of its components. Note that an SSA may
contain symbols that are not even present in $\sig(\DD_{S_0})$, or
symbols from both components of $\DD_{S_0}$ (if decomposition
exists). Therefore,
it makes sense to restrict our study to those $\cal{BAT}$s, where
SSAs have one of the well-studied forms, e.g., to
local-effect theories. It turns out that this form is still
general enough to easily formulate negative results demonstrating that the
aforementioned properties are not preserved without stipulations.

First, we provide an example showing that the
decomposability property of the initial theory can be easily lost
under progression. 
Next, we show that $\D$--inseparability of components of the initial theory
$\DD_{S_0}$ can be easily lost when fluents are present in $\D$
(see Example \ref{Ex_LossOfInseparability}). The third observation is
that even if there are no fluents in $\D$, some components of
$\DD_{S_0}$ can split after progression into theories which are no
longer inseparable (see Example \ref{Ex_SplitOfComponent}). All
observations hold already for local-effect $\cal{BAT}$s and follow
from the fact that some new information from
SSAs can be added to the initial theory after progression, which spoils its component
properties. We only need to provide a combination of an initial
theory with a set of SSAs that are appropriate for this purpose. The aim of
Theorem \ref{Teo_PreservationOfInseparability} following these
negative examples is to prove that if $\D$ does not contain
fluents and the components of $\DD_{S_0}$ do not split after
progression, then $\D$--inseparability is preserved after
progression under a slight stipulation which is caused only by
generality of the theorem and the non-uniqueness of progression in
the general case. This stipulation is avoided in Theorem
\ref{Teo_PreserveCompLocalEffect}, where we consider the class of
local-effect $\cal{BAT}$s. Recall that all free variables in axioms of $\cal{BAT}$s 
are assumed to be universally quantified.

\medskip

\begin{example}[Decomposability lost under progression]\label{Ex_DecompLostUnderProgression}
Consider basic action theory $\DD$, with
$\{F,P,A,c\}\subseteq\sig(\DD)$, where $F$ is a 
fluent, $P$ a predicate, $A$ an action function, and $c$ an object
constant. Let the theory $\DD_{ss}$ consist of the single axiom

\begin{equation*}
F(x,do(a,s))\leftrightarrow (a=A(x))\wedge P(x) \ \vee \ F(x,s)
\end{equation*}

\noindent 
and let the initial theory $\DD_{S_0}$ consist of two
axioms $\neg F(c,S_0)$ and $\exists x P(x)$. Clearly, $\DD_{S_0}$ is a $\varnothing$--decomposable.

Consider action $\alpha=A(c)$ and let us compute progression of $\DD_{S_0}$ wrt $\alpha$. We apply Proposition \ref{Prop_ProgressionForLocalEffectCase}, since $\DD$ is
local-effect. The instantiation of the SSA from $\DD_{ss}$ with $\alpha$ has the form 
$$F(x,do(A(c),s))\leftrightarrow (A(c)=A(x))\wedge P(x) \ \vee \ F(x,s)$$

for which equivalent rewriting wrt $\DD_{una}$ gives
$$F(x,do(A(c),s))\leftrightarrow (x=c)\wedge P(x) \ \vee \ F(x,s)$$

 Hence, we have $\Omega(S_0)=\{F(c,S_0)\}$ and 
$\DD_{ss}[\Omega(S_0)]=F(c,S_\alpha)\leftrightarrow P(c) \ \vee \ F(c,S_0).$

Since $\neg F(c,S_0)\in\DD_{S_0}$, the theory $\DD_{ss}[\Omega(S_0)]\cup\DD_{S_0}$ is equivalent to $\{F(c,S_\alpha)\leftrightarrow P(c)\}\cup\DD_{S_0}$. By Proposition \ref{Prop_ProgressionForLocalEffectCase}, forgetting the ground atom $F(c,S_0)$ in this theory and substituting $S_0$ with $S_\alpha$ gives the theory $\DD_{S_\alpha}$, the progression of $\DD_{S_0}$ wrt $\alpha$. By using the definition of forgetting, it is easy to confirm that $\DD_{S_\alpha}$ is equivalent to $\{F(c,S_\alpha)\leftrightarrow P(c), \ \exists x P(x)\}$. One can verify that $\DD_{S_\alpha}$ (and also $\DD_{S_\alpha}(S_0/S_\alpha)$) is not $\D$--decomposable theory, for any $\D$. Notice that decomposability is lost, because fluent $F$ and predicate $P$ from different components of $\DD_{S_0}$ become related to each other after progression.
\end{example}

For a signature $\D$, with $S_0\in\D$, and an action $A(c)$, we
now give an example of a local-effect basic action theory $\DD$
with $\DD_{S_0}$, an initial theory $\D$--decomposable into finite
$\D$--inseparable components. This example shows that progression
$\DD_{S_\alpha}(S_0/S_\alpha)$ of $\DD_{S_0}$ wrt $A(c)$ (with
term $S_\alpha$ substituted with $S_0$) is finitely axiomatizable
and $\D$--decomposable, but the decomposition components are no
longer $\D$--inseparable, unless we allow them to be infinite.

\begin{example}[$\D$--inseparability is lost when fluents are in $\D$]\label{Ex_LossOfInseparability}
Consider a basic action theory $\DD$ with
$\{F,P,R,A,b,c\}\subseteq\sig(\DD)$, where $F$ is a
fluent, $P,R$ are predicates, $A$ an action function, and $b,c$ object constants. Let
$\D=\{F,R,S_0,c\}$ and define subtheories of $\DD$ as follows:\\

$\begin{array}{l}
\DD_{ss}=\{F(x,do(a,s))\leftrightarrow (a=A(x))\wedge P(x) \ \vee \ F(x,s)\} \ \ \text{(i.e. as in the previous example)}\vspace{0.2cm}\\ 
\DD_{S_0}=\DD_1\cup\DD_2, \text{with} \vspace{0.2cm}\\

\hspace{1.05cm} \DD_1=\{Taut(F,R,S_0,b), \ \neg F(c,S_0) \}, \ \text{where} \ Taut(F,R,S_0,b,c) \  \text{is a}\\ 
\hspace{1.05cm} \text{tautological formula in the signature} \ \{F,R,S_0,b,c\}, \text{which is uniform in}\ S_0\vspace{0.1cm}\\

\hspace{1.05cm} \DD_2=\{P(x)\rightarrow\exists y (R(x,y)\wedge P(y)), \
\neg F(c,S_0)\}.
\end{array}$\\

By the syntactic form, $\DD_{S_0}$ is $\D$--decomposable: we have
$\DD_{S_0}=\DD_1\cup\DD_2$, $\sig(\DD_1)\allowbreak\cap\sig(\DD_2)=\D$,
$\sig(\DD_1)\setminus\D=\{b\}$, and
$\sig(\DD_2)\setminus\D=\{P\}$. It is also easy to confirm that $\DD_1$ and $\DD_2$ are $\D$--inseparable. \smallskip


By Proposition \ref{Prop_ProgressionForLocalEffectCase} it is easy
to verify that the union of $\{Taut(F,R,S_0,b,c)\}$ and
$\DD_2'=(\DD_2\setminus\{\neg F(c,S_0)\})\cup\{\varphi\}$, where $\varphi=F(c,S_\alpha)\leftrightarrow P(c)$ is a progression ($\DD_{S_\alpha}$) of $\DD_{S_0}$ wrt $\alpha=A(c)$. \smallskip

By the syntactic form, $\DD_{S_\alpha}(S_0/S_\alpha)$ is a
$\D$--decomposable theory. On the other hand, we have
$\varphi\models F(c,S_\alpha)\rightarrow P(c)$, thus
\[	
\DD_2'(S_0/S_\alpha)\models\{ F(c,S_0)\rightarrow \exists y R(c,y)\, , \ \ 
F(c,S_0) \rightarrow \exists y\exists z[ R(c,y)\wedge R(y,z)]\, ,\ \ldots\}
\] 
This is an infinite set of formulas in signature $\D$. It follows from Fact \ref{Fact_ConsNotFinitelyAx} that this theory is not finitely
axiomatizable by formulas of first-order logic in signature $\D$
and it is easy to verify that $\DD_{S_\alpha}(S_0/S_\alpha)$ can not have a decomposition into finite $\D$--inseparable components.
\end{example}

Note that in the example above, the initial theory $\DD_{S_0}$
is in fact $\varnothing$--decompo\-sable with one signature
component equal to $\{b\}$ and the other component containing the
rest of the symbols. It is easy to see that the progression of
$\DD_{S_0}$ wrt $A(c)$ is $\varnothing$--decomposable as well. We
use tautologies in the example just to illustrate the idea that
information from SSA can propagate to the initial theory after
progression, thus making the components lose the inseparability
property. There is a plenty of freedom to formulate similar
examples with the help of non-tautological formulas which
syntactically ``bind'' symbols $F,R,S_0,b$ in the theory $\DD_1$. We
appeal to a similar observation in Example
\ref{Ex_SplitOfComponent}. 

\begin{example}[Split of a component and loss of $\D$--inseparability]\label{Ex_SplitOfComponent}
Consider $\cal{BAT}$ $\DD$,\newline with
$\{F_1,F_2,D,B,P,R,A,c\}\subseteq\sig(\DD)$, where $F_1, F_2$ are
fluents, $D,B,P,R$ predicates, $A$ an action function, and $c$ an object
constant. Let $\D=\{D,R,S_0\}$ and define the subtheories of $\DD$
as follows:\\

$\begin{array}{l}
\DD_{ss}=\{F_1(x,do(a,s))\leftrightarrow F_1(x,s)\wedge\neg(a=A(x)),
\ \ F_2(x,do(a,s))\leftrightarrow F_2(x,s)\}\vspace{0.2cm}\\

\DD_{S_0}=\DD_1\cup\DD_2, \text{where}\ \DD_1 \ \text{is the set of
formulas with occurrences of} \ D,R,S_0\text{:}\vspace{0.1cm}\\

\hspace{1.05cm} D(x)\vee R(x,y)\rightarrow F_1(c,S_0)
\vspace{0.1cm}\\

\hspace{1.05cm} D(x)\rightarrow P(x)
\vspace{0.1cm}\\

\hspace{1.05cm} P(x)\rightarrow\exists y (R(x,y)\wedge P(y))
\end{array}$\vspace{0.1cm}\\

and $\DD_2$ consists of the following three formulas (which also mention $D,R,S_0$):\\

$\begin{array}{l}
\hspace{1.05cm} D(x)\rightarrow B(x)
\vspace{0.1cm}\\

\hspace{1.05cm} B(x)\rightarrow\exists y (R(x,y)\wedge B(y))
\vspace{0.1cm}\\

\hspace{1.05cm} Taut(F_2,S_0), \text{a tautology in the signature}\ \{F_2,S_0\}, \text{uniform in} \ S_0. \ \text{Here,}\\ \hspace{1.05cm}  F_2 \ \text{is an auxiliary fluent introduced to have an occurrence of} \ S_0 \ \text{in} \ \DD_2.
\end{array}$\vspace{0.1cm}\\

By definition, $\DD_{S_0}$ is $\D$--decomposable into
$\D$--inseparable components $\DD_1$ and $\DD_2$. Note that
$\DD_{ss}\models \neg F_1(c,do(A(c),S_0))$, which is the result of
substitution of the ground action $A(c)$, situation constant $S_0$,
and object constant $c$ in SSA. \smallskip

Consider progression of $\DD_{S_0}$ wrt the action $\alpha=A(c)$. By
Proposition \ref{Prop_ProgressionForLocalEffectCase}, it is
equivalent to the theory
$\DD_{S_{\alpha}}=\DD_1'\cup\DD_1''\cup\DD_2'$, where $\DD_1'$ is
the set of the following formulas:\\

$\begin{array}{l}

\neg F_1(c,do(A(c),S_0))\vspace{0.2cm}\\

Taut(D,R), \ \text{a tautological formula in the
signature } \ \{D,R\} \ \text{which is uniform }

\text{in} \ S_\alpha,

\end{array}$\\

\noindent $\DD_1''$ is the set of formulas:\\

$\begin{array}{l}
D(x)\rightarrow P(x)\vspace{0.1cm}\\

P(x)\rightarrow\exists y (R(x,y)\wedge P(y))\vspace{0.1cm}\\

Taut(F_2,S_\alpha), \ \text{a tautological formula in the
signature } \ \{F_2,do,A,c,S_0\} \ \text{which}\\

\text{is uniform in} \ S_\alpha

\end{array}$\\

\noindent and $\DD_2'$ is the theory $\DD_2$ with every occurrence of $S_0$
substituted with $S_\alpha$. \smallskip

Clearly, $\DD_{S_{\alpha}}(S_0/S_\alpha)$ is $\D$--decomposable. Note that after progression the component $\DD_1$ is ``split'' into $\DD_1'(S_0/S_\alpha)$ and $\DD_1''(S_0/S_\alpha)$
and these theories are not $\D$--inseparable (similarly,
$\DD_1'(S_0/S_\alpha)$ and $\DD_2'(S_0/S_\alpha)$). By
Fact \ref{Fact_ConsNotFinitelyAx}, it can be shown that they can not be made
$\D$--inseparable while remaining finitely axiomatizable.
\end{example}

To formulate the theorems below, we let $\DD$ denote a $\cal{BAT}$
with the initial theory $\DD_{S_0}$, the set of successor state axioms
$\DD_{ss}$, and the unique name axioms $\DD_{una}$. Example \ref{Ex_LossOfInseparability} has resulted in the following definition.

\begin{de}[Fluent--free signature]
A signature $\D$ is called \emph{fluent--free} if no fluent (from the
alphabet of situation calculus) is contained in $\D$.
\end{de}


Theorem 4.2 complements Examples 4.3 and 4.4, which identify properties of decomposed actions theories causing loss of inseparability of components after progression. The theorem shows that if these properties are absent then inseparability is preserved. As we have
already seen in Example \ref{Ex_LossOfInseparability}, the initial
theory and progression may differ in consequences involving
symbols of fluents. Thus in general, preservation of
$\D$--inseparability can be guaranteed only for fluent-free
signatures $\D$, which is reflected in the conditions of the theorem.
Besides, by the model-theoretic Definition
\ref{Def_Progression}, progression is not uniquely defined -- there
is no restriction on occurrences of the unique name axioms
in progression, which may easily lead to loss of
inseparability of the components. In other words, progression may
logically imply unique name axioms even if the
initial theory did not imply them. Some decomposition components
of progression may imply such formulas, while the others may not.
For this reason, we speak of inseparability ``modulo''
theory $\DD_{una}$ in the first point of the theorem below. In particular, we have to
make the assumption that not only the components $\{D_i\}_{i\in
I\subseteq\omega}$ of the initial theory are pairwise
$\D$--inseparable, but so are the theories $\{\DD_{una}\cup
D_i\}_{i\in I}$. 

For fluent-free \textit{deltas}, the progression entails exactly those $\D$-formulas, which are entailed already by the initial theory (together with UNA-axioms), and the question is how these formulas can be ``distributed'' between the components. The second point of the theorem rules out the case (described in Example \ref{Ex_SplitOfComponent}), when $\D$-consequences are split between the components of progression. Note that in the theorem we do not specify how the progression was obtained (cf. Theorem
\ref{Teo_PreserveCompLocalEffect}) and the only condition that
relates the components of progression with those of the initial theory
says about containment of $\D$--consequences. 

\begin{teo}[Preservation of $\D$-insep. for fluent-free $\D$]\label{Teo_PreservationOfInseparability}
Let $\LL$ have PIP and $\DD$ be a $\mathcal{BAT}$ in which
$\DD_{S_0}$ and $\DD_{una}$ are theories in $\LL$. Let
$\si\subseteq\sig(\DD_{S_0})$ be a fluent--free signature and
denote $\D=\sig(\DD_{una})\cup\si$. Suppose the following:
\begin{itemize}
\item $\DD_{S_0}$ is $\si$--decomposable with some components
$\{D_i\}_{i\in I\subseteq\omega}$ such that the theories from
$\{\DD_{una}\cup D_i\}_{i\in I}$ are pairwise $\D$--inseparable;

\item $\DD_{S_\alpha}(S_0/S_\alpha)$ is equivalent to the union of
theories $\{D'_j\}_{j\in J\subseteq\omega}$ such that for every
$j\in J$ and some $i\in I$, $ \ \Cons(\DD_{una}\cup
D'_j,\D)\supseteq \Cons(\DD_{una}\cup D_i,\D)$.
\end{itemize}
Then the theories from $\{\DD_{una}\cup D'_j\}_{j\in
J\subseteq\omega}$ are pairwise $\D$--inseparable.
\end{teo}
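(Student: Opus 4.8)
The plan is to prove that every theory $\DD_{una}\cup D'_j$ has one and the same set of $\D$--consequences, namely $C:=\Cons(\DD_{una}\cup\DD_{S_0},\D)$; once this is established, pairwise $\D$--inseparability of the $\{\DD_{una}\cup D'_j\}_{j\in J}$ is immediate from Definition~\ref{Def_Inseparable}. I would first pin down $C$ as the common value of the $\Cons(\DD_{una}\cup D_i,\D)$. Since $\DD_{S_0}\equiv\bigcup_{i\in I}D_i$, the family $\{\DD_{una}\cup D_i\}_{i\in I}$ is a $\D$--decomposition of $\DD_{una}\cup\DD_{S_0}$: for distinct $i,i'$ one computes $(\sig(\DD_{una})\cup\sig(D_i))\cap(\sig(\DD_{una})\cup\sig(D_{i'}))=\sig(\DD_{una})\cup(\sig(D_i)\cap\sig(D_{i'}))=\sig(\DD_{una})\cup\si=\D$, while the union of the signatures is $\sig(\DD_{una})\cup\sig(\DD_{S_0})$. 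As these components are assumed pairwise $\D$--inseparable and $\LL$ has PIP, the family version of Fact~\ref{Fact_Reduc2Comp} gives $\Cons(\DD_{una}\cup\DD_{S_0},\D)=\Cons(\DD_{una}\cup D_i,\D)$ for every $i$, so all these sets coincide with $C$.

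The lower bound is then free: the second hypothesis states that for each $j$ there is an $i$ with $\Cons(\DD_{una}\cup D'_j,\D)\supseteq\Cons(\DD_{una}\cup D_i,\D)=C$. For the matching upper bound $\Cons(\DD_{una}\cup D'_j,\D)\subseteq C$, note that $\DD_{S_\alpha}(S_0/S_\alpha)\equiv\bigcup_j D'_j$ entails each $\DD_{una}\cup D'_j$, so it is enough to prove the theory--level inclusion $\Cons(\DD_{una}\cup\DD_{S_\alpha}(S_0/S_\alpha),\D)\subseteq C$.

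I would establish this inclusion contrapositively. Fix $\varphi$ with $\sig(\varphi)\subseteq\D$ and $\varphi\notin C$, i.e. $\DD_{una}\cup\DD_{S_0}\not\models\varphi$. Because $\D$ is fluent--free we may take $\varphi$ uniform in $S_0$, so $\neg\varphi$ is a legitimate initial--theory axiom and $\DD_{una}\cup\DD_{S_0}\cup\{\neg\varphi\}$ is satisfiable. Applying Proposition~\ref{Prop_RelativeSatisfiability} to the basic action theory whose initial theory is $\DD_{S_0}\cup\{\neg\varphi\}$ yields a model $\M'\models\DD$ with $\M'\models\neg\varphi$. Definition~\ref{Def_Progression} then provides a model $\M$ of $\Si\cup\DD_{ss}\cup\DD_{ap}\cup\DD_{una}\cup\DD_{S_\alpha}$ with $\M\sim_{S_\alpha}\M'$; since $\sim_{S_\alpha}$ preserves every situation--independent symbol and $\varphi$ is fluent--free, $\M\not\models\varphi$. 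Finally, relocating the fluent interpretations of $\M$ from $S_\alpha$ back to $S_0$ by Lemma~\ref{Lemma_ModelPropertyofUniformTheories} produces a model of $\DD_{una}\cup\DD_{S_\alpha}(S_0/S_\alpha)$ with the same interpretation of all situation--independent symbols, which therefore still refutes $\varphi$. Hence $\varphi\notin\Cons(\DD_{una}\cup\DD_{S_\alpha}(S_0/S_\alpha),\D)$. Combining the two bounds gives $C\subseteq\Cons(\DD_{una}\cup D'_j,\D)\subseteq\Cons(\DD_{una}\cup\DD_{S_\alpha}(S_0/S_\alpha),\D)\subseteq C$, so $\Cons(\DD_{una}\cup D'_j,\D)=C$ for all $j$, which proves the theorem.

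The step I expect to be the main obstacle is the upper bound, and specifically keeping the fluent--free witness $\varphi$ under control across the three successive model transformations: extending a model of $\DD_{una}\cup\DD_{S_0}$ to a model of the whole $\DD$, passing to a model of the progressed theory through the $\sim_{S_\alpha}$--clause of Definition~\ref{Def_Progression}, and substituting $S_0$ for $S_\alpha$ via Lemma~\ref{Lemma_ModelPropertyofUniformTheories}. Each transformation is harmless only because it leaves the situation--independent reduct (where a fluent--free $\varphi$ is evaluated) untouched, and the delicate point will be to justify that the relocation of fluent values in the first and last steps can be carried out while preserving that reduct. This is exactly where the fluent--freeness of $\D$ together with the model--theoretic content of Proposition~\ref{Prop_RelativeSatisfiability} and Lemma~\ref{Lemma_ModelPropertyofUniformTheories} are indispensable; the care needed with the situation argument of $\varphi$ (and hence the reduction to $\varphi$ uniform in $S_0$) is the remaining technical subtlety I would have to handle explicitly.
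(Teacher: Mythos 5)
Your skeleton coincides with the paper's: first identify every $\Cons(\DD_{una}\cup D_i,\D)$ with $C=\Cons(\DD_{una}\cup\DD_{S_0},\D)$ using PIP (the paper's point~1; note that your appeal to a ``family version'' of Fact~\ref{Fact_Reduc2Comp} silently uses the step, proved there, that pairwise $\D$--inseparability of the theories $\DD_{una}\cup D_i$ lifts to $\D$--inseparability of $\DD_{una}\cup D_i$ from the union of the remaining components), and then squeeze $\Cons(\DD_{una}\cup D'_j,\D)$ between $C$ and the $\D$--consequences of $\DD_{una}\cup\DD_{S_\alpha}(S_0/S_\alpha)$, bounding the latter by $C$ through a model chain built from Proposition~\ref{Prop_RelativeSatisfiability}, Definition~\ref{Def_Progression}, and Lemma~\ref{Lemma_ModelPropertyofUniformTheories} --- exactly the tools of the paper's points~2 and~3.

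The genuine gap is the sentence ``Because $\D$ is fluent--free we may take $\varphi$ uniform in $S_0$.'' That is not a harmless normalization. A $\D$--formula may contain situation terms even when $\D$ is fluent--free, namely in equalities between situation terms (recall $S_0\in\si$ is permitted) or under quantifiers over the sort \textit{situation}; for instance $\exists s_1\exists s_2\,(s_1\neq s_2)$ has empty non-logical signature and so lies in every $\Cons(\cdot,\D)$ candidate set. For such $\varphi$ your chain breaks at its first link: $\neg\varphi$ is not uniform in $S_0$, so $\DD_{S_0}\cup\{\neg\varphi\}$ is not a legitimate initial theory and Proposition~\ref{Prop_RelativeSatisfiability} does not apply. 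Indeed, with $\varphi$ the sentence above, $\DD_{una}\cup\DD_{S_0}\cup\{\neg\varphi\}$ is satisfiable (it has one-situation models by Lemma~\ref{Lemma_ModelPropertyofUniformTheories}), whereas $\DD\cup\{\neg\varphi\}$ is unsatisfiable, because the foundational axioms $\Si$ force infinitely many situations. So the reduction to situation-term-free witnesses is itself the hard part of the theorem, and it is precisely where the paper spends its effort: the second half of its point~2 shows that every member of $\Cons(\DD_{una}\cup\DD_{S_\alpha},\D)$ is entailed by situation-term-free formulas of that set, by writing any $\D$-sentence as a boolean combination of situation-term-free sentences and cardinality constraints $\exists^{[n,m]}\theta_=$ on the sort \textit{situation}, and then using Lemma~\ref{Lemma_ModelPropertyofUniformTheories} to show that $\DD_{una}\cup\DD_{S_\alpha}$ cannot entail any nontrivial constraint of the latter kind. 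You flag this as ``the remaining technical subtlety,'' but flagging is not proving: everything else in your proposal goes through only for situation-term-free $\varphi$, so without this argument (or an equivalent one) the proof is incomplete.
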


\begin{proof} 
Let us demonstrate that for all $j\in J$ we have
 $\Cons(\DD_{una}\cup D'_j,\D)=\Cons(\DD_{una}\cup\DD_{S_0},\D)$, from which the
statement of the theorem obviously follows. Essentially, we prove
the following inclusions (the corresponding points of the proof
are marked with circles):

\begin{center}

\makebox[12cm]{

{\footnotesize $\Cons(\DD_{una}\cup\DD_{S_\alpha}(S_0/S_{\alpha}),  \D) \ \subseteq \ \Cons(\DD_{una}\cup\DD_{S_\alpha}, \D) \ \subseteq \ \Cons(\DD_{una}\cup\DD_{S_0},  \D)$}

\put(-95,15){\circle{11}} \put(-97.5,12){2}

\put(-202,15){\circle{11}} \put(-204.5,12){3}

\put(-20,-17){\circle{11}} \put(-22.3,-19.9){1}

\put(-40,-20){\rotatebox[origin=c]{270}{$\subseteq$}}

\put(-155,-38){$\supseteq$}

\put(-152,-48){\circle{11}} \put(-154.8,-51){4}

\put(-182,-25){\framebox{{\tiny Theorem conditions}}}

\put(-286,-20){\rotatebox[origin=c]{90}{$\subseteq$}}

\put(-298,-17){\circle{11}} \put(-300.8,-19.8){4}

\put(-272,-20){\framebox{{\tiny Theorem conditions}}}

\put(-85,-38){{\footnotesize $\Cons(\DD_{una}\cup\DD_i,\D)$}}

\put(-320,-38){{\footnotesize $\Cons(\DD_{una}\cup\DD_j',\D)$}}

}
\end{center}

\medskip

1) Note that for any $i\in I$, $\DD_{S_0}$ is $\si$--decomposable
with components $D_i$ and $\bigcup_{k\in I\setminus\{i\}}D_k$. We
claim that $\DD_{una}\cup D_i$ and $\DD_{una}\cup\bigcup_{k\in
I\setminus\{i\}}D_k$ are $\D$--inseparable. Let $\varphi$ be a
formula in signature $\D$. If $\DD_{una}\cup D_i\models\varphi$
then clearly, $\DD_{una}\cup\bigcup_{k\in
I\setminus\{i\}}D_k\allowbreak\models\varphi$ by $\D$--inseparability from
the condition of the theorem. On the other hand, if
$\DD_{una}\cup\bigcup_{k\in I\setminus\{i\}}D_k\models\varphi$
then by PIP we have $\bt_{una}\cup\bigcup_{k\in
I\setminus\{i\}}\bt_k\models\varphi$, where
$\DD_{una}\models\bt_{una}$,
$\sig(\bt_{una})\subseteq\sig(\DD_{una})$ and $D_k\models\bt_k$
for $k\in I\setminus\{i\}$, $\sig(\bt_k)\subseteq\D$. Again, by
$\D$--inseparability, for each $k\in I\setminus\{i\}$ we have
$\DD_{una}\cup D_i\models\bt_k$ and thus, $\DD_{una}\cup
D_i\models\varphi$.

Therefore, if $\varphi\in\Cons(\DD_{una}\cup\DD_{S_0},\D)$, then
for every $i\in I$, $[\DD_{una}\cup\bigcup_{k\in
I\setminus\{i\}}D_k]\ \allowbreak \cup \ [\DD_{una}\cup D_i]\models\varphi$
and then by PIP and inseparability shown above, $\DD_{una}\cup
D_i\models\varphi$. Since $\DD_{S_0}\models\bigcup_{i\in I}D_i$ by
decomposability, we obtain $\Cons(\DD_{una}\cup
\DD_{S_0},\D)=\Cons(\DD_{una}\cup D_i,\D)$ for all $i\in I$.

\smallskip

2) Let us show that $\Cons(\DD_{una}\cup
\DD_{S_\alpha},\D)\subseteq\Cons(\DD_{una}\cup\DD_{S_0},\D)$.
First, take a formula $\psi\in\Cons(\DD_{una}\cup
\DD_{S_\alpha},\D)$, which does not contain situation terms. From
the definition of progression, every model of $\DD$ is a model of
$\DD_{una}\cup\DD_{S_\alpha}$, so
$\DD\models\DD_{una}\cup\DD_{S_\alpha}$ and hence,
$\DD\models\psi$. If $\DD_{una}\cup\DD_{S_0}\not\models\psi$,
then $\DD_{una}\cup\DD_{S_0}\cup\{\neg\psi\}$ is satisfiable and
since $\psi$ is a uniform formula, by Proposition
\ref{Prop_RelativeSatisfiability}, $\DD\cup\{\neg\psi\}$ is
satisfiable, which contradicts $\DD\models\psi$. Therefore,
$\DD_{una}\cup\DD_{S_0}\models\psi$.

It remains to verify that the set $\Cons(\DD_{una}\cup
\DD_{S_\alpha},\D)$ is axiomatized by sentences which do not
contain situation terms. We have
$\D=\sig(\DD_{una})\cup\si\subseteq\sig(\DD_{una})\cup\sig(\DD_{S_0})$,
so $\{do,\preceq,Poss\}\cap\D=\varnothing$, by definition of
$\DD_{una}$ and $\DD_{S_0}$. As $\si$ if fluent-free by the
condition of the theorem (and $\sig(\DD_{una})$ is fluent-free by
definition of $\cal{BAT}$), $\D$ may contain only
situation--independent predicates and functions. Thus, any formula
$\varphi\in\Cons(\DD_{una}\cup \DD_{S_\alpha},\D)$ may contain
situation terms only in equalities, where each term is either the
constant $S_0$ (in case $S_0\in\si$) or a bound variable of sort
situation. Suppose that this is the case and there is no
$\psi\in\Cons(\DD_{una}\cup \DD_{S_\alpha},\D)$ such that
$\psi\models\varphi$ and $\psi$ does not contain situation terms.
By the syntax of $\LL_{sc}$ and the choice of $\D$, then $\varphi$
is a boolean combination of formulas without situation terms and
sentences over signature $\{S_0\}$ stating that $\varphi$ has a
model with cardinality $|Sit|$ of sort \textit{situation} lying in
the interval $[n,m]$ for $n\in\omega$ and
$m\in\omega\cup\{\infty\}$. We denote sentences of this form by
$\exists^{[n,m]}\theta_=$. We may assume that $\varphi$ is in
conjunctive normal form and that there is a formula $\xi$, a boolean
combination of $\exists^{[n,m]}\theta_=$ such that
$\not\models\xi$, $\not\models\neg\xi$, either $\xi$ or
$\xi\vee\eta$ is a conjunct of $\varphi$, and
$\eta\not\in\Cons(\DD_{una}\cup \DD_{S_\alpha},\D)$,
$\sig(\eta)\subseteq\D$, is a formula without situation terms. As
$\not\models\xi$ and $\not\models\neg\xi$, there are
$n,m\in\omega$ such that $\xi$ does not have a model with
$|Sit|=n$ and $\neg\xi$ does not have a model with $|Sit|=m$. Then
by Lemma \ref{Lemma_ModelPropertyofUniformTheories}, we conclude
that $\DD_{una}\cup \DD_{S_\alpha}\not\models\xi$ and
$\DD_{una}\cup \DD_{S_\alpha}\not\models\neg\xi$. In particular,
$\xi$ can not be a conjunct of $\varphi$. If $\xi\vee\eta$ is a
conjunct, then there exists a model $\M$ of $\DD_{una}\cup
\DD_{S_\alpha}$ such that $\M\models\xi$ and $\M\not\models\eta$.
Then, by applying Lemma \ref{Lemma_ModelPropertyofUniformTheories}
again, there must be a model $\M'$ of $\DD_{una}\cup
\DD_{S_\alpha}$ with $|Sit|=n$ where the interpretation of
situation--independent predicates and functions is the same as in
$\M$. Thus, $\M'\not\models \xi$ and since $\eta$ does not contain
situation terms, $\M'\not\models \eta$, which contradicts
$\DD_{una}\cup \DD_{S_\alpha}\models\varphi$.

\smallskip

3) Now let us demonstrate that
$\Cons(\DD_{una}\cup\DD_{S_\alpha}(S_0/S_\alpha),\D)\subseteq\Cons(\DD_{una}\cup\DD_{S_\alpha},\D)$.
Note that $\DD_{una}\cup\DD_{S_\alpha}(S_0/S_\alpha)$ is uniform
in $S_0$. Following the above proved, assume that there is a
formula
$\varphi\in\Cons(\DD_{una}\cup\DD_{S_\alpha}(S_0/S_\alpha),\D)$
such that $\varphi$ does not contain situation terms and
$\DD_{una}\cup\DD_{S_\alpha}\not\models\varphi$. Take a model $\M$
of $\DD_{una}\cup\DD_{S_\alpha}$ such that $\M\not\models\varphi$.
Then, by Lemma \ref{Lemma_ModelPropertyofUniformTheories}, there
exists a model $\M'$ of $\DD_{una}\cup\DD_{S_\alpha}$ such that
the domain for sort \textit{situation} in $\M'$ is a singleton set
(i.e., the interpretation of terms $S_0$ and $S_\alpha$ coincide in
$\M'$) and the interpretation of situation--independent symbols is
the same in $\M$ and $\M'$. Then $\M'\not\models\varphi$, but
clearly $\M'\models\DD_{una}\cup\DD_{S_\alpha}(S_0/S_\alpha)$
which contradicts the assumption
$\DD_{una}\cup\DD_{S_\alpha}\not\models\varphi$.

\smallskip

4) Finally, by the condition of the theorem, for all $j\in J$, we
have $\DD_{una}\cup D'_j\subseteq\DD_{una}\cup
\DD_{S_\alpha}$ $(S_0/S_\alpha)$ and from points 1--3 above we
obtain
$\Cons(\DD_{una}\cup\DD_{S_\alpha}(S_0/S_\alpha),\D)\subseteq\Cons(\DD_{una}\cup
\DD_{S_0},\D)$. Hence, for all $j\in J$ we have
$\Cons(\DD_{una}\allowbreak\cup D'_j,\D)\subseteq\Cons(\DD_{una}\cup
\DD_{S_0},\D)$. On the other hand, we also have
$\Cons(\DD_{una}\cup D_i,\D)\subseteq\Cons(\DD_{una}\cup D'_j,\D)$
from the condition of the theorem. Therefore from the inclusion
$\forall \ i\in I$
$\Cons(\DD_{una}\cup\DD_{S_0},\D)\subseteq\Cons(\DD_{una}\cup D_i,\D)$ of
point 1 we conclude that $\Cons(\DD_{una}\cup
D'_j,\D)=\Cons(\DD_{una}\cup \DD_{S_0},\D)$ for all $j\in J$. 
\end{proof}


The next theorem provides a result on local-effect ${\cal{BAT}}s$
with initial theories in first-order logic for which progression
becomes more concrete, since it can be computed by syntactic
manipulations. In contrast to Theorem
\ref{Teo_PreservationOfInseparability}, this allows us to judge about
inseparability without the theory $\DD_{una}$ in background.
Recall that, in general, a $\cal{BAT}$ includes non-trivial precondition axioms.
On the right-hand side of each precondition axiom, there is a formula $\Pi_A(\bar{x},s)$ 
that is a formula uniform in $s$ with free variables among $\bar{x}$ and $s$. 
However, any $\cal{BAT}$ can be transformed into an action theory without precondition axioms by introducing the right hand side formulas
from the precondition axioms as conjuncts of context conditions for each corresponding active
position of an action term in a SSA.
Therefore, without loss of generality, and for simplicity of presentation, we subsequently consider 
 the $\cal{BAT}$s where  all precondition axioms are trivial.

Essentially, the conditions of the theorem are defined to
guarantee componentwise computation of progression for a
decomposable initial theory. A finite set $\DD_{ss}$ of the SSAs is considered to be syntactically divided into the union of $|I|$ sub-theories sharing some fluent-free signature $\D_1$ (which may include actions, static predicates, and object constants), as well as function $do$ (which occurs in every SSA). Informally, each of $|I|$ sub-theories is about a different set of properties, e.g., one of them could be about the blocks world, 	while another could be about the logistics world, with the two theories 	possibly sharing some constants, such as a box name, and
	situation-independent predicates, such as shapes of the boxes. The initial theory $\DD_{S_0}$ is $\D_2$--decomposable, for a fluent-free signature $\D_2$, into $|J|$ components. To distinguish visually components $D'_j$ of $\DD_{S_0}$ from the components $D_i$ of $\DD_{ss}$, we write $D'_j$ with apostrophe when we mean
	components of  $D_{S_0}$, and $D_i$ without apostrophe when we mean
	groups of SSAs. Informally, each component $D'_j$ is about a
	separate aspect of the initial theory. The syntactic form of
	the initial theory may not reveal the components readily, but
	they can be discovered through decomposition. Naturally, it is
	expected that independent components of the initial theory
	should remain independent after doing any actions. This imposes
	a condition that each component from the initial theory should be
	related with its own group of SSAs.
	
	The last two conditions of the theorem enforce that the subtheories of $\DD_{ss}$ are aligned with the components of $\DD_{S_0}$ via syntactic occurrences of fluents. The second to last condition says that every fluent mentioned in a SSA must also occur in the initial theory $\DD_{S_0}$. It is easy to satisfy by adding tautologies with the corresponding fluents to $\DD_{S_0}$. Together with the last condition it guarantees that for every SSA $\varphi$ containing fluents $F_1,\ldots,F_n$ there is a corresponding component of $\DD_{S_0}$, which describes the initial interpretation of these fluents, and this is the component that must be updated upon executing an action mentioned in active position of $\varphi$. The last condition also enforces that actions, static predicates, or object constants separated by the decomposition of $\DD_{S_0}$ must be also separated by the subtheories of $\DD_{ss}$, whenever they occur in SSAs. This guarantees that these symbols do not become connected after progression (as opposed to the situation presented in Example \ref{Ex_DecompLostUnderProgression}). Thus, the theory $\DD_{ss}\cup\DD_{S_0}$ can be divided into parts (consisting of successor-state axioms and statements about the initial situation) which may mention common actions, static predicates and constants, but talk about different fluents. In other words, these subtheories define independent sets of situation-related properties, which is natural for a composite action theory describing a single domain of objects from a number of different perspectives. \commentout{For instance, one can consider an action theory describing the blocks world, in which a block can be moved on top of another one, while changing its color. Then there are two perspectives from which one can reason about the collection of blocks, with the first one being the block positions and the second one being their colors.}\commentout{The considered theory $\DD_{ss}$ of a $\cal{BAT}$ is syntactically divided into the union of theories sharing some fluent-free signature $\D_1$, the initial theory $\DD_{S_0}$ is $\D_2$--decomposable for a fluent-free signature $\D_2$, and the subtheories of $\DD_{ss}$ are aligned with the components of $\DD_{S_0}$ via syntactical occurrence of fluents.}Note that it is allowed for a single action to have effects on groups of fluents (possibly, all the fluents at once, without regard to distribution of the fluents between the subtheories), which is reflected in the theorem condition that $\Delta_1$ is just fluent-free, but not action free. We impose stronger restriction in Corollary \ref{Cor_StrongPreservationOfComponents}, which describes a class of $\mathcal{BAT}s$ representing composite subject domains, like the one mentioned in the running example from Section \ref{Sect_SituationCalculus}.
	
For the reader's convenience, we
stress that in the formulation of the theorem, the indices $i$ and
$j$ vary over components of $\DD_{ss}$ and $\DD_{S_0}$,
respectively. The signatures $\D_1$ and $\D_2$ are the sets of
allowed common symbols between the components of $\DD_{ss}$ and
$\DD_{S_0}$, respectively. We recall that $\A$ ($\F$, respectively) denotes the set of action functions (the set of fluents, respectively) from the alphabet of the language of the situation calculus.

\begin{teo}[Preservation of components in local-effect $\cal{BAT}$]\label{Teo_PreserveCompLocalEffect}
Let $\DD$ be a local-effect $\cal{BAT}$, with $\DD_{S_0}$ an initial
theory in first-order logic. Let $\D_1$, $\D_2$ be fluent-free
signatures, $do\not\in\D_1,\D_2$, and $\alpha=A(\bar{c})$ a ground
action term. Denote $\D=\D_1\cup\D_2\cup\{c_1,\ldots ,c_k\}$, if
$\bar{c}=\langle c_1,\ldots ,c_k \rangle$, and
suppose the following:
\begin{itemize}
\item $\DD_{ss}$ is the union of theories $\{D_i\}_{i\in I}$, 
with $\sig(D_n)\cap\sig(D_m)\subseteq\D_1\cup\{do\}$ for all $n,m\in
I\neq\varnothing$, $n\neq m$;

\item $\DD_{S_0}$ is $\D_2$--decomposable into finite components
$\{D_j'\}_{j\in J}$ uniform in $S_0$ such that $\sig(D_j')\setminus\D\neq\varnothing$, for all $j\in J$;  

\item $\sig(\DD_{ss})\cap\F\subseteq\sig(\DD_{S_0})$;

\item for every $i\in I$, there is $j\in J$ such that
$\sig(D_i)\cap\sig(\DD_{S_0})\subseteq\sig(D_j')$.
\end{itemize}

Then $\DD_{S_\alpha}(S_0/S_\alpha)$ is $\D$--decomposable. If the
components $\{D_j'\}_{j\in J}$ are pairwise $\D$--inseparable,
then so are the components of $\DD_{S_\alpha}\!(S_0/S_\alpha)$ in
the corresponding decomposition.
\end{teo}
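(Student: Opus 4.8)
The plan is to exploit the explicit, purely \emph{syntactic} shape of progression in the local-effect case, given by the formula
\[
\DD_{S_\alpha}=\forget\big(\DD_{ss}[\Omega(S_0)]\cup\DD_{S_0},\,\Omega(S_0)\big)(S_\alpha/S_0)
\]
of Proposition \ref{Prop_ProgressionForLocalEffectCase}, and to show that each ingredient of this formula --- the instantiation $\DD_{ss}[\Omega(S_0)]$, the forgetting of $\Omega(S_0)$, and the relabeling of the situation term --- respects the decomposition of $\DD_{S_0}$. First I would fix, for each SSA-component $D_i$ ($i\in I$), a component $D'_{j(i)}$ of $\DD_{S_0}$ witnessing the fourth hypothesis, i.e. $\sig(D_i)\cap\sig(\DD_{S_0})\subseteq\sig(D'_{j(i)})$. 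Since the $\{D_i\}$ pairwise share only $\D_1\cup\{do\}$ and the $\{D'_j\}$ only $\D_2$, both fluent-free, no fluent occurs in two distinct $D_i$ nor in two distinct $D'_j$; together with the third hypothesis $\sig(\DD_{ss})\cap\F\subseteq\sig(\DD_{S_0})$ this forces \emph{every} fluent occurring in an SSA of $D_i$ --- the defined fluent as well as every context fluent of $\gamma^+_F,\gamma^-_F$ --- to lie in the single component $D'_{j(i)}$. Consequently $\Omega(S_0)$ partitions as $\Omega(S_0)=\bigsqcup_{j\in J}\Omega_j$, where $\Omega_j$ collects the atoms whose fluent belongs to $D'_j$, and correspondingly $\DD_{ss}[\Omega(S_0)]=\bigcup_{j}E_j$, with $E_j$ the instantiations coming from those $D_i$ with $j(i)=j$.

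For the decomposability claim I would group $G_j:=E_j\cup D'_j$, so that $\DD_{ss}[\Omega(S_0)]\cup\DD_{S_0}\equiv\bigcup_{j}G_j$. The crucial observation is that for $n\neq m$ the shared signature $\sig(G_n)\cap\sig(G_m)$ is fluent-free: the $D'_j$ contribute only $\D_2$, the $E_j$ contribute only $\D_1\cup\{do,S_0\}\cup\{\bar c\}$ together with the action function $A$, and no fluent is shared across components by the previous paragraph. Since the fluent predicates of $\Omega_j$ lie in $\sig(G_j)$ but outside this common fluent-free signature, Corollary \ref{Cor_ForgettingWithSpecComponent} applies with $\bt_1=G_j$ and $\bt_2=\bigcup_{m\neq j}G_m$; iterating over $j\in J$ (legitimate because forgetting a set of ground atoms is order-independent by Fact \ref{Fact_BasicPropertiesOfForgetting}) yields $\forget(\bigcup_j G_j,\Omega(S_0))\equiv\bigcup_j\forget(G_j,\Omega_j)$. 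As situation relabeling commutes with unions, the progression equals $\bigcup_j G^{\mathrm{prog}}_j$ with $G^{\mathrm{prog}}_j:=\forget(G_j,\Omega_j)(S_\alpha/S_0)$. Passing to $\DD_{S_\alpha}(S_0/S_\alpha)$ then re-substitutes $S_0$ for $S_\alpha$, erasing the symbols $do$ and $A$ which occurred only inside $S_\alpha=do(A(\bar c),S_0)$; hence the pairwise shared signatures of the resulting components are contained in $\D_1\cup\D_2\cup\{\bar c,S_0\}=\D$ (recall $S_0\in\D_2$, the $D'_j$ being uniform in $S_0$). Non-triviality $\sig(G^{\mathrm{prog}}_j)\setminus\D\neq\varnothing$ follows from the second hypothesis $\sig(D'_j)\setminus\D\neq\varnothing$, as a private symbol of $D'_j$ survives both the forgetting of finitely many ground atoms and the relabeling. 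This establishes $\D$-decomposability.

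For the inseparability claim, assuming the $\{D'_j\}$ pairwise $\D$-inseparable, I would prove that $\Cons(G^{\mathrm{prog}}_j,\D)=\Cons(D'_j,\D)$, which is independent of $j$ and thus finishes the proof. Because $\D$ is fluent-free, no $\D$-formula mentions a fluent of $\Omega_j$, so Proposition \ref{Prop_ConsecPreservationUnderForgetting} gives $\Cons(\forget(G_j,\Omega_j),\D)=\Cons(G_j,\D)$; and relabeling does not affect fluent-free formulas, so $\Cons(G^{\mathrm{prog}}_j,\D)=\Cons(G_j,\D)=\Cons(E_j\cup D'_j,\D)$. It then remains to show $\Cons(E_j\cup D'_j,\D)=\Cons(D'_j,\D)$: the inclusion $\supseteq$ is immediate, and for $\subseteq$ I would use that each axiom of $E_j$ is a biconditional $F(\bar t,S_\alpha)\leftrightarrow\gamma^+_F\vee F(\bar t,S_0)\wedge\neg\gamma^-_F$ whose right-hand side mentions only $S_0$-fluents (all lying in $D'_j$), static predicates, the action $A$ and constants, and no $S_\alpha$-fluent. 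Hence any model of $D'_j$ falsifying a $\D$-formula $\varphi$ can be expanded to a model of $E_j\cup D'_j$ by interpreting the auxiliary SSA symbols freely and setting each $F(\bar t,S_\alpha)$ to the truth value of its now-determined right-hand side; Reiter's consistency condition on SSAs guarantees these definitions do not clash, and since $\varphi$ is fluent-free its truth value is unchanged, so $E_j\cup D'_j\not\models\varphi$.

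The main obstacle, and the point where all four hypotheses act together, is precisely the partition step combined with this model-expansion argument: one must guarantee that the \emph{context} fluents of each SSA --- not merely the fluent being defined --- are confined to a single initial component, for otherwise an SSA would glue two components and manufacture genuinely new fluent-free consequences, exactly as in Example \ref{Ex_DecompLostUnderProgression} (which indeed violates the fourth hypothesis). A secondary technical nuisance is the bookkeeping of the relabeling $(S_\alpha/S_0)$ followed by $(S_0/S_\alpha)$, needed to confirm that $do$ and $A$ vanish and the common signature collapses exactly to the fluent-free set $\D$. Finally, I would note that the inseparability conclusion could alternatively be obtained by verifying the two bullet conditions of Theorem \ref{Teo_PreservationOfInseparability} for the componentwise progression constructed above, but the direct argument is cleaner here because local-effect progression is unique and syntactic.
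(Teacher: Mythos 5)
Your proposal reconstructs the paper's own proof almost step for step: the map assigning to each SSA group $D_i$ the unique initial component containing all its fluents, the grouping $G_j=E_j\cup D'_j$ (the paper's $\widetilde{D}_j$), the componentwise distribution of forgetting via Fact \ref{Fact_BasicPropertiesOfForgetting} and Corollary \ref{Cor_ForgettingWithSpecComponent}, the signature analysis for $\D$--decomposability, and the model-expansion argument showing $\Cons(E_j\cup D'_j,\D)=\Cons(D'_j,\D)$ are all exactly the steps of the paper's proof.

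There is, however, a genuine gap at precisely the point where the paper's proof does its most delicate work. You dispose of the situation-term substitution with the phrase ``relabeling does not affect fluent-free formulas'', i.e., you assert $\Cons(\forget(G_j,\Omega_j)(S_0/S_\alpha),\,\D)=\Cons(\forget(G_j,\Omega_j),\,\D)$ without argument. As a general principle this is false: identifying the terms $S_\alpha$ and $S_0$ in a theory mentioning both can manufacture new fluent-free consequences. For instance, the theory $\{F(c,S_\alpha)\leftrightarrow P(c),\ F(c,S_0)\leftrightarrow\neg P(c)\}$ --- whose first axiom is an instantiated SSA exactly of the shape occurring in $\DD_{ss}[\Omega(S_0)]$ and whose second is a legitimate uniform-in-$S_0$ axiom of $D'_j$ --- is consistent, yet after substituting $S_0$ for $S_\alpha$ it becomes inconsistent and entails every $\D$-formula. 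The equation you need is true only \emph{because} the atoms $\Omega(S_0)\mid_j$ have been forgotten first, and establishing it requires the two-directional model surgery that occupies the second half of part 2 of the paper's proof: given a model of $\forget(G_j,\Omega_j)$ falsifying a $\D$-formula $\varphi$, one flips the forgotten ground atoms (this preserves modelhood by the definition of forgetting, and preserves $\neg\varphi$ since $\D$ is fluent-free) so that every fluent mentioned in the definitions takes the same value at $S_0$ and at $S_\alpha$, whence the same structure models the relabeled theory and still falsifies $\varphi$; conversely, every model of the relabeled theory expands, in the spirit of Lemma \ref{Lemma_ModelPropertyofUniformTheories}, to a model of $\forget(G_j,\Omega_j)$ by interpreting $do$ so that $S_\alpha$ and $S_0$ denote the same situation. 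Without this argument the inseparability claim is unproven. (Two smaller imprecisions: $S_0\in\D_2$ does not follow from uniformity of the $D'_j$ --- it holds only when at least two components contain fluents, and otherwise $S_0$ is simply not shared, as the paper notes for the fluent-free components; and to meet Definition \ref{Def_Decomposable} literally, where pairwise intersections must \emph{equal} $\D$, the components must be padded with tautologies in the missing $\D$-symbols, as the paper does.)
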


\begin{proof} 
The proof consists of two parts, both of which rely on the constructive definition of progression for local-effect $\mathcal{BAT}$s from Section \ref{Section_Progression} and component properties of forgetting discussed in Section \ref{Section_Forgetting}. In the first part, we show $\Delta$-decomposability of $D_{S_\alpha}(S_0/S_\alpha)$ by constructing its components explicitly and in the second part we prove that these components are $\Delta$-inseparable. 

1) By definition of $\cal{BAT}$, for every $i\in I$, we have
$\sig(D_i)\cap\F\neq\varnothing$ and thus, from the conditions of
the theorem, $\sig(D_i)\cap\sig(\DD_{S_0})\neq\varnothing$,
$\sig(\DD_{ss})\cap\F = \sig(\DD_{S_0})\cap\F$. Hence, for every
$i\in I$ there is $j\in J$ such that
$\sig(D_i)\cap\F\subseteq\sig(D_j')$. Moreover, such $j\in J$ is
unique for every $i\in I$, because otherwise there would exist
$n,m\in J$, $n\neq m$, such that $\sig(D_{n}')\cap
\sig(D_{m}')\cap\F\neq\varnothing$, which contradicts the
condition that $\D_2$ is fluent-free. Therefore, there is a map
$f: I\rightarrow J$ such that for every $i\in I$,
$\sig(D_i)\cap\F\subseteq\sig(D_{f(i)}')$. Note that there may
exist $j\in J$ such that $\sig(D_j')\cap\F=\varnothing$ and in
this case $j$ is the image of no $i\in I$. Let us denote the image
of $f$ by $\tilde{J}$ (so, $\tilde{J}\subseteq J$). 

Now, for every $i\in I$, consider the set of formulas $D_i[\Omega]$,
the instantiation of $D_i$ w.r.t. $\Omega(S_0)$, and for each
$j\in\tilde{J}$, denote $\widetilde{D}_j=[ \ \bigcup_{i\in
f^{-1}(j)} (D_i[\Omega]) \ ]\cup D_{j}'$. Then, by Proposition
\ref{Prop_ProgressionForLocalEffectCase}, $\DD_{S_\alpha}(S_0/S_\alpha)$ (progression
 of $\DD_{S_0}$ wrt $\alpha$, with
term $S_\alpha$ substituted with $S_0$) is logically
equivalent to
$$[ \ \forget(\bigcup_{j\in\tilde{J}}\widetilde{D}_j ,\Omega(S_0)) \ \cup\bigcup_{j\in J\setminus\tilde{J}}
D_j' \ ] \ (S_0/S_\alpha).$$ As $\D_1$ and $\D_2$ are fluent-free,
the signatures $\{\sig(\widetilde{D}_j)\}_{j\in\tilde{J}}$ do not
have fluents in common and thus, by Corollary
\ref{Cor_ForgettingWithSpecComponent}, $\DD_{S_\alpha}(S_0/S_\alpha)$ is
equivalent to
$$[ \ \bigcup_{j\in\tilde{J}} \forget(\widetilde{D}_j
,\Omega(S_0)\mid_j) \ \cup\bigcup_{j\in J\setminus\tilde{J}} D_j'
\ ] \ (S_0/S_\alpha),$$ where for $j\in\tilde{J}$,
$\Omega(S_0)\mid_j$ is the subset of ground atoms from
$\Omega(S_0)$ with fluents from $\sig(D_j')$. For all $j\in
J\setminus\tilde{J}$, we have $\sig(D_j')\cap\F=\varnothing$ and
$D_j'$ is uniform in $S_0$, so it follows that
$S_0\not\in\sig(D_j')$ and thus, $\DD_{S_\alpha}(S_0/S_\alpha)$ is
equivalent to the union
$$[ \ \bigcup_{j\in\tilde{J}} \forget(\widetilde{D}_j
,\Omega(S_0)\mid_j) \ ] \ (S_0/S_\alpha) \ \cup\bigcup_{j\in
J\setminus\tilde{J}} D_j'.$$ For every $j\in J$, let $D_j''$ be
the set of formulas $(\forget(\widetilde{D}_j
,\Omega(S_0)\mid_j))(S_0/ S_\alpha)$ (in case $j\in\tilde{J}$) or
the set of formulas $D_j'$ (if $j\in J\setminus\tilde{J}$). So
$\DD_{S_\alpha}(S_0/S_\alpha)$ is equivalent to $\bigcup_{j\in
{J}}D_j''$. By the definition of forgetting a set of
ground atoms one can assume that $\sig(D_j')\subseteq\sig(D_j'')$ and $\sig(D_j'')\setminus\sig(D_j')\subseteq\sig(\DD_{ss})\cup\{c_1,\ldots, c_k\}$, for all $j\in J$. 

Let us show that $[\sig(D_i'')\cap\sig(D_j'')]\subseteq\D$, for all distinct $i,j\in J$. Assume there are distinct $i,j\in J$ such that $[\sig(D_i'')\cap\sig(D_j'')]\setminus\D=\Sigma\neq\varnothing$, for a signature $\Sigma$. Then $do\not\in\Sigma$, since both $D_i''$ and $D_j''$ are uniform in $S_0$. If there is a single subtheory $D_m$ of $\DD_{ss}$, $m\in I$, such that $\Sigma\subseteq\sig(D_m)$, then the last two conditions of the theorem yield $\Sigma\subseteq\D_2$, which is a contradiction, because we have assumed $\Sigma\cap\D=\varnothing$. If there are distinct subtheories $D_m$ and $D_n$ of $\DD_{ss}$, $m,n\in I$, such that $\Sigma\subseteq\sig(D_m)\cap\sig(D_n)$, then $\Sigma\subseteq\D_1$, and we again arrive at contradiction.

It follows that the pairwise intersection of any signatures from
$\{\sig(D_j'')\}_{j\in J}$ is a subset of $\D$ and it follows from the second condition of the theorem that $\sig(D_j'')\setminus\D\neq\varnothing$. Then $\{D_j''\cup
Taut(\D,j)\}_{j\in J}$ is $\D$--decomposition of
$\DD_{S_\alpha}(S_0/S_\alpha)$, where for each $j\in J$,
$Taut(\D,j)$ is a set of tautologies in signature
$\D\setminus\sig(D_j'')$ which are uniform in $S_0$.

\smallskip

2) Now let us verify that the sets of formulas from
$\{D_j''\}_{j\in J}$ are pairwise $\D$--insepa\-rable, if so are the
components of $\DD_{S_0}$.

a) First, consider the sets from the union
$$\bigcup_{j\in\tilde{J}}\widetilde{D}_j \ \cup\bigcup_{j\in
J\setminus\tilde{J}} D_j'. \phantom{abcde} (\ddag)$$ The pairwise
intersection of their signatures is contained in
$\D\cup\sig(S_\alpha)$. We claim that the sets from this union are
pairwise $\D$--inseparable.

By our definition, for all $j\in \tilde{J}$ we have
$D_j'\subseteq\widetilde{D}_j$ and hence,
$\Cons(D_j',\D)\subseteq\Cons(\widetilde{D}_j,\D)$, so let us
check that $\Cons(\widetilde{D}_j,\D)\subseteq\Cons(D_j',\D)$ for
every $j\in \tilde{J}$. Each formula in $D_i[\Omega]$, for $i\in
f^{-1}(j)$, $j\in \tilde{J}$, has the form
$$F(\bar{c},do(A(c_1,\ldots
,c_k),S_0))\leftrightarrow (\varepsilon_1\wedge\phi^+) \ \vee \
(F(\bar{c},S_0)\wedge\varepsilon_2\wedge\phi^{-}), \phantom{abc}
(\ast)$$ where $F$ is a fluent from $\sig(D_j')$, $\bar{c}$ is a
vector of constants from $\{c_1,\ldots ,c_k\}$, $\phi^+$, $\phi^-$
are sentences uniform in $S_0$, and each $\varepsilon_1$,
$\varepsilon_2$ equals \textit{true} or \textit{false} (the
parameters to summarize different cases of this formula). This is
a definition of ground atom $F(\bar{c},do(A(c_1,\ldots ,c_k),S_0)$
via fluents at situation $S_0$ and situation-independent
predicates and functions. Therefore, since $\D$ is fluent-free and
for all $j\in\tilde{J}$, $D_j'$ is uniform in $S_0$, every model
$\M$ of $D_j'$ can be transformed into a model $\M'$ of
$\widetilde{D}_j$ which agrees with $\M$ on $\D$. The model $\M'$
is obtained in two steps. First, we expand $\M$ with an arbitrary
interpretation of function $do$ and situation-independent
predicates and functions from
$\sig(D_i[\Omega])\setminus\sig(D_j')$ for every $i\in f^{-1}(j)$.
Then we continue with this expanded model and modify the truth
value of each fluent $F$ at the interpretation of the tuple
$\langle\bar{c}, do(A(c_1,\ldots ,c_k),S_0)\rangle$ according to
the obtained truth value of the formula in the definition of
$F(\bar{c},do(A(c_1,\ldots ,c_k),S_0)$ above. This gives us the
model $\M'$. Hence, if $\varphi\in\Cons(\widetilde{D}_j,\D)$ and
$\varphi\not\in\Cons(D_j',\D)$, then there is a model $\M$ of
$D_j'$ such that $\M\not\models\varphi$, but then
$\M'\models\widetilde{D}_j$ and $\M'\not\models\varphi$, a
contradiction. Therefore, we conclude that for all
$j\in\tilde{J}$, $\Cons(\widetilde{D}_j,\D)=\Cons(D_j',\D)$ and,
by pairwise $\D$--inseparability of the components of $\DD_{S_0}$,
the sets from the union $(\ddag)$ are $\D$--inseparable.

\smallskip

b) Since $\D$ is fluent-free and $\Omega(S_0)$ consists only of
ground atoms with fluents, from Corollary
\ref{Cor_ForgettingWithSpecComponent} we conclude that the sets
from the following union are $\D$--insepa\-rable:
$$\bigcup_{j\in\tilde{J}} \forget(\widetilde{D}_j
,\Omega(S_0)\mid_j) \ \cup\bigcup_{j\in J\setminus\tilde{J}}
D_j'.$$

Now we are ready to prove that the sets from $\{D_j''\}_{j\in J}$
are pairwise $\D$--inseparable. For every $j\in \tilde{J}$, let us
denote $G_j=\forget(\widetilde{D}_j ,\Omega(S_0)\mid_j)$. We will
demonstrate that for every $j\in \tilde{J}$ it holds
$\Cons(G_j(S_0/S_\alpha),\D)=\Cons(G_j, \ \D)$, from which the
statement follows. First, let us verify that $\Cons(G_j$
$(S_0/S_\alpha),\D)\subseteq\Cons(G_j, \ \D)$. Assume that for
some $j\in\tilde{J}$ (we fix this $j$ for the following) there is
a formula $\varphi\in\Cons(G_j(S_0/S_\alpha),\D)$ and a model $\M$
of $G_j$ such that $\M\not\models\varphi$, and arrive at
contradiction.

By the syntactic definition of forgetting a ground atom, the term
$S_\alpha$ occurs in $G_j$ only in subformulas obtained from the
definitions $(\ast)$, so let us consider such a definition for a
ground atom $F(\bar{c}, S_\alpha)$ with some fluent $F$. Let us
recall that $G_j$ is the result of forgetting a set of ground
atoms with fluents having $S_0$ as situation argument.
Since $\bar{c}$ is the vector of object arguments in the
definition of $F(\bar{c}, S_\alpha)$ in $(\ast)$, we have
$F(\bar{c}, S_0)\in\Omega(S_0)\mid_j$. 
Therefore, if $\M\models \ ^\varepsilon F(\bar{c}, S_0)$
($\varepsilon$ denotes the optional negation in front of atom),
then there is a model $\M'\models\neg^\varepsilon F(\bar{c}, S_0)$
such that $\M'\sim_\si\M$, with $\si=F(\bar{c}, S_0)$, and hence,
$\M'\not\models\varphi$ (since $\D$ is fluent--free) and the truth
value of $F(\bar{c}, S_\alpha)$ in $\M $ and $\M'$ is the same.
Hence, either in $\M$ or $\M'$ the truth values of $F(\bar{c},
S_\alpha)$ and $F(\bar{c}, S_0)$ coincide. The similar argument
applies to the whole set of definitions $(\ast)$ from
$\widetilde{D}_j$ under forgetting the set $\Omega(S_0)\mid_j$.
Therefore we may assume that in $\M$ or $\M'$, for each fluent
$F\in\sig(G_j)$ the values of $F(\bar{c},S_\alpha)$ and
$F(\bar{c},S_0)$ coincide. So $\M\models G_j(S_0/S_\alpha)$ or
$\M'\models G_j(S_0/S_\alpha)$ which is a contradiction, because
$\varphi$ holds in neither of these models.

To prove the reverse inclusion $\Cons(G_j, \ \D)\subseteq\Cons(G_j
(S_0/S_\alpha),\D)$, observe that $G_j (S_0/S_\alpha)$ is uniform
in $S_0$. Hence, by an observation similar to Lemma
\ref{Lemma_ModelPropertyofUniformTheories}, every model $\M$ of
$G_j (S_0/S_\alpha)$ can be expanded to a model $\M'$, where the
interpretation of function $do$ is such that the values of terms
$S_\alpha$ and $S_0$ in $\M'$ coincide. Then $\M'\models G_j$ and
thus, there is no formula $\varphi\in\Cons(G_j, \ \D)$ such that
$\varphi\not\in\Cons(G_j(S_0/S_\alpha), \ \D)$. 
\end{proof}


We note that a result similar to Theorem
\ref{Teo_PreserveCompLocalEffect} can be proved in a more general
case, for progression of not-necessarily local-effect
$\mathcal{BAT}$s, by considering progression as a set of
consequences of $\DD_{una}\cup\DD_{ss}\cup\DD_{S_0}$ uniform in
$S_\alpha$ or using the second-order definition of progression from Theorem 2.10 in \cite{ProgressionLocalEffect}. Since both definitions of progression are non-constructive, one would have to deal with background theories such as $\DD_{una}$, when reasoning about decomposition of the initial theory. Although it would be possible to define a more general notion of decomposability wrt a background theory by following this direction, this study would take us too far away from the goals of this paper, and it would not be illuminating.

The proof of the theorem uses Proposition \ref{Prop_ProgressionForLocalEffectCase} and the component properties of forgetting from Section \ref{Section_Forgetting}. The important observation behind this result is that in order to compute progression of an initial theory wrt an action having effects only on fluents from one 
decomposition component, it suffices to compute forgetting only in this 
component. Given a decomposition of the initial theory into inseparable components, 
the rest of the conditions in the theorem are purely syntactical and easy to 
check. For example, these conditions would naturally hold if one merges weakly-related action theories, as illustrated in the running example (continued below). 
SSAs can be grouped into $|I|$ components
by drawing a graph with fluent names as vertices, and an edge from the fluent
on the left-hand side of each SSA going to each fluent occurring on 
the right-hand side of the same SSA. Similarly, it is easy to check 
the last condition of the Theorem that guarantees alignment of groups of
axioms in SSAs with decomposition components of $\DD_{S_0}$.

In the above conditions, observe that if an action
$A$ occurs in active position of SSAs from two different
sub-theories of $\DD_{ss}$, then computing progression may
involve forgetting in two corresponding components of $\DD_{S_0}$.
This can potentially lead to appearance of new common $\D_1$--symbols in the
components of progression. As a consequence, $\D_2$--decomposability of progression
may be destroyed, but it is desirable to preserve it.  A practically important 
class of $\cal{BAT}$s, for which this interference can be avoided, is described
in the corollary below. Note the first condition in the corollary that 
every action mentioned in $\cal{BAT}$ can have
effects on fluents only from one component of $\DD_{ss}$. Together with the second condition this guarantees preservation of $\D_2$-decomposability and inseparability of the initial theory after progression. The third condition in the corollary guarantees preservation of all the conditions of Theorem \ref{Teo_PreserveCompLocalEffect} for the $\cal{BAT}$ obtained after progression and thus, one can compute progression for arbitrary long sequences of actions while preserving decomposability of $\DD_{S_\alpha}(S_0/S_\alpha)$ and inseparability of its components. 

\begin{cor}[Strong preservation of components in local-effect
$\cal{BAT}$s]\label{Cor_StrongPreservationOfComponents} In the conditions and
notations of Theorem \ref{Teo_PreserveCompLocalEffect}, let $\alpha=A(\bar{c})$ be a
ground action term, where $\bar{c}\!=\!\langle c_1,\ldots,c_k\rangle$ is a tuple of constants, and let the following conditions hold:\vspace{-0.2cm}
\begin{itemize}
\item no action function is in $\D_1$;


\item $\{c_1,\ldots,c_k\}\subseteq\sig(D_j')$, for some $j\in J$,\newline whenever $A$ is in active position in a SSA for a fluent $F\in\sig(D_j')$;

\item it holds that $\D_1\subseteq\D_2$.
\end{itemize}

Then $\DD_{S_\alpha}(S_0/S_\alpha)$ is $\D_2$--decomposable into
$\D_2$--inseparable components. Moreover, all the conditions of Theorem \ref{Teo_PreserveCompLocalEffect} hold for the $\cal{BAT}$ with the initial theory $\DD_{S_\alpha}(S_0/S_\alpha)$ obtained after progression.
\end{cor}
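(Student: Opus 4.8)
The plan is to build on Theorem~\ref{Teo_PreserveCompLocalEffect}, whose conclusion already provides a $\D$--decomposition of $\DD_{S_\alpha}(S_0/S_\alpha)$ for $\D=\D_1\cup\D_2\cup\{c_1,\ldots,c_k\}$, and to use the three extra hypotheses to collapse the shared signature down to $\D_2$ and afterwards to re-establish the theorem's hypotheses for the resulting $\cal{BAT}$. The crucial first step is to exploit the first condition. Since no action function lies in $\D_1$ and distinct subtheories of $\DD_{ss}$ share only symbols from $\D_1\cup\{do\}$, the action function $A$ of $\alpha$ can occur in active position in at most one subtheory $D_{i_0}$ of $\DD_{ss}$ (otherwise $A\in\sig(D_{i_0})\cap\sig(D_{i_1})\subseteq\D_1\cup\{do\}$ would force $A\in\D_1$). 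Consequently $\Omega(S_0)$ consists only of fluent atoms whose SSAs reside in $D_{i_0}$, and via the map $f\colon I\to J$ from the proof of Theorem~\ref{Teo_PreserveCompLocalEffect} these fluents all belong to a single component $D_{j_0}'=D_{f(i_0)}'$. Hence in the decomposition $\bigcup_{j\in J}D_j''$ of $\DD_{S_\alpha}(S_0/S_\alpha)$ only the component $D_{j_0}''$ is genuinely modified, while $D_j''=D_j'$ for all $j\neq j_0$.

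For $\D_2$--decomposability I would show that the pairwise intersections of $\{\sig(D_j'')\}_{j\in J}$ are contained in $\D_2$. The only candidates for new shared symbols are those in $\sig(D_{j_0}'')\setminus\sig(D_{j_0}')$, which originate from $\sig(D_{i_0})$. If such a non-fluent symbol also occurs in some $\sig(D_j')$ with $j\neq j_0$, then it lies in $\sig(D_{i_0})\cap\sig(\DD_{S_0})\subseteq\sig(D_{j_0}')$ by the last hypothesis of Theorem~\ref{Teo_PreserveCompLocalEffect}, contradicting its being new. The constants $c_1,\ldots,c_k$ introduced by instantiating $D_{i_0}$ are handled by the second condition: since $A$ is active on a fluent of $\sig(D_{j_0}')$, we have $\{c_1,\ldots,c_k\}\subseteq\sig(D_{j_0}')$, so any shared constant lies in $\sig(D_{j_0}')\cap\sig(D_j')\subseteq\D_2$. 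Thus all pairwise intersections are subsets of $\D_2$, and padding each $D_j''$ with a set of $S_0$--uniform tautologies $Taut(\D_2,j)$ in $\D_2\setminus\sig(D_j'')$ makes the intersections exactly $\D_2$, while $\sig(D_j'')\setminus\D_2\supseteq\sig(D_j')\setminus\D_2\neq\varnothing$ keeps the decomposition non-trivial.

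For $\D_2$--inseparability I would replay the inseparability half of the proof of Theorem~\ref{Teo_PreserveCompLocalEffect} with $\D_2$ in place of $\D$; this works precisely because $\D_2$ is fluent--free and, as argued above, the non-fluent symbols added to $\widetilde{D}_{j_0}$ fall outside $\D_2$. Concretely, every model of $D_{j_0}'$ extends to a model of $\widetilde{D}_{j_0}=D_{i_0}[\Omega]\cup D_{j_0}'$ agreeing on $\D_2$ (interpret the new, non-$\D_2$ symbols arbitrarily and set the fluent values at $S_\alpha$ by the instantiated SSAs), which gives $\Cons(\widetilde{D}_{j_0},\D_2)=\Cons(D_{j_0}',\D_2)$; forgetting the fluent atoms $\Omega(S_0)|_{j_0}$, whose predicates are not in the fluent--free $\D_2$, preserves $\D_2$--consequences by Proposition~\ref{Prop_ConsecPreservationUnderForgetting} and Corollary~\ref{Cor_ForgettingWithSpecComponent}; and the substitution $(S_0/S_\alpha)$ preserves $\D_2$--consequences by the uniform--theory argument behind Lemma~\ref{Lemma_ModelPropertyofUniformTheories}. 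Together with the assumed pairwise $\D_2$--inseparability of $\{D_j'\}_{j\in J}$ this yields pairwise $\D_2$--inseparability of $\{D_j''\}_{j\in J}$.

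Finally, for the ``moreover'' claim I would verify the four hypotheses of Theorem~\ref{Teo_PreserveCompLocalEffect} for the $\cal{BAT}$ whose initial theory is $\DD_{S_\alpha}(S_0/S_\alpha)$. The first hypothesis concerns only $\DD_{ss}$, which is unchanged; $\D_2$--decomposability into finite components with non-empty exclusive signatures is what was just established (finiteness following from the syntactic computation of forgetting a finite set of ground atoms in first--order logic); and $\sig(\DD_{ss})\cap\F\subseteq\sig(\DD_{S_\alpha}(S_0/S_\alpha))$ holds because $\sig(D_j')\subseteq\sig(D_j'')$. The real work, and the main obstacle, is re-establishing the alignment hypothesis, i.e. that for each $i\in I$ one has $\sig(D_i)\cap\sig(\DD_{S_\alpha}(S_0/S_\alpha))\subseteq\sig(D_j'')$ for some $j$. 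For $i=i_0$ this holds with $j=j_0$, since all symbols $D_{i_0}$ contributes now reside in $D_{j_0}''$; for $i\neq i_0$ the only possible new symbols that $D_i$ shares with the modified component lie in $\sig(D_i)\cap\sig(D_{i_0})\subseteq\D_1\cup\{do\}$, and here the third condition $\D_1\subseteq\D_2$ is exactly what guarantees these symbols are globally shared (hence harmlessly available in every padded component) rather than binding two previously separated subtheories, as in Example~\ref{Ex_DecompLostUnderProgression}. Controlling this interaction of $\D_1$ with the alignment after progression is the delicate point; everything else reduces to bookkeeping already performed for the theorem.
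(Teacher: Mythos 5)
Your proposal is correct and takes essentially the same route as the paper's own proof: you use the first condition to localize the effect of $\alpha$ to a single subtheory $D_{i_0}$ of $\DD_{ss}$ and hence (via the map $f$) to a single component $D_{f(i_0)}'$ of $\DD_{S_0}$, the second condition to keep the constants $c_1,\ldots,c_k$ inside that component so that $\D_2$--decomposability and $\D_2$--inseparability survive, and the third condition $\D_1\subseteq\D_2$ to re-establish the alignment hypothesis of Theorem \ref{Teo_PreserveCompLocalEffect} for the progressed theory through the inclusion $\sig(D_k)\cap\sig(D_{i_0})\subseteq\D_1\cup\{do\}$ with $do$ excluded by uniformity in $S_0$. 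The only difference is expository: you spell out the signature bookkeeping and the replay of the inseparability argument with $\D_2$ in place of $\D$, which the paper's proof leaves terse.
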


\begin{proof} 
\commentout{ Since all context conditions are tautological,
every $SSA$ is equivalent to a formula of the form

$\forall \ \bar{x},a,s \ [F(\bar{x},do(a,s))\leftrightarrow
\bigvee_{n\in N\subseteq\omega} ([\exists \bar{t}_n] \
a=A_k(\bar{t_n})) \vee \ F(\bar{x},s)\wedge$

\hspace{6cm} $\wedge\neg \bigvee_{m\in M\subseteq\omega} ([\exists
\bar{t}_m'] \  a=B_m(\bar{t}_m')).$

Therefore, since no action is in $\D_1$ and, by the condition of
Theorem \ref{Teo_PreserveCompLocalEffect}, $\D_1$ is fluent-free,
we conclude that $\D_1=\{do\}$. In the progression
$\DD_{S_\alpha}$, function $do$ can occur only in the term
$S_\alpha$ and hence,
$do\not\in\sig(\DD_{S_\alpha}(S_0/S_\alpha))$. On the other hand,
the constants $\{c_1,\ldots,c_k\}$ may occur in the signature of
progression.} By the first condition, action $A$ can be in active
position of SSAs of a single subtheory $D_i$ of $\DD_{ss}$. Then,
due to the componentwise computation of progression shown in the
proof of Theorem \ref{Teo_PreserveCompLocalEffect}, progression
can affect the single corresponding component $D_{f(i)}'$ of
$\DD_{S_0}$. The second condition of the
corollary guarantees that
$\{c_1,\ldots,c_k\}\subseteq\sig(D_{f(i)}')$ and together with the third condition this yields that $\DD_{S_\alpha}(S_0/S_\alpha)$ is $\D_2$--decomposable into $\D_2$--inseparable components, just like $\DD_{S_0}$ is.  

Computing the progression of $\DD_{S_0}$ wrt $\alpha$ is essentially a
syntactic modification of $D_{f(i)}'$ which may introduce signature
symbols from context conditions of $D_i$ only
into $D_{f(i)}'$ and into no other components of $\DD_{S_0}$. Denote by $D_{f(i)}''$ the theory obtained from $D_{f(i)}'$ in this way.

Let us verify that all the conditions of Theorem \ref{Teo_PreserveCompLocalEffect} are preserved for the $\cal{BAT}$ with the initial theory $\DD_{S_\alpha}(S_0/S_\alpha)$ obtained after progression. The first condition of the theorem holds by default. By the definition of forgetting ground atoms, one can assume that $\sig(D_{f(i)}')\subseteq\sig(D_{f(i)}'')$. Since $\DD_{S_\alpha}(S_0/S_\alpha)$ is $\D_2$--decomposable and, by the definition of $\DD_{S_\alpha}(S_0/S_\alpha)$, all the components of $\DD_{S_0}$ except $D_{f(i)}'$ remain unchanged after progression, the second and third conditions of the theorem hold. To show the last condition suppose the opposite, i.e. there is $k\in I$, for which the condition does not hold. Then $k\neq i$, since $\sig(D_{f(i)}')\subseteq\sig(D_{f(i)}'')$, and there is a subsignature $\Sigma\subseteq\sig(D_{f(i)}'')\setminus \sig(D_{f(i)}')$ such that $\Sigma\subseteq\sig(D_k)$. By the definition of $D_{f(i)}''$, we may assume that $\Sigma\subseteq\sig(D_i)$. As $\DD_{S_\alpha}(S_0/S_\alpha)$ is a set of formulas uniform in $S_0$, we have $do\not\in\Sigma$ and thus, $\Sigma\subseteq\D_1\subseteq\D_2$. Let $D_j'$ be the component of $\DD_{S_0}$, for which the condition $\sig(D_k)\cap\DD_{S_0}\subseteq\sig(D_j')$ holds. Since $\Sigma\subseteq\D_2$ and $\DD_{S_0}$ is $\D_2$--decomposable, we have $\Sigma\subseteq\sig(D_j')$ and thus $D_j'$ is the required component for $D_k$, a contradiction. 
\end{proof}

\medskip



\textbf{Example \ref{BWexample} (continuation)}.
Note that the $\cal{BAT}$ considered in the example satisfies the conditions of the corollary with fluent-free signatures $\D_1\!=\!\varnothing$ and $\D_2\!=\!\{Block,S_0\}$. The theory $\DD_{ss}$ is a union of two theories, with the intersection of signatures equal to $\{do\}$. As already noted in the example, the initial theory $\DD_{S_0}$ is $\D_2$--decomposable into $\D_2$--inseparable components.	
Now, consider the ground action $\alpha\!=\!move(A,B,C)$. By Corollary \ref{Cor_ForgettingWithSpecComponent} and Proposition \ref{Prop_ProgressionForLocalEffectCase}, in order to compute the theory $\DD_{S_\alpha}(S_0/S_\alpha)$ (the progression of $\DD_{S_0}$ wrt $\alpha$, with the term $S_\alpha$ substituted with $S_0$), it suffices to forget the ground atoms $On(A,B,S_0)$ and $Clear(C,S_0)$ in the first decomposition component of $\DD_{S_0}$ and update it with the ground atoms $On(A,C,S_0)$ and $Clear(B,S_0)$. The second component of $\DD_{S_0}$ remains unchanged. One can check that 
$\DD_{S_\alpha}(S_0/S_\alpha)$ is the union of the following theories:

\smallskip

\hspace{-0.2cm}\noindent
$
\begin{array}{l}
\varphi\land \psi \land (x\neq C) \rightarrow Clear(x,S_0)\\
\psi \rightarrow Block(x)\\
Block(B)\!\land Block(C)\!\land On(A,\!C,\!S_0)\!\land \neg On(A,\!B,\!S_0)\\
Clear(A,S_0)\land Clear(B,S_0)\land \neg Clear(C,S_0)
\end{array}
$

\smallskip\noindent 
and

\smallskip

\hspace{-0.2cm}\noindent
$
\begin{array}{l}	
(Top(x,S_0)\lor Inheap(x,S_0))\rightarrow\neg Block(x)\\
\exists x \ Block(x),
\end{array}
$

\medskip\noindent
where $\varphi$ and $\psi$, respectively, stand for

\medskip

\noindent
$
\begin{array}{l}
 (x\neq B) \land \neg\exists y \ ((y\neq A \lor x\neq B)\land On(y,x,S_0)),\\
 (x=A) \lor \exists y \ ((x\neq A \lor B\neq y)\land On(x,y,S_0)).
\end{array}
$

\medskip

The theory $\DD_{S_\alpha}(S_0/S_\alpha)$ is $\D_2$--decomposable by the syntactic form and there is no need to compute a decomposition again after progression. Corollary \ref{Cor_StrongPreservationOfComponents} guarantees that the obtained components are $\D_2$--inseparable. It is important that in this case we can compute progression for arbitrary long sequences of actions while preserving both decomposability of  $\DD_{S_\alpha}(S_0/S_\alpha)$ and inseparability of its components.

\section{Summary and Future Work}\label{Sect_Summary}
We have considered the impact of the theory update operations,
such as forgetting and progression on preserving the component
properties of theories, such as decomposability and inseparability.
Forgetting and progression have  a ``semantic nature'', since the input and the
output of these transformations are related to each other by using restrictions on the
classes of models. On the contrary, the decomposability and
inseparability properties are defined using entailment in a particular logic.
As logics (weaker than second-order) may not distinguish the needed classes
of models, the conceptual ``distance'' between these
two kinds of notions is potentially immense. This can be somewhat bridged
by the choice of either an appropriate logic, or appropriate
theories in the input. We have identified conditions that should be imposed
on the components of input theories to match these notions more closely.
Also, the Parallel Interpolation Property (PIP) was shown to be a relevant property
of logics in our investigations. The results can be briefly
summarized in the tables below. For brevity, we use $\si$ to
denote a signature or a ground atom. We slightly abuse notation and
consider $\si$ as a set of symbols even in the case of a ground
atom implying that in the latter case $\si$ consists of the single predicate
symbol from the atom. We assume that the input of operations
of forgetting and progression is a union of theories $\bt_1$ and
$\bt_2$, with $\sig(\bt_1)\cap\sig(\bt_2)=\D$, for a signature
$\D$.

\medskip

\footnotesize

\begin{center}

\begin{tabular}{|p{0.87in}|c|} \hline
\hspace{0.4cm} \textbf{Property} & \hspace{0.8cm}
\textbf{Condition}
\hspace{2.5cm} \textbf{Result} \hspace{1.7cm} \textbf{Reference}\\
\hline
\vspace{-0.7cm}
Preservation of $\D$--inseparability of $\bt_1$ and $\bt_2$ under
forgetting $\si$ &
\begin{tabular}{p{1.3in}|p{3cm}|c}\hline
$\si\cap\D=\varnothing$ & YES & Corollary
\ref{Cor_ForgettingWithSpecComponent}\\ \hline
$\si\subseteq\D$ and $\si$ is a ground atom & \vspace{0.07cm} NO & Example \ref{Ex_InseparabilityLostForgettingAtom}\\
\hline
$\si\subseteq\D$ and $\si$ is a signature & YES,\newline if logic
has PIP &
Proposition \ref{Prop_InseparabilityPreservationUnderForgetting}\\
\hline
$\si\subseteq\D$ and $\bt_1$, $\bt_2$ are semantically inseparable
& \vspace{0.2cm} YES & Proposition
\ref{Prop_ModelInseparabilityPreservationUnderForgetting}\\
\end{tabular}\\
\hline
\vspace{-0.5cm}
Distributivity of forgetting $\si$ over union of $\bt_1$ and
$\bt_2$ &
\begin{tabular}{p{1.3in}|p{3cm}|c}\hline
$\si\cap\D=\varnothing$ & YES & Corollary
\ref{Cor_ForgettingWithSpecComponent}\\ \hline
\vspace{0.4cm} $\si\subseteq\D$ & NO,\newline even if $\bt_1$ and
$\bt_2$ are semantically inseparable & Example
\ref{Ex_ComponentwiseForgetting}\\ \hline
$\bt_1$ and $\bt_2$ are semantically inseparable ``modulo $\si$''
& \vspace{0.1cm} YES & Proposition \ref{Prop_CriterionComponentwiseForgetting}\\
\end{tabular}\\ \hline
\end{tabular} \\

\bigskip

\begin{tabular}[t]{|p{0.74in}|c|} \hline

\hspace{0.4cm} \textbf{Property} & \hspace{1.1cm}
\textbf{Condition} \hspace{2.5cm} \textbf{Preservation}
\hspace{1.2cm} \textbf{Reference}\\ \hline

\vspace{-0.9cm}
$\D$--inseparability of components of initial theory under
progression &

\begin{tabular}{p{1.5in}|p{3cm}|c}

at least one fluent is present in $\D$ & \vspace{0.06cm} NO &
Example \ref{Ex_LossOfInseparability}\\ \hline

$\D$ is fluent-free and some components of initial theory
split under progression & \vspace{0.2cm} NO & Example \ref{Ex_SplitOfComponent}\\
\hline

\vspace{0.1cm} $\D$ is fluent-free and components of initial
theory do not split
under progression \vspace{0.2cm} & \vspace{0.1cm} YES,\newline modulo the unique name assumption theory & Theorem \ref{Teo_PreservationOfInseparability}\\ \hline

\vspace{0.1cm} $\cal{BAT}$ is local--effect, $\D$ is fluent-free and components of initial
theory do not split
under progression \vspace{0.2cm} & \vspace{0.1cm} YES\newline & Theorem \ref{Teo_PreserveCompLocalEffect}\\
\end{tabular}\\ \hline

\vspace{0.3cm} $\D$--decomposabi\-li\-ty and preservation of signa\-tu\-re
com\-po\-nents of an initial theory un\-der progression wrt action 
term \nolinebreak $\alpha$ &

\begin{tabular}[t]{p{1.524in}|p{3cm}|c}
Unconditionally, in particular for local-effect $\cal{BAT}$s and fluent-free $\Delta$'s & \vspace{0.04cm} NO & Example \ref{Ex_DecompLostUnderProgression}\\
\hline

\vspace{0.3cm} $\cal{BAT}$ is local--effect, $\D$ is fluent-free,
and components of $\DD_{S_0}$ are aligned with components of
$\DD_{ss}$
 & \vspace{0.1cm} YES,\newline modulo common symbols of the components of $\DD_{ss}$ and constants in term $\alpha$ \vspace{0.2cm} &
Theorem \ref{Teo_PreserveCompLocalEffect}\\ \hline

\vspace{0.2cm} if additionally the constants in term $\alpha$ are contained in a  single $\Delta$-decomposition component of $\DD_{S_0}$ \vspace{-0.2cm} &  \vspace{0.2cm} YES & Corollary \ref{Cor_StrongPreservationOfComponents} \\
 &  &  \\   
 &  &  \\   
 \end{tabular}\\
\hline
\end{tabular}


\end{center}

\normalsize

The examples and Lemmas given in the paper demonstrate that the sufficient conditions for invariance of decomposability and inseparability wrt progression in local-effect action theories cannot be relaxed. 
Our research has required new understanding of progression and the related notion 
of forgetting wrt modularity of theories. The new results about forgetting are general and may find applicability outside of reasoning about actions. Given a decomposition
of the initial theory into inseparable components, the rest of the conditions in 
Theorem \ref{Teo_PreserveCompLocalEffect} and 
Corollary \ref{Cor_StrongPreservationOfComponents} are purely syntactical and therefore are easy to check. The important practical observation behind 
these results is that in order to compute the progression of an initial theory wrt 
an action having effects only on fluents from one decomposition component, 
it suffices to compute forgetting only in this component. As illustrated by the running example, non-interacting dynamic systems may share only some common names or static entities, such as location. The fact that the dynamic systems share no fluents can be obscured by the way they are presented, whereas decomposition would make it explicit. We believe that our positive results are applicable to a large and general class of basic action theories. 
The significant contribution of the paper is in exploring the important connections between research on modularity and reasoning about action. The paper starts bridging the gap between these two different research communities in knowledge representation.

  There are several directions where future work may proceed. In this paper,
we concentrate on local-effect action theories only. However, recently 
\cite{DeGiacomoLesperancePatriziKR12} defined a new broad class of action theories 
called \textit{bounded} situation calculus action theories, in which actions may
have non-local, but bounded effects. Moreover, for these action theories, one
can find cases when progression is effectively computable \cite{VassosPatriziIJCAI13}.
Therefore, it is natural to explore when decomposability and inseparability
remain invariant wrt progression in bounded action theories. Additionally, we noted
that there is a realistic case of initial theories, for which the size of a progressed 
theory with local effects does not grow exponentially. The initial theories of
this kind are known as \textit{proper$^+$} theories \cite{LakemeyerLevesque02,ProgressionLocalEffect}. Therefore, it is worth while to develop
computationally tractable techniques for decomposition of \textit{proper$^+$} theories. \bigskip

\section{Acknowledgements}
The first author was supported by the German Research Foundation within the Transregional Collaborative Research Centre SFB/ TRR 62 ``Companion-Technology for Cognitive Technical
Systems'' and by Russian Ministry of Science and Education under the 5-100 Excellence Programme, and Russian Foundation for Basic Research, Project No. 15-07-03410A.

The authors would like to thank the Natural Sciences and Engineering Research Council of Canada and the Dept. of Computer Science of the Ryerson University for providing partial financial support.

\bibliographystyle{plain}
\bibliography{decprog}

\end{document}